\newtheorem{fact}{Fact}[section]
\newtheorem{assumption}{Assumption}[section]
\newtheorem{question}{Question}[section]
\newcommand{\mat}{}
\newcommand{\vct}{}
\newcommand{\ud}{\mathrm d}
\newcommand{\nml}{\mathcal{N}}
\newcommand{\argmin}{\mathrm{argmin}}
\newcommand{\argmax}{\mathrm{argmax}}
\newcommand{\diag}{\mathrm{diag}}
\newcommand{\ols}{\mathrm{ols}}
\newcommand{\tr}{\mathrm{tr}}
\newcommand{\unif}{\mathrm{unif}}
\newcommand{\Kernel}{\mathrm{ker}}
\newcommand{\vol}{\mathrm{vol}}
\newcommand{\Range}{\mathrm{Range}}
\newcommand{\mP}{\mathbb P}
\renewcommand{\hat}{\widehat}
\renewcommand{\tilde}{\widetilde}
\begin{document}

\title{On Computationally Tractable Selection of Experiments in Measurement-Constrained Regression Models}

\author{\name Yining Wang \email yiningwa@cs.cmu.edu \\
       \name Adams Wei Yu \email weiyu@cs.cmu.edu \\
       \name Aarti Singh \email aartisingh@cmu.edu\\
       \addr Machine Learning Department, School of Computer Science\\
       Carnegie Mellon University\\
       Pittsburgh, PA 15213, USA}

\editor{Michael Mahoney}

\maketitle

\begin{abstract}
We derive computationally tractable methods to select a small subset of experiment settings from a large pool of given design points.
The primary focus is on linear regression models, while the technique extends to generalized linear models and Delta's method (estimating functions of linear regression models)
as well.
The algorithms are based on a continuous relaxation of an otherwise intractable combinatorial optimization problem,
with sampling or greedy procedures as post-processing steps.
Formal approximation guarantees are established for both algorithms, 
and numerical results on both synthetic and real-world data confirm the effectiveness of the proposed methods.
\end{abstract}

\begin{keywords}
  optimal selection of experiments, A-optimality, computationally tractable methods, minimax analysis
\end{keywords}

\section{Introduction}
{Despite the availability of large datasets, in many applications, collecting 
labels for all data points is not possible due to measurement constraints. We consider the 
problem of measurement-constrained regression where we are given a large pool of $n$ 
data points but can only observe a small set of $k\ll n$ labels. 
Classical {\em experimental design} approaches in statistical literature \cite{optimal-design-book}
have investigated this problem, 
but the proposed solutions tend to be often combinatorial. 
In this work, we investigate computationally tractable methods for selecting data points to 
label from a given pool of design points in measurement-constrained regression.}

Despite the simplicity and wide applicability of OLS, 
in practice it may not be possible to obtain the full $n$-dimensional response vector $\vct y$
due to measurement constraints.
It is then a common practice to select a small subset of rows (e.g., $k\ll n$ rows) in $\mat X$ so that the statistical efficiency of regression on the selected subset of design points is maximized.
Compared to the classical \emph{experimental design} problem \citep{optimal-design-book} where $X$ can be freely designed,
in this work we consider the setting where the selected design points must come from an existing (finite) design pool $\mat X$.

Below we list three example applications where such measurement constraints are relevant:
\paragraph{Example 1 (Material synthesis)}
Material synthesis experiments are time-consuming and expensive, whose results are sensitive to experimental setting features such as temperature, duration and reactant ratio.
Given hundreds, or even thousands of possible experimental settings, it is important to select a handful of representative settings
such that a model can be built with maximized statistical efficiency to predict quality of the outcome material from experimental features.
In this paper we consider such an application of low-temperature microwave-assisted thin film crystallization \citep{reeja2012microwave,nakamura2017design}
 and demonstrate the effectiveness of our proposed algorithms.

\paragraph{Example 2 (CPU benchmarking)}
Central processing units (CPU) are vital to the performance of a computer system.
It is an interesting statistical question to understand how known manufacturing parameters (clock period, cache size, etc.) of a CPU
affect its execution time (performance) on benchmark computing tasks \citep{cpu-relative-performance}.
As the evaluation of real benchmark execution time is time-consuming and costly,
it is desirable to select a subset of CPUs available in the market with diverse range of manufacturing parameters
so that the statistical efficiency is maximized by benchmarking for the selected CPUs.

\paragraph{Example 3 (Wind speed prediction)} 
In \cite{minnesota-wind-speed} a data set is created to record wind speed across a year at established measurement locations on highways in Minnesota, US.
Due to instrumentation constraints, wind speed can only be measured at intersections of high ways,
and a small subset of such intersections is selected for wind speed measurement in order to reduce data gathering costs.

\vspace{-0.1in}
\paragraph{}

In this work, we primarily focus on the linear regression model (though extensions to generalized 
linear models and functions of linear models are also considered later)
$$
\vct y = \mat X\vct\beta_0+\vct\varepsilon,
$$
where $\mat X\in\mathbb R^{n\times p}$ is the design matrix,
$\vct y\in\mathbb R^n$ is the response and $\vct\varepsilon\sim\nml_n(\vct 0,\sigma^2\mat I_n)$ are homoscedastic Gaussian noise with variance $\sigma^2$.
$\vct\beta_0$ is a $p$-dimensional regression model that one wishes to estimate.
We consider the ``large-scale, low-dimensional'' setting where both $n,p\to\infty$ and $p<n$, and $\mat X$ has full column rank.
A common estimator is the \emph{Ordinary Least Squares} (OLS) estimator:
$$
\hat{\vct\beta}^\ols = \argmin_{\vct\beta\in\mathbb R^p} \|\vct y-\mat X\vct\beta\|_2^2 = (\mat X^\top\mat X)^{-1}\mat X^\top\vct y.
$$
The mean square error of the estimated regression coefficients $\mathbb{E}\|\hat{\vct\beta}^\ols - \vct\beta_0\|^2_2 = \sigma^2 \tr\left[\left(\mat X^\top\mat X\right)^{-1}\right]$.
Under measurement constraints, it is well-known that the statistically optimal subset $S^*$ 
for estimating the regression coefficients 
is given by the \emph{A-optimality} criterion \citep{optimal-design-book}:
\begin{equation}
S^* = \argmin_{S\subseteq[n], |S|\leq k}\tr\left[\left(\mat X_S^\top\mat X_S\right)^{-1}\right].
\label{eq_css}
\end{equation}
Despite the statistical optimality of Eq.~(\ref{eq_css}), the optimization problem is combinatorial in nature and the optimal subset $S^*$ is difficult to compute.
A brute-force search over all possible subsets of size $k$ requires $O(n^kk^3)$ operations, which is infeasible for even moderately sized designs $\mat X$.

In this work, we focus on \textbf{computationally tractable} methods for experiment selection that achieve near-optimal statistical efficiency in linear regression models.
We consider two experiment selection models: 
the \emph{with replacement} model where each design point (row of $\mat X$) can be selected more than once with independent noise involved for each selection,
and the \emph{without replacement} model where distinct row subsets are required.
We propose two computationally tractable algorithms: one sampling based algorithm that achieves $O(1)$ approximation 
of the statistically optimal solution for the with replacement model and, when $\Sigma^*$ is well-conditioned, the algorithm also works for the without replacement model.
In the ``soft budget'' setting $|S|=O_\mP(k)$, the approximation ratio can be further improved to $1+O(\epsilon)$.
We also propose a greedy method that achieves $O(p^2/k)$ approximation for the without replacement model regardless of the conditioning of the design pool $X$.

\section{Problem formulation and backgrounds}\label{sec:formulation}

We first give a formal definition of the experiment selection problem in linear regression models:
\begin{definition}[experiment selection problem in linear regression models]
Let $\mat X$ be a known $n\times p$ design matrix with full column rank and $k$ be the subset budget, with $p\leq k\leq n$.
An experiment selection problem aims to find a subset $S\subseteq[n]$ of size $k$, either deterministically or randomly, then observes $y_S=X_S\beta_0+\tilde\varepsilon$,
where each coordinate of $\tilde\varepsilon$ is i.i.d.~Gaussian random variable with zero mean and equal covariance,
and then generates an estimate $\widehat \beta$ of the regression coefficients based on $(X_S,y_S)$.
Two types of experiment selection algorithms are considered:
\begin{enumerate}
\item \emph{With replacement}: $S$ is a multi-set which allows duplicates of indices.
Note that fresh (independent) noise is imposed after, and independent of, experiment selection,
and hence duplicate design points have independent noise.
We use $\mathcal A_1(k)$ to denote the class of all with replacement experiment selection algorithms.

\item \emph{Without replacement}: $S$ is a standard set that may not have duplicate indices.
{We use $\mathcal A_2(k)$ to denote the class of all without replacement experiment selection algorithms.}
\end{enumerate}
\label{defn:subsampling}
\end{definition}

\begin{table}[t]
\caption{\small Summary of approximation results. $\Sigma_*=X^\top\diag(\pi^*)X$, where $\pi^*$ is the optimal solution of Eq.~(\ref{eq_c_opt}). Sampling based algorithms can be with or without replacement and are randomized, which succeed with at least 0.8 probability.}
\label{tab:summary}
\vspace{0.1in}
\scalebox{0.75}{
\centering
\begin{tabular}{c|ccccc}
\hline
Algorithm& Model& Constraint&  $C(n,p,k)$&  Assumptions \\
\hline
\cite{levscore-regression}& with rep.& additive\protect\footnotemark[1]& -\protect\footnotemark[2]& asymptotic\\
 \cite{faster-css}& with rep.& additive& $O(n/k)$& $k=\Omega(p)$\\
sampling&  with rep.& $|S|=O_\mP(k)$& $1+O(\epsilon)$& $p\log k/k=O(\epsilon^2)$\\
sampling& without rep.& $|S|=O_\mP(k)$& $1+O(\epsilon)$& $\|\Sigma_*^{-1}\|_2\kappa(\Sigma_*)\|X\|_{\infty}\log p=O(\epsilon^2)$\\
sampling&  with rep.& $|S|\leq k$& $O(1)$& $p\log k/k=O(1)$\\
sampling& without rep.& $|S|\leq k$& $O(1)$& $\|\Sigma_*^{-1}\|_2\kappa(\Sigma_*)\|X\|_{\infty}\log p=O(1)$\\
greedy&  without rep.& $|S|\leq k$&\begin{tabular}{@{}c@{}}\textbf{Rigorous}: $1+\frac{p(p+1)}{2(k-p+1)}$\\ \textbf{Conjecture}: $1+O(\frac{p}{k-p})$\end{tabular}&
\shortstack{$k>p$, or $k=\Omega(p^2/\epsilon)$\\ if $(1+\epsilon)$-approximation desired}\\
\hline
\end{tabular}
}
\label{tab:results}
\end{table}

\footnotetext[1]{``Additive'' means that the statistical error of the resulting estimator cannot be bounded by a multiplicative factor of the minimax optimal error.}
\footnotetext[2]{The leverage score sampling method in \cite{levscore-regression} does not have rigorous approximation guarantees in terms of $\|\hat\beta-\beta_0\|_2^2$
or $\|X\hat\beta-X\beta_0\|_2^2$.
However, the bounds in that paper establish that leverage score sampling can be worse or better than uniform sampling under different settings. } 

As evaluation criterion, we consider the mean square error $\mathbb E\|\hat\beta-\beta_0\|_2^2$, 
where $\hat\beta$ is an estimator of $\beta_0$ with $X_S$ and $y_S$ as inputs.
We study \emph{computationally tractable} algorithms that approximately achieves the \emph{minimax} rate of convergence 
over $\mathcal A_1(k)$ or $\mathcal A_2(k)$:
\begin{equation}
 \inf_{A'\in\mathcal A_b(k)}\sup_{\beta_0\in\mathbb R^p}\mathbb E\left[\|\hat\beta_{A'}-\beta_0\|_2^2\right].
\label{eq:minimax}
\end{equation}
Formally, we give the following definition:
\begin{definition}[$C(n,p,k)$-approximate algorithm]
Fix $n\geq k\geq p$ and $b\in\{1,2\}$.
We say an algorithm (either deterministic or randomized) $A\in\mathcal A_b(k)$ is a $C(n,p,k)$-approximate algorithm if
for any $X\in\mathbb R^{n\times p}$ with full column rank,
$A$ produces a subset $S\subseteq[n]$ with size $k$ and an estimate $\hat\beta_A$ in polynomial time such that, with probability at least 0.8, 
\begin{equation}
\sup_{\beta_0\in\mathbb R^p}\mathbb E\left[\|\hat\beta_A-\beta_0\|_2^2\Big|X_S\right] \leq C(n,p,k)\cdot \inf_{A'\in\mathcal A_b(k)}\sup_{\beta_0\in\mathbb R^p}\mathbb E\left[\|\hat\beta_{A'}-\beta_0\|_2^2\right].
\end{equation}
Here both expectations are taken over the randomness in the noise variables $\tilde\varepsilon$ and the inherent randomness in $A$.
\label{defn:approxalg}
\end{definition}
Table \ref{tab:results} gives an overview of approximation ratio $C(n,p,k)$ for algorithms proposed in this paper.
{We remark that the combinatorial A-optimality solution of Eq.~(\ref{eq_css}) 
upper bounds the minimax risk (since minimaxity is defined over deterministic algorithms 
as well), hence the approximation guarantees also hold with respect to the combinatorial A-optimality objective.}

\subsection{Related work}

There has been an increasing amount of work on fast solvers for the general least-square problem $\min_{\beta}\|y-X\beta\|_2^2$.
Most of existing work along this direction \citep{sketching-book,subsample-ols,faster-ols,raskutti2015statistical}
focuses solely on the computational aspects and do not consider statistical constraints such as limited measurements of $y$.
A convex optimization formulation was proposed in \cite{constrained-adaptive-sensing} for a constrained adaptive sensing problem, which is a special case of our setting, but
without finite sample guarantees with respect to the combinatorial problem.
In \cite{horel2014budget} a computationally tractable approximation algorithm was proposed for the D-optimality criterion of the experimental design problem.
However, the core idea in \cite{horel2014budget} of pipage rounding an SDP solution \citep{ageev2004pipage} is not applicable 
in our problem because the objective function we consider in Eq.~(\ref{eq_c_opt}) is not submodular.

Popular subsampling techniques such as leverage score sampling \citep{cur_relative} were studied in least square and linear regression problems \citep{optimal-subsampling,levscore-regression,minnesota-wind-speed}.
While computation remains the primary subject, measurement constraints and statistical properties were also analyzed within the linear regression model \citep{optimal-subsampling}.
However, none of the above-mentioned (computationally efficient) methods achieve near minimax optimal statistical efficiency in terms of estimating the underlying linear model $\beta_0$,
since the methods can be worse than uniform sampling which has a fairly large approximation constant for general $X$. 
{One exception is \cite{faster-css}, which proposed a greedy algorithm that achieves an error bound of estimation of $\beta_0$ that is within a multiplicative factor
of the minimax optimal statistical efficiency. The multiplicative factor is however large and depends on the size of the full sample pool $X$, as we remark in Table \ref{tab:results} and also
in Sec.~\ref{subsec:greedy}.}

Another related area is \emph{active learning} \citep{active-mle,hard-margin-active,stratification},
 which is a stronger setting where feedback from prior measurements can be used to guide subsequent data selection.
 \cite{active-mle} analyzes an SDP relaxation in the context of active maximum likelihood estimation.
However, the analysis in \cite{active-mle} only works for the with replacement model
and the two-stage feedback-driven strategy proposed in \cite{active-mle} is not available under the experiment selection model defined in Definition \ref{defn:subsampling} where no feedback is assumed.

\subsection{Notations}

For a matrix $A\in\mathbb R^{n\times m}$, we use $\|A\|_p=\sup_{x\neq 0}\frac{\|Ax\|_p}{\|x\|_p}$ to denote the induced $p$-norm of $A$.
In particular, $\|A\|_1=\max_{1\leq j\leq m}\sum_{i=1}^n{|A_{ij}|}$ and $\|A\|_{\infty} = \max_{1\leq i\leq n}\sum_{j=1}^m{|A_{ij}|}$.
$\|A\|_F=\sqrt{\sum_{i,j}{A_{ij}^2}}$ denotes the Frobenius norm of $A$.
Let $\sigma_1(A)\geq\sigma_2(A)\geq\cdots\geq\sigma_{\min(n,m)}(A)\geq 0$ be the singular values of $A$, sorted in descending order.
The condition number $\kappa_2(A)$ is defined as $\kappa(A)=\sigma_1(A)/\sigma_{\min(n,m)}(A)$.
For sequences of random variables $X_n$ and $Y_n$, we use $X_n\overset{p}{\to} Y_n$ to denote $X_n$ converges in probability to $Y_n$.
We say $a_n\lesssim b_n$ if $\lim_{n\to\infty}|a_n/b_n|\leq 1$ and $a_n\gtrsim b_n$ if $\lim_{n\to\infty}|b_n/a_n|\leq 1$.
For two $d$-dimensional symmetric matrices $A$ and $B$, we write $A\preceq B$ if $u^\top (A-B)u\leq 0$ for all $u\in\mathbb R^d$,
and $A\succeq B$ if $u^\top(A-B)u \geq 0$ for all $u\in\mathbb R^d$.

\section{Methods and main results}

We describe two computationally feasible algorithms for the experiment selection problem,
both based on a continuous convex optimization problem.
Statistical efficiency bounds are presented for both algorithms, with detailed proofs given in Sec.~\ref{sec:proof}.

\subsection{Continuous optimization and minimax lower bounds}\label{subsec:convex}
We consider the following continuous optimization problem, which is a convex relaxation of the combinatorial A-optimality criterion of Eq.~(\ref{eq_css}): 
\begin{align}
\pi^* &= \argmin_{\pi\in\mathbb R^n}f(\pi;X) = \argmin_{\pi\in\mathbb R^p}\tr\left[\left(X^\top\diag(\pi) X\right)^{-1}\right],\label{eq_c_opt}\\
&s.t. \;\;\;\pi\geq 0, \;\;\|\pi\|_1\leq k,\nonumber\\
& \;\;\;\;\;\; \|\pi\|_{\infty} \leq 1 \;\;\;\text{(only for the without replacement model).}\nonumber
\end{align}
Note that the $\|\pi\|_{\infty}\leq 1$ constraint is only relevant for the without replacement model
and for the with replacement model we drop this constraint in the optimization problem.
It is easy to verify that both the objective $f(\pi;X)$ and the feasible set in Eq.~(\ref{eq_c_opt}) are convex,
and hence the global optimal solution $\pi^*$ of Eq.~(\ref{eq_c_opt}) can be obtained using computationally tractable algorithms.
In particular, we describe an SDP formulation and a practical projected gradient descent algorithm in Appendix \ref{appsec:optimization} and \ref{suppsec:projection},
both provably converge to the global optimal solution of Eq.~(\ref{eq_c_opt}) with running time scaling polynomially in $n,p$ and $k$.

We first present two facts, which are proved in Sec.~\ref{sec:proof}.
\begin{fact}
Let $\pi$ and $\pi'$ be feasible solutions of Eq.~(\ref{eq_c_opt}) such that $\pi_i\leq \pi_i'$ for all $i=1,\cdots,n$.
Then $f(\pi;X)\geq f(\pi';X)$, with equality if and only if $\pi=\pi'$.
\label{fact:2}
\end{fact}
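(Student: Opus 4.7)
The plan is to recognize this as a standard consequence of operator monotonicity of the matrix inverse. Set $M(\pi):=X^\top\diag(\pi)X$, so that $f(\pi;X)=\tr(M(\pi)^{-1})$. Since $\pi$ and $\pi'$ are feasible solutions of Eq.~(\ref{eq_c_opt}) and the objective is finite on the feasible set, $M(\pi)$ and $M(\pi')$ are both positive definite (otherwise $\tr M(\pi)^{-1}=+\infty$ and $\pi$ would not actually be a feasible point with finite objective).

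Next I would convert the pointwise inequality $\pi_i\leq\pi_i'$ into a PSD inequality on $M$. Because $\pi'-\pi\geq 0$ entrywise, $\diag(\pi'-\pi)\succeq 0$, and hence
$$M(\pi')-M(\pi)\;=\;X^\top\diag(\pi'-\pi)X\;\succeq\;0.$$
So $M(\pi')\succeq M(\pi)\succ 0$. By the L\"owner--Heinz theorem (equivalently, the fact that $A\mapsto -A^{-1}$ is operator monotone on the positive definite cone), $M(\pi)^{-1}\succeq M(\pi')^{-1}$, and taking traces yields $f(\pi;X)\geq f(\pi';X)$.

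For the equality case, suppose $\pi\neq\pi'$ and pick any index $j$ with $t:=\pi'_j-\pi_j>0$. Consider incrementing only the $j$-th coordinate. Writing $x_j^\top$ for the $j$-th row of $X$, Sherman--Morrison gives
$$\tr(M(\pi)^{-1})-\tr\!\bigl((M(\pi)+t\,x_jx_j^\top)^{-1}\bigr)\;=\;\frac{t\,x_j^\top M(\pi)^{-2}x_j}{1+t\,x_j^\top M(\pi)^{-1}x_j},$$
which is strictly positive whenever $x_j\neq 0$ (the numerator equals $t\|M(\pi)^{-1}x_j\|_2^2$, and $M(\pi)^{-1}\succ 0$). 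Without loss of generality no row of $X$ is zero, since any such row contributes nothing to $M(\pi')$ for any $\pi'$ and may be deleted from the pool. Iterating this single-coordinate argument over the finitely many indices where $\pi$ and $\pi'$ disagree gives $f(\pi;X)>f(\pi';X)$ strictly.

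The main obstacle is the strict direction: the operator-monotonicity step alone is insensitive to the magnitude of the perturbation and only yields the weak inequality. The Sherman--Morrison calculation above closes this gap cleanly as long as the relevant row of $X$ is nonzero, which is a harmless preprocessing assumption in this setting. Beyond that, everything reduces to well-known properties of the Loewner order.
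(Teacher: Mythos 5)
Your proof is correct, and the first half is the same idea as the paper's: write $M(\pi')=M(\pi)+X^\top\diag(\pi'-\pi)X$ with a PSD increment and conclude $\tr(M(\pi')^{-1})\leq\tr(M(\pi)^{-1})$ — the paper does this via Weyl eigenvalue monotonicity and summing inverse eigenvalues, while you invoke operator monotonicity of the inverse and take traces; these are interchangeable. Where you genuinely diverge is the equality case: the paper asserts tersely that $\Delta\neq 0$ forces a strict change in some eigenvalue (as written it even has the inequality pointing the wrong way, and justifying it properly needs a small trace argument, e.g.\ $\tr(B)>\tr(A)$), whereas your Sherman--Morrison computation gives an explicit, quantitative strict decrease
$\tr(M^{-1})-\tr\bigl((M+t\,x_jx_j^\top)^{-1}\bigr)=t\,x_j^\top M^{-2}x_j/(1+t\,x_j^\top M^{-1}x_j)>0$
per coordinate, which is cleaner and more self-contained. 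You are also right to flag the zero-row caveat explicitly: if a row $x_j$ vanishes at a coordinate where $\pi_j<\pi'_j$, then $f(\pi;X)=f(\pi';X)$ with $\pi\neq\pi'$, so the ``only if'' direction really does need the (harmless) assumption that the relevant rows are nonzero — the paper's proof silently makes the same assumption when it claims $\Delta\neq 0$. The one point to tighten in your write-up is the opening claim that feasibility forces $M(\pi)\succ 0$: feasibility of Eq.~(\ref{eq_c_opt}) only requires $\pi\geq 0$, $\|\pi\|_1\leq k$ (e.g.\ $\pi=0$ is feasible), so invertibility is an implicit convention (interpret $f=+\infty$ on singular $M$), shared with the paper rather than a consequence of feasibility.
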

\begin{fact}
$\|\pi^*\|_1=k$.
\label{fact:3}
\end{fact}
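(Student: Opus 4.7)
The plan is to derive Fact 3 directly from the strict monotonicity established in Fact 2 via a short contradiction argument. Suppose for contradiction that the optimal solution $\pi^*$ satisfies $\|\pi^*\|_1 < k$. I will construct a feasible $\pi'$ that coordinate-wise dominates $\pi^*$ and differs from it at some index; Fact 2 then yields $f(\pi';X) < f(\pi^*;X)$, contradicting the optimality of $\pi^*$.

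In the with-replacement setting the constraints on $\pi$ are only $\pi \geq 0$ and $\|\pi\|_1 \leq k$. Fix any index $i \in [n]$ and set $\epsilon = k - \|\pi^*\|_1 > 0$. Define $\pi'_j = \pi^*_j$ for $j \neq i$ and $\pi'_i = \pi^*_i + \epsilon$. Then $\pi' \geq 0$, $\|\pi'\|_1 = k$, so $\pi'$ is feasible. Since $\pi' \geq \pi^*$ componentwise and $\pi' \neq \pi^*$, Fact 2 gives $f(\pi';X) < f(\pi^*;X)$, the desired contradiction.

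In the without-replacement setting we must additionally respect $\|\pi\|_\infty \leq 1$, so the perturbation $\epsilon$ must also satisfy $\pi^*_i + \epsilon \leq 1$. The key observation is that $\|\pi^*\|_1 < k \leq n$ forces the existence of some index $i$ with $\pi^*_i < 1$; otherwise $\sum_j \pi^*_j \geq n \geq k$. Choosing $\epsilon = \min\{1 - \pi^*_i,\, k - \|\pi^*\|_1\} > 0$ and updating the $i$-th coordinate as above produces a $\pi'$ that satisfies all three constraints while strictly dominating $\pi^*$, and Fact 2 again delivers the contradiction.

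The only conceptual work — if any — lies upstream in verifying Fact 2 (that $f(\,\cdot\,;X)$ is strictly decreasing in each entry of $\pi$ on the feasible region where $X^\top \diag(\pi) X$ is invertible). Conditional on Fact 2, Fact 3 reduces to the elementary observation that a strictly coordinatewise-decreasing objective on a convex body with a budget constraint must attain its minimum on the face where the budget is saturated; I do not anticipate further technical obstacles in writing this out.
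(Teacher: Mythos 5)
Your proof is correct and follows essentially the same route as the paper: assume $\|\pi^*\|_1<k$, increase one coordinate (capped at $1$ in the without-replacement case, which is possible since $k\leq n$ guarantees some $\pi_i^*<1$) to obtain a feasible $\pi'$ dominating $\pi^*$, and invoke the strict monotonicity of Fact~\ref{fact:2} to contradict optimality. No gaps; your explicit treatment of the $\|\pi\|_\infty\leq 1$ constraint is just a slightly more detailed write-up of the paper's one-line construction $\pi_j'=\min\{1,\pi_j^*+k-\|\pi^*\|_1\}$.
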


We remark that the inverse monotonicity of $f$ in $\pi$ implies second fact, which can potentially be used to understand sparsity of $\pi^*$ in later sections.

The following theorem shows that the optimal solution of Eq.~(\ref{eq_c_opt}) lower bounds the minimax risk defined in Eq.~(\ref{eq:minimax}).
Its proof is placed in Sec.~\ref{sec:proof}.
\begin{theorem}
Let $f_1^*(k;X)$ and $f_2^*(k;X)$ be the optimal objective values of Eq.~(\ref{eq_c_opt}) for with replacement and without replacement, respectively.
Then for $b\in\{1,2\}$, 
\begin{equation}
\inf_{A\in\mathcal A_b(k)}\sup_{\beta_0\in\mathbb R^p}\mathbb E\left[\|\hat\beta_A-\beta_0\|_2^2\right] \geq \sigma^2\cdot f_b^*(k;X).
\label{eq:minimax_lowrbound}
\end{equation}
\label{thm:minimax}
\end{theorem}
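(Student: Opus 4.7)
The plan is to first reduce, for any randomized algorithm $A\in\mathcal A_b(k)$, to a classical fixed-design Gaussian minimax problem by conditioning on the selected subset $S$, and then to link the resulting quantity $\mathbb E_S\,\tr[(X_S^\top X_S)^{-1}]$ to the continuous relaxation (\ref{eq_c_opt}) via Jensen's inequality and an inclusion-probability / expected-multiplicity matching argument.

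For the per-subset lower bound I would use the Bayes method with a spherical Gaussian prior $\beta_0\sim\nml(0,\lambda I_p)$. Since the algorithm's selection of $S$ depends only on $X$ and its internal coin flips (not on $\beta_0$ or $\tilde\varepsilon$), a conjugate-posterior computation conditional on $S$ gives a Bayes risk of $\tr[(\sigma^{-2}X_S^\top X_S+\lambda^{-1}I)^{-1}]$ for any estimator. Using $\sup\mathbb E\geq \mathbb E_{\mu_\lambda}\mathbb E$ and Fubini,
\begin{equation*}
\sup_{\beta_0\in\R^p}\mathbb E\|\hat\beta_A-\beta_0\|_2^2 \;\geq\; \mathbb E_S\,\tr\bigl[(\sigma^{-2}X_S^\top X_S+\lambda^{-1}I)^{-1}\bigr].
\end{equation*}
The integrand is monotone increasing in $\lambda$, so letting $\lambda\to\infty$ and invoking monotone convergence yields
$\sup_{\beta_0}\mathbb E\|\hat\beta_A-\beta_0\|_2^2\geq\sigma^2\,\mathbb E_S\,\tr[(X_S^\top X_S)^{-1}]$, with the right-hand side interpreted as $+\infty$ (and the bound trivially valid) whenever $X_S$ is rank-deficient on an event of positive probability.

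The map $M\mapsto\tr(M^{-1})$ is convex on the positive-definite cone (operator convexity of $t\mapsto t^{-1}$ plus taking traces), so Jensen gives $\mathbb E_S\,\tr[(X_S^\top X_S)^{-1}]\geq\tr[(\mathbb E_S X_S^\top X_S)^{-1}]$. Define $\pi_i:=\mathbb E[m_i(S)]$, where $m_i(S)$ is the multiplicity of index $i$ in $S$ (an indicator in the without-replacement case). Then $\mathbb E_S X_S^\top X_S=\sum_i\pi_i x_ix_i^\top=X^\top\diag(\pi)X$, and the marginals satisfy $\pi\geq 0$, $\|\pi\|_1=\mathbb E|S|\leq k$, with $\|\pi\|_\infty\leq 1$ in the without-replacement case. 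Thus $\pi$ is feasible in (\ref{eq_c_opt}) for the appropriate $b$, so $\tr[(X^\top\diag(\pi)X)^{-1}]=f(\pi;X)\geq f_b^*(k;X)$; chaining all inequalities and taking $\inf_A$ on the left concludes.

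The most delicate piece is the per-subset Gaussian lower bound combined with the interchange of the $\lambda\to\infty$ limit and the expectation over the random selection $S$; once that reduction is cleanly set up, the remainder is a routine Jensen step plus the observation that expected multiplicities (or inclusion probabilities) of any valid random selection rule automatically form a feasible point of the convex relaxation.
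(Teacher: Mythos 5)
Your proof is correct, but it takes a genuinely different route from the paper's. The paper reduces any (possibly randomized) selection algorithm to a \emph{deterministic} fractionally-weighted design: it constructs an aggregate algorithm $\widetilde A$ whose design contains every possible output subset of $A$ reweighted by its selection probability, applies Jensen's inequality to the averaged estimator $\mathbb E_{\widetilde X}[\hat\beta_A]$ to show the risk does not increase, and then invokes the classical minimaxity of OLS under a fixed (weighted) Gaussian design to identify the resulting minimax risk with $\sigma^2\tr[(\sum_i \tilde w_i x_i x_i^\top)^{-1}]$, whose infimum over $\sum_i w_i\le k$ is exactly $\sigma^2 f_b^*(k;X)$. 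You instead condition on the realized subset $S$, prove the fixed-design lower bound from scratch via a conjugate Gaussian prior $\nml(0,\lambda I_p)$ and the limit $\lambda\to\infty$, and then handle the selection randomness by convexity of $M\mapsto\tr(M^{-1})$ (Jensen on the positive-definite cone) together with the observation that the expected multiplicities $\pi_i=\mathbb E[m_i(S)]$ form a feasible point of Eq.~(\ref{eq_c_opt}), including $\|\pi\|_\infty\le 1$ in the without-replacement case. What each buys: the paper's argument makes the equivalence between randomized selection and continuous weights explicit (and implicitly yields that deterministic weighted designs attain $\sigma^2 f_1^*$ exactly), but it leans on the external fact that OLS is minimax for fixed designs---a fact usually proved by precisely the limiting-Bayes argument you carry out---and its aggregation construction of $\widetilde A$ is somewhat informal; your argument is more self-contained, treats $b=1$ and $b=2$ uniformly through the feasibility of the marginal weights, and gracefully covers the case where $X_S$ is rank-deficient with positive probability (both sides become infinite). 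The one point to state carefully, which you do, is that $S$ is independent of $\beta_0$ and of the noise (no feedback in Definition~\ref{defn:subsampling}), so the per-subset Bayes bound and the Fubini/monotone-convergence interchange are legitimate.
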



{
Despite the fact that Eq.~(\ref{eq_c_opt}) is computationally tractable, its solution $\pi^*$ is \emph{not} a valid experiment selection algorithm under a measurement budget of $k$ because
there can be much more than $k$ components in $\pi^*$ that are not zero.
In the following sections, we discuss strategies for choosing a subset $\hat S$ of rows with $|\hat S|=O_\mP(k)$ (i.e., soft constraint) or $|\hat S|\leq k$ (i.e., hard constraint), using the solution $\pi^*$ to the above.
}



\subsection{Sampling based experiment selection: soft size constraint}\label{subsec:sampling_expected}

We first consider a weaker setting where \emph{soft} constraint is imposed on the size of the selected subset $\hat S$;
in particular, it is allowed that $|\hat S|=O(k)$ with high probability, meaning that a constant fraction of over-selection is allowed.
The more restrictive setting of hard constraint is treated in the next section.

A natural idea of obtaining a valid subset $S$ of size $k$ is by \emph{sampling} from a weighted row distribution specified by $\pi^*$.
Let $\Sigma_*=X^\top\diag(\pi^*)X$ and
define distributions $p_j^{(1)}$ and $p_j^{(2)}$ for $j=1,\cdots,n$ as
\begin{align*}
&P^{(1)}:\;\; p_j^{(1)} = \pi_j^*x_j^\top\Sigma_*^{-1} x_j/p, &\text{with replacement};\\
&P^{(2)}:\;\; p_j^{(2)} = \pi_j^*/k,&\text{without replacement}.
\end{align*}
Note that both $\{p_j^{(1)}\}_{j=1}^n$ and $\{p_j^{(2)}\}_{j=1}^n$ sum to one because
$\sum_{j=1}^n{\pi_j^*}=k$ and $\sum_{j=1}^n{\pi_j^*x_j^\top\Sigma_*^{-1}x_j} = \tr((\sum_{j=1}^n{\pi_j^*x_jx_j^\top})\Sigma_*^{-1})
= \tr(\Sigma_*\Sigma_*^{-1}) = p$.

\begin{algorithm}[h]
\SetAlgorithmName{Figure}{}
\SetAlgoLined
\DontPrintSemicolon
\SetKwInOut{Input}{input}\SetKwInOut{Output}{output}
\Input{$X\in\mathbb R^{n\times p}$, optimal solution $\pi^*$, target subset size $k$.}
\Output{$\hat S\subseteq[n]$, a selected subset of size at most $O_\mP(k)$.}
Initialization: $t=0$, $S_0=\emptyset$.\;
\underline{\emph{With replacement}}: for $t=1,\cdots,k$ do:\;
\quad - sample $i_t\sim P^{(1)}$ and set $w_t=\lceil\pi_{i_t}^*/(kp_{i_t}^{(1)})\rceil$;\;
\quad - update: $S_{t+1}=S_t\cup\{\text{$w_t$ repetitions of $x_{i_t}$}\}$.\;
\underline{\emph{Without replacement}}: for $i=1,\cdots,n$ do:\;
\quad - sample $w_i\sim \mathrm{Bernoulli}(kp_j^{(2)})$;\;
\quad - update: $S_{i+1} = S_i\cup\{\text{$w_i$ repetitions of $x_i$}\}$.\;
Finally, output $\hat S=S_k$ for with replacement and $\hat S=S_n$ for without replacement.
\caption{Sampling based experiment selection (expected size constraint).}
\label{alg:subsampling_expected}
\end{algorithm}

Under the without replacement setting the distribution $P^{(2)}$ is straightforward: $p_j^{(2)}$ is proportional to the optimal continuous weights $\pi_j^*$;
under the with replacement setting, the sampling distribution takes into account leverage scores (effective resistance) of each data point in the conditioned covariance as well.
Later analysis (Theorem \ref{thm:sampling_expected}) shows that it helps with the finite-sample condition on $k$.
Figure \ref{alg:subsampling_expected} gives details of the sampling based algorithms for both with and without replacement settings.

The following proposition bounds the size of $\hat S$ in high probability:
\begin{proposition}
For any $\delta\in(0,1/2)$ with probability at least $1-\delta$ it holds that
$|\hat S|\leq 2(1+1/\delta) k$.
That is, $|\hat S|\leq O_\mP(k)$.
\label{prop:support_upperbound}
\end{proposition}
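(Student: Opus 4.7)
The plan is to bound $\E[|\hat S|]$ by a constant multiple of $k$ in each of the two settings, and then apply Markov's inequality to get a high-probability tail bound.

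For the \emph{without replacement} case, the computation is immediate. Since $kp_j^{(2)} = \pi_j^*$ and $\pi_j^* \leq 1$ by the box constraint in Eq.~(\ref{eq_c_opt}), each $w_i \sim \bern(\pi_i^*)$ is well-defined, and
\[
\E[|\hat S|] \;=\; \sum_{i=1}^n \pi_i^* \;=\; \|\pi^*\|_1 \;=\; k,
\]
where the last equality is Fact \ref{fact:3}. Markov's inequality then gives $\Pr[|\hat S| > 2(1+1/\delta)k] \leq 1/(2(1+1/\delta)) < \delta$.

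For the \emph{with replacement} case, the only subtlety is the ceiling in the definition $w_t = \lceil \pi_{i_t}^*/(k p_{i_t}^{(1)})\rceil$, which is what I would handle first. Using $w_t \leq \pi_{i_t}^*/(kp_{i_t}^{(1)}) + 1$ and summing, I would compute
\[
\E[w_t] \;\leq\; 1 + \sum_{j=1}^n p_j^{(1)} \cdot \frac{\pi_j^*}{k p_j^{(1)}} \;=\; 1 + \frac{1}{k}\sum_{j=1}^n \pi_j^* \;=\; 2,
\]
again invoking Fact \ref{fact:3}. Summing over $t=1,\ldots,k$ yields $\E[|\hat S|] \leq 2k$, and Markov gives $\Pr[|\hat S| > 2(1+1/\delta)k] \leq 2k/(2(1+1/\delta)k) = \delta/(1+\delta) < \delta$.

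There is really no significant obstacle here: the proposition is essentially an expectation calculation plus one application of Markov. The only spot that requires a little care is noticing that in the with replacement sampling scheme, the $1/(kp_{i_t}^{(1)})$ weight is precisely the reciprocal of the sampling probability in a way that telescopes with the factor $\pi_{i_t}^*$, producing $\|\pi^*\|_1/k = 1$ in expectation; without this cancellation the bound would not close, which is why the algorithm samples from $P^{(1)}$ weighted by the leverage-like quantity $x_j^\top \Sigma_*^{-1} x_j$ rather than directly from $\pi^*/k$.
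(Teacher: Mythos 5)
Your proof is correct and matches the paper's argument, which is exactly this one-line computation: the expected size is $k$ (without replacement) or at most $2k$ (with replacement, the extra $k$ coming from the ceiling operator), followed by Markov's inequality. One minor quibble with your closing remark: the cancellation $\sum_j p_j\cdot\pi_j^*/(kp_j)=\|\pi^*\|_1/k=1$ holds for \emph{any} sampling distribution supported on $\supp(\pi^*)$, not just $P^{(1)}$, so the leverage-weighted choice of $P^{(1)}$ is not needed for this size bound --- it is chosen to improve the finite-sample condition in the spectral approximation argument (Lemma \ref{lem:spectral_expected} and Theorem \ref{thm:sampling_expected}).
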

\begin{proof}
Apply Markov's inequality and note that an additional $k$ samples need to be added due to the ceiling operator in with replacement sampling.
\end{proof}

The sampling procedure is easy to understand in an asymptotic sense:
it is easy to verify that $\mathbb EX_{i_t}^\top X_{i_t}=X^\top\diag(\pi^*/k) X$ and $\mathbb E[w_{i_t}]=1$,
for both with and without replacement settings.
Note that $\|p_{i_t}^{(2)}\|_{\infty}\leq 1/k$ by feasibility constraints and hence $\mathrm{Bernoulli}(kp_{i_t}^{(2)})$ is a valid distribution for all $i_t\in[n]$.
For the with replacement setting,
by weak law of large numbers, $X_{\hat S}^\top X_{\hat S}\overset{p}{\to} X^\top\diag(\pi^*) X$ as $k\to\infty$
and hence $\tr[(X_S^\top X_S)^{-1}]\overset{p}{\to} f(\pi^*;X)$ by continuous mapping theorem.
A more refined analysis is presented in Theorem \ref{thm:sampling_expected} to provide explicit conditions under which the asymptotic approximations are valid and
on the statistical efficiency of $\hat S$ as well as analysis under the more restrictive without replacement regime.

\begin{theorem}
Fix $\epsilon>0$ as an arbitrarily small accuracy parameter.
Suppose the following conditions hold:
\begin{align*}
&\text{With replacement}:\;\; p\log k/k=O(\epsilon^2);\\
&\text{Without replacement}: \;\; \|\Sigma_*^{-1}\|_2\kappa(\Sigma_*)\|X\|_{\infty}^2\log p = O(\epsilon^2).
\end{align*}
Here $\Sigma_*=X^\top\diag(\pi^*) X$ and $\kappa(\Sigma_*)$ denotes the conditional number of $\Sigma_*$.
Then with probability at least $0.9$ the subset OLS estimator $\hat\beta=(X_{\hat S}^\top X_{\hat S})^{-1}X_{\hat S}^\top y_{\hat S}$ satisfies
$$
\mathbb E\left[\|\hat\beta-\beta_0\|_2^2\Big| X_{\hat S}\right] = \sigma^2\tr\left[(X_{\hat S}^\top X_{\hat S})^{-1}\right] \leq \left(1+O(\epsilon)\right)\cdot \sigma^2f_b^*(X;k), \;\;\;\;\;b\in\{1,2\}.
$$
\label{thm:sampling_expected}
\end{theorem}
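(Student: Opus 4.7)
The plan is to derive the stated trace-inverse bound by first establishing a PSD-domination statement of the form $X_{\hat S}^\top X_{\hat S} \succeq (1-O(\epsilon))\Sigma_*$, where $\Sigma_* = X^\top\diag(\pi^*)X$. Once this is in place, operator-monotonicity of $A\mapsto A^{-1}$ on the PSD cone together with monotonicity of trace gives
$\tr[(X_{\hat S}^\top X_{\hat S})^{-1}] \leq (1-O(\epsilon))^{-1}\tr[\Sigma_*^{-1}] = (1+O(\epsilon)) f_b^*(X;k)$, and chaining with the minimax lower bound from Theorem~\ref{thm:minimax} completes the statement.

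To establish the spectral domination I would write $X_{\hat S}^\top X_{\hat S}$ as a sum of independent PSD random matrices in both regimes. In the without-replacement case, $X_{\hat S}^\top X_{\hat S} = \sum_{i=1}^n w_i x_i x_i^\top$ with independent $w_i\sim\bern(\pi_i^*)$, which is a valid Bernoulli law because the $\|\pi\|_\infty\leq 1$ constraint in Eq.~(\ref{eq_c_opt}) forces $\pi_i^*\in[0,1]$; note that $\mathbb E[X_{\hat S}^\top X_{\hat S}] = \Sigma_*$ exactly. In the with-replacement case, $X_{\hat S}^\top X_{\hat S} = \sum_{t=1}^k w_t x_{i_t} x_{i_t}^\top$ with $i_t$ i.i.d.~from $P^{(1)}$ and $w_t = \lceil \pi_{i_t}^*/(k p_{i_t}^{(1)})\rceil$, so that, modulo the ceiling, the expectation again equals $\Sigma_*$.

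For the deviation bound I plan to invoke matrix Chernoff/Bernstein. In the with-replacement case the leverage-score sampling law $p_j^{(1)}\propto \pi_j^* x_j^\top\Sigma_*^{-1} x_j$ is chosen precisely so that the normalized summand $\Sigma_*^{-1/2}(w_t x_{i_t} x_{i_t}^\top)\Sigma_*^{-1/2}$ has operator norm $\approx p/k$ almost surely; applying multiplicative matrix Chernoff to the $k$-fold sum then yields a $(1-O(\epsilon))$ lower PSD bound as soon as $(p/k)\log k = O(\epsilon^2)$. In the without-replacement case the sampling is not re-weighted, so I would control $\|x_i x_i^\top\|_2 \leq \|x_i\|_1^2 \leq \|X\|_\infty^2$ directly, apply additive matrix Bernstein to obtain $\|X_{\hat S}^\top X_{\hat S}-\Sigma_*\|_2 = O(\|X\|_\infty\sqrt{\|\Sigma_*\|_2\log p} + \|X\|_\infty^2\log p)$, and then convert this additive bound into a multiplicative PSD bound via $X_{\hat S}^\top X_{\hat S}\succeq \Sigma_* - \|\Delta\|_2 I \succeq (1-\|\Delta\|_2/\lambda_{\min}(\Sigma_*))\Sigma_*$. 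This conversion is exactly where the extra factor $\kappa(\Sigma_*)$ in the hypothesis appears, since one needs $\|\Delta\|_2\lesssim \epsilon\lambda_{\min}(\Sigma_*)$, which becomes $\|\Sigma_*^{-1}\|_2\kappa(\Sigma_*)\|X\|_\infty^2\log p = O(\epsilon^2)$.

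The main technical obstacle I anticipate is the ceiling operator in the with-replacement weights: it simultaneously biases $\mathbb E[X_{\hat S}^\top X_{\hat S}]$ away from $\Sigma_*$ and inflates the per-sample spectral norm when $\pi_{i_t}^*$ is tiny, since then $\lceil\pi_{i_t}^*/(kp_{i_t}^{(1)})\rceil$ can be much larger than its continuous counterpart. I would control this by bounding $\lceil a\rceil - a \leq 1$ coordinate-wise and showing the induced perturbation contributes only an $O(p/k)$ term to both the mean and the per-summand norm, so it can be absorbed into the $O(\epsilon)$ slack; at worst it doubles the effective sample size, which is consistent with Proposition~\ref{prop:support_upperbound}. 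The remaining steps (identifying the $0.9$ success probability constant via the Chernoff/Bernstein tail and passing from the PSD bound to the trace bound) are routine.
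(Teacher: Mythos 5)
Your overall strategy is the same as the paper's: prove the one-sided spectral bound $X_{\hat S}^\top X_{\hat S}\succeq(1-O(\epsilon))\Sigma_*$ and pass to the trace of the inverse by operator monotonicity. For the without-replacement case your argument is essentially identical to the paper's (independent $w_i\sim\mathrm{Bernoulli}(\pi_i^*)$, $\|x_ix_i^\top\|_2\leq\|X\|_\infty^2$, variance term $\|X\|_\infty^2\|\Sigma_*\|_2$, matrix Bernstein at level $t=\epsilon\lambda_{\min}(\Sigma_*)$, which is exactly where the $\|\Sigma_*^{-1}\|_2\kappa(\Sigma_*)$ factor enters). For the with-replacement case your matrix-Chernoff argument on $\Sigma_*^{-1/2}(w_t^* x_{i_t}x_{i_t}^\top)\Sigma_*^{-1/2}$, whose norm is exactly $p/k$ under $P^{(1)}$, is a legitimate variant of the paper's Spielman--Srivastava projection-matrix/Rudelson argument and yields a condition ($p\log p/k=O(\epsilon^2)$) implied by the stated one since $k\geq p$. (Also note that no chaining with Theorem \ref{thm:minimax} is needed for this statement, since $f_b^*(X;k)=f(\pi^*;X)=\tr(\Sigma_*^{-1})$ directly; the minimax bound only matters for the $C(n,p,k)$-approximation interpretation.)

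The one step that would fail as written is your treatment of the ceiling in the with-replacement weights. You claim the perturbation from $w_t=\lceil\pi_{i_t}^*/(kp_{i_t}^{(1)})\rceil$ contributes only an $O(p/k)$ term to the mean and to the per-summand spectral norm. That is false in general: the extra (at most one) copy of $x_{i_t}x_{i_t}^\top$, after normalization by $\Sigma_*^{-1/2}$, has norm $x_{i_t}^\top\Sigma_*^{-1}x_{i_t}$, which can be as large as $1/\pi_{i_t}^*$ (the only a priori bound is $\pi_j^*x_jx_j^\top\preceq\Sigma_*$), so for design points with tiny $\pi_{i_t}^*$ this is nowhere near $p/k$; similarly the upward bias of the mean is not $O(p/k)\cdot I$ in general. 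The repair is immediate and is what the paper does: since $\lceil w\rceil\geq w$, the ceiled Gram matrix PSD-dominates the un-ceiled one, $\sum_t \lceil w_t^*\rceil x_{i_t}x_{i_t}^\top\succeq\sum_t w_t^* x_{i_t}x_{i_t}^\top$, and because you only need the \emph{lower} spectral bound for the trace-inverse upper bound, you can simply run your Chernoff argument on the un-ceiled sum and discard the ceiling entirely (its only cost is the possible size inflation of $\hat S$, handled by Proposition \ref{prop:support_upperbound}). With that substitution your proof goes through.
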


We adapt the proof technique of Spielman and Srivastava in their seminal work on spectral sparsification of graphs \citep{graph-sparsification}.
More specifically, we prove the following stronger ``two-sided'' result which shows that $X_{\hat S}^\top X_{\hat S}$ is a \emph{spectral approximation}
of $\Sigma_*$ with high probability, under suitable conditions.

\begin{lemma}
Under the same conditions in Theorem \ref{thm:sampling_expected}, it holds that with probability at least $0.9$ that
$$
(1-\epsilon)z^\top\Sigma_* z\leq z^\top\hat\Sigma_{\hat S}z \leq (1+\epsilon)z^\top\Sigma_* z, \;\;\;\;\forall z\in\mathbb R^p,
$$
where $\Sigma_*=X^\top\diag(\pi^*)X$ and $\Sigma_{\hat S}=X_{\hat S}^\top X_{\hat S}$.
\label{lem:spectral_expected}
\end{lemma}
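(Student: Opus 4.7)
The plan is to prove the spectral equivalence by a matrix concentration argument on the whitened estimator, closely following the Spielman--Srivastava framework for spectral graph sparsification. Writing $u_j = \Sigma_*^{-1/2} x_j$ and $M = \Sigma_*^{-1/2} \Sigma_{\hat S} \Sigma_*^{-1/2}$, the target inequalities $(1-\epsilon)\Sigma_* \preceq \Sigma_{\hat S} \preceq (1+\epsilon)\Sigma_*$ are equivalent to $\|M - I\|_2 \leq \epsilon$. A direct computation gives $\mathbb E[\Sigma_{\hat S}] = \Sigma_*$ in both sampling models --- for without replacement via $\mathbb E\, w_i = k p_i^{(2)} = \pi_i^*$, and for with replacement via $\mathbb E_{i_t}[(\pi_{i_t}^*/(k p_{i_t}^{(1)}))\,x_{i_t} x_{i_t}^\top] = \Sigma_*/k$ --- so $\mathbb E M = I$ and the task reduces to controlling fluctuations around the mean.

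For the with replacement model I would write $M = \tfrac{1}{k}\sum_{t=1}^k Y_t$ where, temporarily ignoring the ceiling, $Y_t = (\pi_{i_t}^*/p_{i_t}^{(1)})\,u_{i_t} u_{i_t}^\top$. The leverage-score choice $p_j^{(1)}\propto \pi_j^* x_j^\top \Sigma_*^{-1} x_j$ is the crucial ingredient: it forces $\|Y_t\|_2 = (\pi_{i_t}^*/p_{i_t}^{(1)})\,\|u_{i_t}\|_2^2 = p$ pointwise, a data-independent bound. The matrix Chernoff inequality (Ahlswede--Winter / Tropp) applied to this i.i.d.\ sum of bounded p.s.d.\ matrices then yields
\[
\Pr\!\big[\|M - I\|_2 > \epsilon\big]\;\leq\;2p\,\exp\!\big(-c\,k\epsilon^2/p\big),
\]
which is at most $0.1$ under the hypothesis $p\log k/k = O(\epsilon^2)$. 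The integer ceiling inflates each weight by at most one, so the true $\Sigma_{\hat S}$ Loewner-dominates the unrounded sum by an additive $\sum_t x_{i_t} x_{i_t}^\top$, which is handled by a second matrix Chernoff pass and absorbed into an extra $O(\epsilon)\Sigma_*$ slack under the same hypothesis.

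For the without replacement model, the constraint $\|\pi\|_\infty\leq 1$ rules out any leverage-score-style reweighting, so I would instead apply the matrix Bernstein inequality directly to the unwhitened difference
\[
\Sigma_{\hat S} - \Sigma_* \;=\; \sum_{i=1}^n (w_i - \pi_i^*)\, x_i x_i^\top,
\]
a sum of independent zero-mean self-adjoint matrices. Each summand has operator norm at most $\|x_i\|_2^2 \leq \|X\|_\infty^2$, and using $\mathbb E[(w_i - \pi_i^*)^2] \leq \pi_i^*$ together with $\sum_i \pi_i^* x_i x_i^\top = \Sigma_*$ the matrix variance is bounded by $\|X\|_\infty^2\,\|\Sigma_*\|_2$. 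Matrix Bernstein with deviation $t = \epsilon\,\sigma_{\min}(\Sigma_*)$ then gives $\|\Sigma_{\hat S} - \Sigma_*\|_2 \leq \epsilon\,\sigma_{\min}(\Sigma_*)$ with probability at least $0.9$ exactly under the hypothesis $\|\Sigma_*^{-1}\|_2\,\kappa(\Sigma_*)\,\|X\|_\infty^2 \log p = O(\epsilon^2)$, once one writes $1/\sigma_{\min}(\Sigma_*) = \|\Sigma_*^{-1}\|_2$ and $\|\Sigma_*\|_2/\sigma_{\min}(\Sigma_*) = \kappa(\Sigma_*)$. The two-sided Loewner inequality stated in the lemma then follows from $|z^\top(\Sigma_{\hat S}-\Sigma_*) z| \leq \epsilon\,\sigma_{\min}(\Sigma_*)\|z\|_2^2 \leq \epsilon\, z^\top \Sigma_* z$ for every $z\in\mathbb R^p$.

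I expect the without-replacement case to be the principal obstacle: without the freedom to oversample high-leverage points, per-term contributions can only be bounded by the crude $\|X\|_\infty^2$, and converting an absolute operator-norm deviation into a \emph{relative} spectral bound requires normalizing by $\sigma_{\min}(\Sigma_*)$, which is precisely what introduces the extra $\kappa(\Sigma_*)\|\Sigma_*^{-1}\|_2$ factor in the hypothesis. A secondary and routine obstacle is the integer ceiling in the with-replacement scheme, which is absorbed via Loewner domination and a second Chernoff application as sketched above.
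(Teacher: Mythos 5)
Your without-replacement argument is exactly the paper's proof: matrix Bernstein applied to $\sum_{i=1}^n(w_i-\pi_i^*)x_ix_i^\top$ with per-term bound $\|X\|_\infty^2$, variance bound $\|X\|_\infty^2\|\Sigma_*\|_2$, deviation level $\epsilon\lambda_{\min}(\Sigma_*)$, and the conversion to a relative bound via $\kappa(\Sigma_*)\|\Sigma_*^{-1}\|_2$ (the paper phrases the last step through Weyl's inequality, yours through the quadratic form, which is equivalent). Your with-replacement concentration step is also essentially the paper's: whitening by $\Sigma_*^{-1/2}$ and using the leverage-score sampling to get the data-independent bound $p$ on each summand is the same mechanism as the paper's projection-matrix argument ($\Pi=\Phi^{1/2}X\Sigma_*^{-1}X^\top\Phi^{1/2}$ plus a Rudelson-type concentration bound), just written in different coordinates.

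The genuine gap is your treatment of the integer ceiling. You claim the excess $\sum_t(\lceil w_t^*\rceil-w_t^*)x_{i_t}x_{i_t}^\top\preceq\sum_t x_{i_t}x_{i_t}^\top$ can be "absorbed into an extra $O(\epsilon)\Sigma_*$ slack under the same hypothesis" via a second Chernoff pass, but this is not justified and is false in general. A single occurrence of the ceiling adds a full copy $x_{i_t}x_{i_t}^\top$, whose size relative to $\Sigma_*$ is $x_{i_t}^\top\Sigma_*^{-1}x_{i_t}$; this quantity is only controlled through $\pi_{i_t}^*x_{i_t}^\top\Sigma_*^{-1}x_{i_t}\leq 1$, so for points with small $\pi_j^*$ but large leverage $x_j^\top\Sigma_*^{-1}x_j$ it can vastly exceed $\epsilon$, and such points are drawn with probability $p_j^{(1)}=\pi_j^*x_j^\top\Sigma_*^{-1}x_j/p$, which need not be negligible over $k$ draws. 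The hypothesis $p\log k/k=O(\epsilon^2)$ does not rule this out, and a second matrix Chernoff application cannot help because the excess summands are not uniformly small relative to $\Sigma_*$. The paper sidesteps this entirely: since $\lceil w_t^*\rceil\geq w_t^*$, rounding can only increase $\hat\Sigma_{\hat S}$, so the two-sided approximation is proved for the unrounded weighted matrix $\hat\Sigma_{\hat W}=\sum_t w_{i_t}^*x_{i_t}x_{i_t}^\top$ and only the lower bound $\hat\Sigma_{\hat S}\succeq\hat\Sigma_{\hat W}\succeq(1-\epsilon)\Sigma_*$ is transferred to the rounded estimator --- which is all that is used downstream, since Theorem \ref{thm:sampling_expected} only needs an upper bound on $\tr[(X_{\hat S}^\top X_{\hat S})^{-1}]$. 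You should restructure the with-replacement case the same way (prove two-sidedness for the unrounded sum, transfer only the lower side), rather than asserting the upper bound for $\hat\Sigma_{\hat S}$ itself.
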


Lemma \ref{lem:spectral_expected} implies that with high probability $\sigma_j(\hat\Sigma_{\hat S})\geq (1-\epsilon)\sigma_j(\Sigma_*)$ for all $j=1,\cdots,p$.
Recall that $\tr[(X_S^\top X_S)^{-1}]=\sum_{j=1}^p{\sigma_j(\hat\Sigma_{\hat S})^{-1}}$ and $f(\pi^*;X)=\sum_{j=1}^p{\sigma_j(\Sigma_*)}$.
Subsequently, for $\epsilon\in(0,1-c]$ for some constant $c>0$, $\tr[(X_{\hat S}^\top X_{\hat S})^{-1}] \leq (1+O(\epsilon))f(\pi^*;X)$.
Theorem \ref{thm:sampling_expected} is thus proved.

\subsection{Sampling based experiment selection: hard size constraint}\label{subsec:sampling_deterministic}

In some applications it is mandatory to respect a hard subset size constraint;
that is, a randomized algorithm is expected to output $\hat S$ that satisfies $|\hat S|\leq k$ almost surely,
and no over-sampling 
is allowed.
To handle such hard constraints, we revise the algorithm in Figure \ref{alg:subsampling_expected} as follows:

\begin{algorithm}[h]
\SetAlgorithmName{Figure}{}
\SetAlgoLined
\DontPrintSemicolon
\SetKwInOut{Input}{input}\SetKwInOut{Output}{output}
\Input{$X\in\mathbb R^{n\times p}$, optimal solution $\pi^*$, target subset size $k$.}
\Output{$\hat S\subseteq[n]$, a selected subset of size at most $k$.}
Initialization: $t=0$, $S_0=\emptyset$, $R_0=\emptyset$.\;
1. \emph{With replacement}: sample $i_t\sim P^{(1)}$; set $w_t=\lceil\pi_{i_t}^*/(kp_{i_t}^{(1)})\rceil$;\;
\quad   \emph{Without replacement}: pick random $i_t\notin R_{t-1}$; sample $w_t\sim\mathrm{Bernoulli}(kp_{i_t}^{(2)})$.\;
2. \emph{Update}: $S_t= S_{t-1} \cup \{\text{$w_t$ repetitions of $i_t$}\}$, $R_t=R_{t-1}\cup\{i_t\}$.\;
3. Repeat steps 1 and 2 until at some $t = T$, $|S_{T+1}|>k$ or $R_{T+1}=[n]$. Output $\hat S=S_{T}$.
\caption{Sampling based experiment selection (deterministic size constraint).}
\label{alg:subsampling_deterministic}
\end{algorithm}

We have the following theorem, which mimics Theorem \ref{thm:sampling_expected} but with weaker approximation bounds:
\begin{theorem}
Suppose the following conditions hold:
\begin{align*}
&\text{with replacement}: \;\;p\log k/k=O(1);\\
&\text{without replacement}:\;\; \|\Sigma_*^{-1}\|_2\kappa(\Sigma_*)\|X\|_{\infty}^2\log p = O(1).
\end{align*}
Then with probability at least 0.8 the subset estimator $\hat\beta=(X_{\hat S}^\top X_{\hat S})^{-1} X_S^\top y_S$ satisfies 
$$
\mathbb E\left[\|\hat\beta-\beta_0\|_2^2\Big| X_{\hat S}\right] = \sigma^2\tr\left[(X_{\hat S}^\top X_{\hat S})^{-1}\right] \leq O(1)\cdot \sigma^2f_b^*(X;k), \;\;\;\;\;b\in\{1,2\}.
$$
\label{thm:sampling_deterministic}
\end{theorem}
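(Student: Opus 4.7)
The plan is to reduce the hard-constraint analysis to the soft-constraint Lemma~\ref{lem:spectral_expected} by truncating the sampling loop after a constant fraction of iterations and then invoking the monotonicity $X_{\hat S}^\top X_{\hat S}\succeq X_{\tilde S}^\top X_{\tilde S}$ whenever $\tilde S\subseteq\hat S$ as multisets. Fix a large absolute constant $C$ and let $\tilde S$ denote the multiset produced by the first $k/C$ iterations of the with-replacement loop, or by the first $n/C$ trials along a uniformly random permutation of $[n]$ in the without-replacement loop. Since $\mathbb{E}[w_t]\leq 2$ in the with-replacement case (because $w_t=\lceil\pi_{i_t}^*/(kp_{i_t}^{(1)})\rceil$ while $\mathbb{E}[\pi_{i_t}^*/(kp_{i_t}^{(1)})]=1$) and the without-replacement Bernoullis have total mean $k/C$, Markov's inequality gives $|\tilde S|\leq k$ with probability at least $1-2/C$. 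On this event the hard algorithm has not yet hit its budget at the corresponding iteration, so the coupled execution of the hard rule produces $\hat S\supseteq\tilde S$ as multisets.

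The next step is to apply the Spielman--Srivastava matrix-Chernoff analysis driving Lemma~\ref{lem:spectral_expected} directly to $X_{\tilde S}^\top X_{\tilde S}$. Its expectation is Loewner-lower-bounded by $(1/C)\Sigma_*$ (the ceiling only contributes a PSD bias), and each normalized summand $\Sigma_*^{-1/2}w_tx_{i_t}x_{i_t}^\top\Sigma_*^{-1/2}$ has operator norm at most $p/k$ in the with-replacement case and at most a quantity proportional to $\|\Sigma_*^{-1}\|_2\|X\|_\infty^2$ in the without-replacement case. Setting $\epsilon=1/2$ as an absolute constant, the hypotheses $p\log k/k=O(1)$ and $\|\Sigma_*^{-1}\|_2\kappa(\Sigma_*)\|X\|_\infty^2\log p=O(1)$ are exactly what is needed for the one-sided bound
\[
X_{\tilde S}^\top X_{\tilde S}\;\succeq\;\tfrac{1}{2C}\,\Sigma_*
\]
to hold with probability at least $0.95$. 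A union bound with $C$ chosen so that $2/C+0.05\leq 0.2$ then gives $X_{\hat S}^\top X_{\hat S}\succeq (2C)^{-1}\Sigma_*$ with probability at least $0.8$, whence
\[
\sigma^2\tr\bigl[(X_{\hat S}^\top X_{\hat S})^{-1}\bigr]\;\leq\;2C\cdot\sigma^2\tr[\Sigma_*^{-1}]\;=\;2C\cdot\sigma^2 f_b^*(k;X),
\]
matching the claimed $O(1)$ approximation with explicit constant $2C$.

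The main obstacle is the coupling argument combining the stopping rule with matrix concentration: one has to verify that the process produced by the hard algorithm agrees with the truncated process on the event $|\tilde S|\leq k$, so that the one-sided spectral bound on the truncated process transfers to the hard output through monotonicity. The without-replacement case is slightly more delicate because the Bernoulli trials are negatively correlated through the random permutation, but the Bernstein-type bound powering Lemma~\ref{lem:spectral_expected} only uses a one-sided variance proxy which is unaffected by negative association; the ceiling bias in the with-replacement case is unambiguously PSD, so it only helps the required lower bound and never the upper one (which we do not need here, explaining why the vanishing $\epsilon$ of Theorem~\ref{thm:sampling_expected} can be relaxed to a constant).
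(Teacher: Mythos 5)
Your overall architecture---truncate the sampling process at a fixed time ($k/C$ draws with replacement, $n/C$ permutation trials without replacement), use Markov's inequality to argue that the budget is not yet exhausted so that the hard-constraint output contains the truncated multiset, and then prove a one-sided constant-factor spectral lower bound for the truncated process---is essentially the paper's own proof read in a different order: the event that $|\tilde S|\leq k$ after $n/C$ (resp.\ $k/C$) trials is exactly the event $T\geq \delta n$ (resp.\ $T\geq\delta k$) of Lemma~\ref{lem:T}, and your monotonicity step plays the role of the factor $K_T$ in Lemma~\ref{lem:spectral_deterministic}. The with-replacement half is complete at the paper's level of rigor: discarding the ceiling leaves i.i.d.\ summands whose $\Sigma_*$-normalized operator norm is $p/k$ and whose mean is $\Sigma_*/k$, so the Rudelson-type bound with $k/C$ samples needs only $p\log k/k=O(1)$ with a sufficiently small implied constant (an issue the paper shares).

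The without-replacement half, however, has a genuine gap. In the truncated hard-constraint process the inclusion indicator of row $j$ is $Z_j=B_j\,\mathbb I[\,j\ \text{lies among the first}\ n/C\ \text{positions of the permutation}\,]$ with $B_j\sim\mathrm{Bernoulli}(\pi_j^*)$; these $Z_j$ are not independent, only negatively associated through the permutation. The concentration tool powering Lemma~\ref{lem:spectral_expected} in this regime is matrix Bernstein (Lemma~\ref{lem:bernstein}), which requires independent summands, and there is no off-the-shelf matrix Bernstein for merely negatively associated matrix summands: the scalar argument that negative association only improves the moment generating function does not pass through the operator-level manipulations behind matrix Chernoff/Bernstein, so your sentence claiming the variance proxy is ``unaffected by negative association'' is an assertion rather than a proof, and it is precisely the step on which the claimed bound $X_{\tilde S}^\top X_{\tilde S}\succeq \tfrac{1}{2C}\Sigma_*$ rests. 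The paper closes exactly this hole with a two-step conditioning argument in the proof of Lemma~\ref{lem:spectral_deterministic}: conditioned on the visited prefix $R_T$, the Bernoulli thinning is independent and ordinary matrix Bernstein applies, and then the prefix covariance $\sum_{j\in R_T}\pi_j^*x_jx_j^\top$ is compared with $(T/n)\Sigma_*$ via the combinatorial matrix Bernstein for random permutations (Lemma~\ref{lem:bernstein_worep}, from \citealp{mackey2014matrix}). Your proof becomes correct if you replace the negative-association appeal with this conditioning step (or with an explicitly stated and proved Hoeffding/Gross--Nesme-type domination for matrices); as written, that key inequality is unsupported.
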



The following lemma is key to the proof of Theorem \ref{thm:sampling_deterministic}.
Unlike Lemma \ref{lem:spectral_expected}, in Lemma \ref{lem:spectral_deterministic} we only prove one side of the spectral approximation relation,
which suffices for our purposes.
To handle without replacement, we cite matrix Bernstein for combinatorial matrix sums in \citep{mackey2014matrix}.
\begin{lemma}
Define $\hat\Sigma_{\hat S}=X^\top_{\hat S}X_{\hat S}$.
Suppose the following conditions hold:
\begin{align*}
&\text{With replacement}:\;\; p\log T/T=O(1);\\
&\text{Without replacement}: \;\; \|\Sigma_*^{-1}\|_2\kappa(\Sigma_*)\|X\|_{\infty}^2\log p = O(T/n).
\end{align*}
Then with probability at least $0.9$ the following holds:
\begin{equation}
z^\top\hat\Sigma_{\hat S} z\geq K_Tz^\top\Sigma_* z, \;\;\;\;\;\;\forall z\in\mathbb R^p,
\label{eq:spectral_lower_bound}
\end{equation}
where $K_T=\Omega(T/k)$ for with replacement and $K_T=\Omega(T/n)$ for without replacement.
\label{lem:spectral_deterministic}
\end{lemma}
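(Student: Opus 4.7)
The plan is to write $\hat\Sigma_{\hat S}$ as a sum of random rank-one positive-semidefinite matrices indexed by the sampling steps, whiten by $\Sigma_*^{-1/2}$, and then invoke a one-sided matrix concentration inequality for the smallest eigenvalue. Set $Y_t = w_t\,x_{i_t}x_{i_t}^\top$, so that $\hat\Sigma_{\hat S}=\sum_{t=1}^T Y_t$, and let $\tilde Y_t=\Sigma_*^{-1/2}Y_t\Sigma_*^{-1/2}$; the target inequality \eqref{eq:spectral_lower_bound} is equivalent to $\lambda_{\min}\!\bigl(\sum_{t=1}^T\tilde Y_t\bigr)\ge K_T$ with probability at least $0.9$.

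For the with-replacement case the $Y_t$'s are i.i.d.\ with $\mathbb{E}[Y_t]\succeq(1/k)\Sigma_*$ (the ceiling in $w_t$ can only inflate the expectation), so $\mathbb{E}\!\bigl[\sum_{t=1}^T\tilde Y_t\bigr]\succeq(T/k)I$. The leverage-score form of $P^{(1)}$ together with the ceiling gives the key uniform spectral bound $\|\tilde Y_t\|_2 = w_{i_t}\,x_{i_t}^\top\Sigma_*^{-1}x_{i_t}\lesssim p/k$: when $p/(k\,x_{i_t}^\top\Sigma_*^{-1}x_{i_t})\ge 1$ the ceiling cancels the leverage factor, while otherwise $w_{i_t}=1$ and the leverage is itself $O(p/k)$. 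The matrix Chernoff bound for i.i.d.\ PSD sums then yields
\[
\Pr\!\Bigl[\lambda_{\min}\!\Bigl(\sum_{t=1}^T\tilde Y_t\Bigr)\le \tfrac{1}{2}\tfrac{T}{k}\Bigr]\le p\exp\!\bigl(-\Omega(T/p)\bigr),
\]
which is at most $0.1$ exactly under the hypothesis $p\log T/T=O(1)$ and delivers $K_T=\Omega(T/k)$.

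For the without-replacement case, the uniform sampling of $i_t$ from $[n]\setminus R_{t-1}$ implies, by symmetry, that $R_T$ is uniformly distributed over size-$T$ subsets of $[n]$ and that conditionally on $R_T$ the Bernoulli weights $\{w_j\}_{j\in R_T}$ are independent with $\mathbb{E}[w_j]=kp_j^{(2)}=\pi_j^*$. Hence $\mathbb{E}[\hat\Sigma_{\hat S}\mid T]=(T/n)\Sigma_*$ and the problem reduces to a combinatorial matrix sum, to which the matrix Bernstein inequality of \citep{mackey2014matrix} applies. The per-term spectral bound $\|\tilde Y_t\|_2\le \|\Sigma_*^{-1}\|_2\|X\|_\infty^2$ and a matrix-variance proxy scaling with $\kappa(\Sigma_*)$ are exactly what feed into the hypothesis $\|\Sigma_*^{-1}\|_2\kappa(\Sigma_*)\|X\|_\infty^2\log p=O(T/n)$, after which Bernstein collapses the relative deviation from $(T/n)\Sigma_*$ to a constant fraction and produces $K_T=\Omega(T/n)$.

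I expect the main obstacle to be twofold. First, $T$ is a stopping time rather than a deterministic horizon (the algorithm halts the first time $|S_{T+1}|>k$ or $R_{T+1}=[n]$), so matrix concentration proved at a fixed horizon must be transferred to the random $T$; the natural route is a separate scalar concentration of $\sum_t w_t$ around its mean, followed by a union bound over a narrow interval of admissible $T$'s. Second, in the without-replacement case the whitened summands are not independent but only negatively associated, so the standard i.i.d.\ matrix Chernoff machinery does not apply and one really needs the combinatorial-sum version of Bernstein in \citep{mackey2014matrix}; verifying its second-moment hypothesis with the stated constants and matching the exact powers of $\kappa(\Sigma_*)$ and $\|X\|_\infty$ that appear in the statement is where the bookkeeping becomes delicate.
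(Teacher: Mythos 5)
Your overall route is the same as the paper's: a one-sided spectral lower bound obtained by comparing $\hat\Sigma_{\hat S}$ with its (conditional) mean, using an i.i.d.\ concentration bound for the with-replacement branch and the combinatorial matrix Bernstein of \citep{mackey2014matrix} (Lemma \ref{lem:bernstein_worep}) for the without-replacement branch; whitening by $\Sigma_*^{-1/2}$ versus the paper's projection-matrix formulation (Lemma \ref{lem_proj_spectral_approx} combined with Lemma \ref{lem:matrix_rudelson}) is essentially a cosmetic difference. Your worry about the random horizon $T$ is legitimate but is not something the paper resolves inside this lemma either: the paper proves the bound treating $T$ as given and controls the size of $T$ separately (Lemma \ref{lem:T}), so your proposed union bound over admissible values of $T$ is extra care rather than a deviation.

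There is, however, one step that fails as written. In the with-replacement branch you claim the uniform bound $\|\tilde Y_t\|_2=w_{i_t}\,x_{i_t}^\top\Sigma_*^{-1}x_{i_t}\lesssim p/k$, arguing that when $p/(k\,x_{i_t}^\top\Sigma_*^{-1}x_{i_t})<1$ one has $w_{i_t}=1$ and the leverage is itself $O(p/k)$. The inequality goes the other way: in that case the leverage $x_{i_t}^\top\Sigma_*^{-1}x_{i_t}$ exceeds $p/k$ and is bounded only by $1/\pi_{i_t}^*$, which can be arbitrarily large for rows with tiny optimal weight, so the a.s.\ bound needed by matrix Chernoff is false for the ceiling weights. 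The paper's device, which you should adopt, is to prove the lower bound for the fractional-weight matrix $\sum_{t\le T}w_t^*x_{i_t}x_{i_t}^\top$ with $w_t^*=\pi_{i_t}^*/(kp^{(1)}_{i_t})\le w_t$, which is dominated by $\hat\Sigma_{\hat S}$; after whitening each term has norm exactly $p/k$ and mean $\Sigma_*/k$, and your Chernoff argument then goes through under $p\log T/T=O(1)$. Separately, in the without-replacement branch, ``reduces to a combinatorial matrix sum'' is not a single application of Lemma \ref{lem:bernstein_worep}: the summands $w_jx_jx_j^\top$ carry Bernoulli randomness in addition to the random subset $R_T$, so, as in the paper, you need two concentration steps — ordinary matrix Bernstein conditional on $R_T$ relating $\hat\Sigma_{\hat S}$ to $\sum_{j\in R_T}\pi_j^*x_jx_j^\top$, and then the combinatorial Bernstein relating that prefix sum to $(T/n)\Sigma_*$ — before combining the two spectral bounds.
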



Finally, we need to relate conditions on $T$ in Lemma \ref{lem:spectral_deterministic} to interpretable conditions on subset budget $k$:
\begin{lemma}
Let $\delta>0$ be an arbitrarily small fixed failure probability. The with probability at least $1-\delta$ we have that
$T\geq \delta k$ for with replacement and $T\geq \delta n$ for without replacement.
\label{lem:T}
\end{lemma}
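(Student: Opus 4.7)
The plan is to reduce the lower bound on the stopping time $T$ to a Markov bound on the cumulative subset size $|S_t|$ at a fixed truncation time, exploiting the identity $\|\pi^*\|_1 = k$ from Fact~\ref{fact:3} to control the per-step expected increment. The two settings differ only slightly in the stopping rule and in the form of $w_t$, so I would handle them in parallel.

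For the \emph{with replacement} case, the only stopping criterion that can ever fire is $|S_{T+1}| > k$, since indices may repeat and we never exhaust $[n]$. Hence $\{T < \delta k\} \subseteq \{|S_{\delta k}| > k\}$. The main calculation is on $\mathbb E[w_t]$ with $w_t = \lceil \pi_{i_t}^*/(k p_{i_t}^{(1)})\rceil$: the identity $\mathbb E[\pi_{i_t}^*/(k p_{i_t}^{(1)})] = \sum_j p_j^{(1)} \cdot \pi_j^*/(k p_j^{(1)}) = \|\pi^*\|_1/k = 1$ combined with the at-most $+1$ from the ceiling yields $\mathbb E[w_t] \leq 2$, hence $\mathbb E[|S_{\delta k}|] \leq 2\delta k$ and $\Pr[T < \delta k] \leq 2\delta$ by Markov. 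Rescaling $\delta$ by a constant factor produces the stated bound.

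For the \emph{without replacement} case, I would first note that $|R_t| = t$, so the alternative stopping criterion $R_{T+1} = [n]$ cannot fire before step $n$; therefore, whenever $\delta < 1$, $\{T < \delta n\} \subseteq \{|S_{\delta n}| > k\}$. Each $w_t \sim \bern(\pi_{i_t}^*)$ is a valid Bernoulli because the without-replacement problem enforces $\|\pi^*\|_\infty \leq 1$. The per-step expectation is obtained by observing that the sequence $(i_1, i_2, \ldots)$ is generated as a uniform random permutation of $[n]$, so the marginal distribution of $i_t$ is uniform on $[n]$ for every $t$, giving $\mathbb E[w_t] = \mathbb E[\pi_{i_t}^*] = \|\pi^*\|_1/n = k/n$. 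Summing yields $\mathbb E[|S_{\delta n}|] \leq \delta k$, and Markov's inequality produces $\Pr[T < \delta n] \leq \delta$ directly.

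I do not foresee a serious obstacle. The only conceptual subtlety is the marginal-uniformity of $i_t$ in the without-replacement case, but this is immediate once one recognizes that drawing $i_t$ uniformly from $[n] \setminus R_{t-1}$ at each step is precisely the standard sequential construction of a uniform random permutation. Everything else is a one-line Markov bound combined with Fact~\ref{fact:3}; no concentration or spectral machinery is needed here, in contrast with Lemmas~\ref{lem:spectral_expected} and \ref{lem:spectral_deterministic}.
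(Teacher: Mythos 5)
Your proposal is correct and follows essentially the same route as the paper: bound $\Pr[T<\delta k]$ (resp.\ $\Pr[T<\delta n]$) by applying Markov's inequality to $\sum_t w_t$ truncated at the fixed time $\delta k$ (resp.\ $\delta n$), using $\|\pi^*\|_1=k$ to get $\mathbb E[w_t]\approx 1$ (resp.\ $k/n$). Your explicit treatment of the ceiling (giving $\mathbb E[w_t]\leq 2$ and a constant rescaling of $\delta$) is in fact slightly more careful than the paper's one-line computation, but the argument is the same in substance.
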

\begin{proof}
For with replacement we have $\mathbb E[\sum_{t=1}^T{w_t}] = T$ and for without replacement we have
$\mathbb E[\sum_{t=1}^T{w_t}] = Tk/n$.
Applying Markov's inequality on $\Pr[\sum_{t=1}^T{w_t}>k]$ for $T=\delta k$ and/or $T=\delta n$ we complete the proof of Lemma \ref{lem:T}.
\end{proof}
Combining Lemmas \ref{lem:spectral_deterministic} and \ref{lem:T} with $\delta=0.1$ and note that $T\leq k$ almost surely (because $w_t\geq 1$), we prove Theorem \ref{thm:sampling_deterministic}.


\subsection{Greedy experiment selection}\label{subsec:greedy}

\begin{algorithm}[h]
\SetAlgorithmName{Figure}{}
\SetAlgoLined
\DontPrintSemicolon
\SetKwInOut{Input}{input}\SetKwInOut{Output}{output}
\Input{$X\in\mathbb R^{n\times p}$, Initial subset $S_0\subseteq[n]$, target size $k\leq |S_0|$.}
\Output{$\hat S\subseteq[n]$, a selected subset of size $k$.}
Initialization: $t=0$.\;
1. Find $j^*\in S_t$ such that $\tr[(X_{S_t\backslash\{j^*\}}^\top X_{S_t\backslash\{j^*\}})^{-1}]$ is minimized.\;
2. Remove $j^*$ from $S_t$: $S_{t+1}=S_t\backslash\{j^*\}$.\;
3. Repeat steps 1 and 2 until $|S_t|=k$. Output $\hat S=S_t$.
\caption{Greedy experiment selection.}
\label{alg:greedy}
\end{algorithm}

\cite{faster-css} proposed an interesting greedy removal algorithm (outlined in Figure \ref{alg:greedy}) and established the following result:
\begin{lemma}
Suppose $\hat S\subseteq[n]$ of size $k$ is obtained by running algorithm in Figure \ref{alg:greedy} with an initial subset $S_0\subseteq[n]$, $|S_0|\geq k$.
Both $\hat S$ and $S_0$ are standard sets (i.e., without replacement).
Then
$$
\tr\left[(X_{\hat S}^\top X_{\hat S})^{-1}\right]\leq \frac{|S_0|-p+1}{k-p+1}\tr\left[(X_{S_0}^\top X_{S_0})^{-1}\right].
$$
\end{lemma}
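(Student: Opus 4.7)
The plan is to prove the lemma by a one-step analysis using the Sherman-Morrison update, combined with a clean averaging/telescoping argument. Let me write $M_t = X_{S_t}^\top X_{S_t}$ and consider what happens when we remove a single row $x_j$ from $S_t$, where $|S_t| = s$.

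First, I would apply the Sherman-Morrison formula to $(M_t - x_j x_j^\top)^{-1}$ and take the trace to obtain the identity
$$
\tr\bigl[(M_t - x_j x_j^\top)^{-1}\bigr] = \tr[M_t^{-1}] + \frac{x_j^\top M_t^{-2} x_j}{1 - x_j^\top M_t^{-1} x_j}.
$$
The greedy rule picks the $j^* \in S_t$ minimizing this expression, equivalently minimizing the ratio $a_j / b_j$ where $a_j = x_j^\top M_t^{-2} x_j \ge 0$ and $b_j = 1 - x_j^\top M_t^{-1} x_j \ge 0$. The min is bounded by the ratio of sums, and the sums collapse via standard trace identities:
$$
\sum_{j\in S_t} a_j = \tr\!\Bigl[M_t^{-2} \sum_{j\in S_t} x_j x_j^\top\Bigr] = \tr[M_t^{-1}], \qquad \sum_{j\in S_t} b_j = s - \tr[M_t^{-1} M_t] = s - p.
$$
Hence the greedy step gives
$$
\tr[M_{t+1}^{-1}] \le \tr[M_t^{-1}] + \frac{\tr[M_t^{-1}]}{s - p} = \frac{s - p + 1}{s - p}\,\tr[M_t^{-1}].
$$

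Next, I would iterate this inequality from $|S_0|$ down to $k$, noting that $|S_t|$ decreases by exactly one per step. The resulting product telescopes:
$$
\tr\bigl[(X_{\hat S}^\top X_{\hat S})^{-1}\bigr] \le \prod_{s=k+1}^{|S_0|} \frac{s - p + 1}{s - p}\,\tr\bigl[(X_{S_0}^\top X_{S_0})^{-1}\bigr] = \frac{|S_0| - p + 1}{k - p + 1}\,\tr\bigl[(X_{S_0}^\top X_{S_0})^{-1}\bigr],
$$
which is exactly the claimed bound.

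The only subtle point — and the one I would treat carefully — is ensuring that the minimum over $j$ is attained at a legitimate choice, i.e.\ that some $j$ has $b_j > 0$ so that $M_t - x_j x_j^\top$ remains invertible and the recursion is well-defined. Since $\sum_j b_j = s - p > 0$ whenever $s > p$, at least one $j$ has strictly positive $b_j$; the averaging bound $\min_{j:b_j>0}(a_j/b_j) \le \sum_j a_j / \sum_j b_j$ still holds because indices with $b_j = 0$ simply drop out of both numerator and denominator on the left while contributing nonnegatively on the right. This justifies the greedy step as long as $|S_t| \ge p+1$, which holds throughout since we terminate at $|S_t| = k \ge p$; the edge case $k = p$ only requires the final recursion step $s = p+1 \to p$, for which $s - p = 1 > 0$, so no degeneracy arises.
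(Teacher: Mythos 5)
Your proof is correct: the Sherman--Morrison trace identity, the mediant bound $\min_j a_j/b_j\leq\sum_j a_j/\sum_j b_j$ with the sums collapsing to $\tr[M_t^{-1}]$ and $s-p$, and the telescoping product give exactly the stated ratio, and your handling of the $b_j=0$ indices and of invertibility along the way is the right precaution. Note that the paper itself states this lemma as a quoted result from \cite{faster-css} and gives no proof; your argument is essentially the standard one from that source, so there is nothing in-paper to diverge from.
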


In \cite{faster-css} the greedy removal procedure in Figure \ref{alg:greedy} is applied to the entire design set $S_0=[n]$,
which gives approximation guarantee $\tr[(X_{\hat S}^\top X_{\hat S})^{-1}]\leq \frac{n-p+1}{k-p+1}\tr[(X^\top X)^{-1}]$.
This results in an approximation ratio of $C(n,p,k)=\frac{n-p+1}{k-p+1}$ as defined in Eq.~(\ref{eq:minimax}),
by applying the trivial bound $\tr[(X^\top X)^{-1}]\leq f_2^*(k;X)$, which is tight for a design that has exactly $k$ non-zero rows.

To further improve the approximation ratio, we consider applying the greedy removal procedure
with $S_0$ equal to the support of $\pi^*$; that is, $S_0=\{j\in[n]: \pi^*_j > 0\}$.
Because $\|\pi^*\|_{\infty}\leq 1$ under the without replacement setting, we have the following corollary:
\begin{corollary}
Let $S_0$ be the support of $\pi^*$ and suppose $\|\pi^*\|_{\infty}\leq 1$. Then
$$
\tr[(X_{\hat S}^\top X_{\hat S})^{-1}]\leq \frac{\|\pi^*\|_0-p+1}{k-p+1}f(\pi^*;X) = \frac{\|\pi^*\|_0-p+1}{k-p+1}f_2^*(k;X).
$$
\label{cor:s0}
\end{corollary}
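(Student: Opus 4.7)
The plan is to chain the Avron--Boutsidis greedy removal bound stated just above with the monotonicity property of $f(\pi;X)$ recorded in Fact \ref{fact:2}. Concretely, with $S_0 = \supp(\pi^*)$ we have $|S_0| = \|\pi^*\|_0$, so the conclusion of the cited lemma already gives the factor $(\|\pi^*\|_0 - p + 1)/(k - p + 1)$; what remains is to upgrade the trivial bound $\tr[(X_{S_0}^\top X_{S_0})^{-1}]$ appearing in that lemma to the sharper quantity $f(\pi^*;X) = f_2^*(k;X)$.

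First I would verify the hypothesis $k \leq |S_0|$ needed to invoke the greedy removal lemma. By Fact \ref{fact:3} we have $\|\pi^*\|_1 = k$, and the standing assumption $\|\pi^*\|_\infty \leq 1$ gives
\[
k = \sum_{j=1}^n \pi_j^* \;\leq\; \|\pi^*\|_\infty \cdot \|\pi^*\|_0 \;\leq\; \|\pi^*\|_0 \;=\; |S_0|,
\]
so greedy removal from $S_0$ down to size $k$ is well defined. Also, $S_0$ is a standard set of distinct indices, as required.

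Next I would introduce the indicator weighting $\tilde\pi \in \mathbb R^n$ with $\tilde\pi_j = 1$ for $j \in S_0$ and $\tilde\pi_j = 0$ otherwise. Since $0 \leq \pi_j^* \leq 1$ for every $j$ and $\pi_j^* = 0$ outside $S_0$, we have $\tilde\pi_j \geq \pi_j^*$ coordinatewise. Fact \ref{fact:2} then yields
\[
f(\tilde\pi; X) \;\leq\; f(\pi^*; X) \;=\; f_2^*(k; X),
\]
where the relevant inverses exist because $X^\top \diag(\tilde\pi) X \succeq X^\top \diag(\pi^*) X \succ 0$. But by construction
\[
f(\tilde\pi; X) \;=\; \tr\!\left[\bigl(X^\top \diag(\tilde\pi) X\bigr)^{-1}\right] \;=\; \tr\!\left[\bigl(X_{S_0}^\top X_{S_0}\bigr)^{-1}\right].
\]

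Finally, applying the preceding lemma of \cite{faster-css} to the initial set $S_0$ gives
\[
\tr\!\left[\bigl(X_{\hat S}^\top X_{\hat S}\bigr)^{-1}\right] \;\leq\; \frac{|S_0| - p + 1}{k - p + 1}\,\tr\!\left[\bigl(X_{S_0}^\top X_{S_0}\bigr)^{-1}\right] \;\leq\; \frac{\|\pi^*\|_0 - p + 1}{k - p + 1}\, f(\pi^*; X),
\]
which is the stated bound. There is no real obstacle here; the only subtlety is observing that the $\|\pi^*\|_\infty \leq 1$ hypothesis (enforced only in the without replacement version of Eq.~(\ref{eq_c_opt})) is precisely what makes the indicator weighting $\tilde\pi$ dominate $\pi^*$ and therefore lets Fact \ref{fact:2} replace the crude $\tr[(X_{S_0}^\top X_{S_0})^{-1}]$ bound by the minimax benchmark $f_2^*(k;X)$.
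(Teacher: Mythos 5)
Your proof is correct and follows essentially the same route the paper intends: apply the greedy-removal lemma of \cite{faster-css} with $S_0=\supp(\pi^*)$, and use $\|\pi^*\|_\infty\leq 1$ (together with $\|\pi^*\|_1=k$, ensuring $k\leq|S_0|$) so that the indicator weighting of $S_0$ dominates $\pi^*$ and hence $\tr[(X_{S_0}^\top X_{S_0})^{-1}]\leq f(\pi^*;X)=f_2^*(k;X)$. The only cosmetic caveat is that Fact \ref{fact:2} is stated for feasible solutions of Eq.~(\ref{eq_c_opt}) while your $\tilde\pi$ may violate $\|\tilde\pi\|_1\leq k$; the underlying positive semidefinite ordering $X^\top\diag(\tilde\pi)X\succeq X^\top\diag(\pi^*)X$ gives the trace inequality directly, so nothing is lost.
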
 

It is thus important to upper bound the support size $\|\pi^*\|_0$.
With the trivial bound of $\|\pi^*\|_0\leq n$ we recover the $\frac{n-p+1}{k-p+1}$ approximation ratio by applying Figure \ref{alg:greedy}
to $S_0=[n]$.
In order to bound $\|\pi^*\|_0$ away from $n$, we consider the following assumption imposed on $X$:
\begin{assumption}
Define mapping $\phi:\mathbb R^p\to\mathbb R^{\frac{p(p+1)}{2}}$ as
$\phi(x)=(\xi_{ij}x(i)x(j))_{1\leq i\leq j\leq p}$,
where $x(i)$ denotes the $i$th coordinate of a $p$-dimensional vector $x$ and $\xi_{ij}=1$ if $i=j$ and $\xi_{ij}=2$ otherwise.
Denote $\tilde\phi(x)=(\phi(x),1)\in\mathbb R^{\frac{p(p+1)}{2}+1}$ as the affine version of $\phi(x)$.
For any $\frac{p(p+1)}{2}+1$ distinct rows of $X$, their mappings under $\tilde\phi$ are linear independent.
\label{asmp:general-position}
\end{assumption}

Assumption \ref{asmp:general-position} is essentially a general-position assumption, which 
assumes that no $\frac{p(p+1)}{2}+1$ design points in $X$ lie on a degenerate affine subspace after a specific quadratic mapping.
Like other similar assumptions in the literature \citep{lasso-unique},
Assumption \ref{asmp:general-position} is very mild and almost always satisfied in practice,
for example, if each row of $X$ is independently sampled from absolutely continuous distributions.

We are now ready to state the main lemma bounding the support size of $\pi^*$.
\begin{lemma}
$\|\pi^*\|_0\leq k+\frac{p(p+1)}{2}$ if Assumption \ref{asmp:general-position} holds.
\label{lem:support-bound}
\end{lemma}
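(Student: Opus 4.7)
The objective $f(\pi)=\tr[M(\pi)^{-1}]$ with $M(\pi)=X^\top\diag(\pi)X$ depends on $\pi$ only through the symmetric $p\times p$ matrix $M(\pi)$. My plan is to invoke KKT for the convex program (\ref{eq_c_opt}), extract a single quadratic equation that every ``strictly fractional'' coordinate $\pi^*_i\in(0,1)$ must satisfy, lift that equation to a linear one in $\tilde\phi$-space, and then use Assumption \ref{asmp:general-position} as a direct hyperplane-counting bound.

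\textbf{KKT step.} The derivative-of-inverse formula gives $\partial f/\partial\pi_i=-x_i^\top M^{-2}x_i$. Since Eq.~(\ref{eq_c_opt}) is convex and Slater's condition holds (take, e.g., $\pi=\tfrac{k}{n+1}\mathbf 1$), KKT produces a multiplier $\lambda\ge 0$ for $\mathbf 1^\top\pi\le k$ together with separate multipliers for the box. Partitioning indices as $I_0=\{i:\pi^*_i=0\}$, $I_1=\{i:\pi^*_i=1\}$ and $I_f=[n]\setminus(I_0\cup I_1)$, complementary slackness forces
\[
x_i^\top M^{-2}x_i=\lambda\qquad\text{for every }i\in I_f;
\]
the conditions at $I_0$ and $I_1$ yield only inequalities ($\le\lambda$ and $\ge\lambda$), which I will not need.

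\textbf{Lifting and hyperplane count.} Put $Q=M^{-2}\succ 0$ and expand
\[
x^\top Qx=\sum_i Q_{ii}x(i)^2+2\sum_{i<j}Q_{ij}x(i)x(j)=\langle c_Q,\phi(x)\rangle,\qquad c_Q:=(Q_{ij})_{i\le j},
\]
where the factors $\xi_{ij}$ baked into $\phi$ precisely absorb the off-diagonal doubling in the quadratic form. Hence the KKT equality rewrites as $\langle c,\tilde\phi(x_i)\rangle=0$ for the single vector $c=(c_Q,-\lambda)\in\R^{\frac{p(p+1)}{2}+1}$, which is nonzero because $Q\ne 0$. So all $\tilde\phi(x_i)$ with $i\in I_f$ lie in the hyperplane $c^\perp$, of dimension $\frac{p(p+1)}{2}$. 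By Assumption \ref{asmp:general-position}, any $\frac{p(p+1)}{2}+1$ of the $\tilde\phi(x_i)$'s are linearly independent and therefore cannot all sit inside a codimension-$1$ subspace; hence $|I_f|\le \frac{p(p+1)}{2}$.

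\textbf{Conclusion and main subtlety.} Fact \ref{fact:3} gives $k=\|\pi^*\|_1\ge\sum_{i\in I_1}1=|I_1|$, so
\[
\|\pi^*\|_0=|I_1|+|I_f|\le k+\tfrac{p(p+1)}{2}.
\]
The step to get right is the lift: the $\xi_{ij}$ weights in $\phi$ are designed precisely so that the quadratic form $x^\top Qx-\lambda$ pairs cleanly against $\tilde\phi(x)$ as a single linear functional, so that Assumption \ref{asmp:general-position} can be plugged in verbatim. Once this bookkeeping is done, the remainder collapses to a one-line linear-algebra count; I also need to justify that KKT applies (convexity and Slater, both automatic) and that $Q\ne 0$ (immediate from full column rank of $X$).
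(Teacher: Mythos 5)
Your proposal is correct and follows essentially the same route as the paper: KKT stationarity plus complementary slackness forces $x_i^\top\Sigma_*^{-2}x_i=\lambda$ on the fractional coordinates, the quadratic equation is lifted to a linear one in $\tilde\phi$-space, and Assumption \ref{asmp:general-position} caps the number of such coordinates at $\frac{p(p+1)}{2}$, with $|I_1|\leq k$ handling the rest. The only cosmetic difference is the final step: you phrase it as the lifted points lying in the hyperplane orthogonal to the nonzero vector $(\psi(\Sigma_*^{-2}),-\lambda)$, while the paper derives a contradiction from the invertibility of the square system forcing $\psi(\Sigma_*^{-2})=0$; these are the same linear-algebra fact.
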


Lemma \ref{lem:support-bound} is established by an interesting observation into the properties of Karush-Kuhn-Tucker (KKT) conditions
of the optimization problem Eq.~(\ref{eq_c_opt}), which involves a linear system with $\frac{p(p+1)}{2}+1$ variables.
The complete proof of Lemma \ref{lem:support-bound} is given in Sec.~\ref{subsec:proof-support}.
{ To contrast the results in Lemma \ref{lem:support-bound} with 
classical rank/support bounds in SDP and/or linear programming (e.g. the Pataki's bound \citep{pataki1998rank}),
note that the number of constraints in the SDP formulation of Eq.~(\ref{eq_c_opt}) (see also Appendix \ref{appsec:optimization})
is linear in $n$, and hence analysis similar to \citep{pataki1998rank} would result in an upper bound of $\|\pi^*\|_0$ that scales with $n$,
which is less useful for our analytical purpose.
}

Combining results from both Lemma \ref{lem:support-bound} and Corollary \ref{cor:s0} we arrive at the following theorem,
which upper bounds the approximation ratio of the greedy removal procedure in Figure \ref{alg:greedy} initialized by the support of $\pi^*$.
\begin{theorem}
Let $\pi^*$ be the optimal solution of the without replacement version of Eq.~(\ref{eq_c_opt}) and $\hat S$ be the output of 
the greedy removal procedure in Figure \ref{alg:greedy} initialized with $S_0=\{i\in[n]: \pi_i^*>0\}$.
If $k>p$ and Assumption \ref{asmp:general-position} holds then the subset OLS estimator $\hat\beta=(X_{\hat S}^\top X_{\hat S})^{-1}X_{\hat S}^\top y_{\hat S}$ satisfies
$$
\mathbb E\left[\|\hat\beta-\beta_0\|\big| X_{\hat S}\right] = \sigma^2\tr\left[(X_{\hat S}^\top X_{\hat S})^{-1}\right] \leq \left(1+\frac{p(p+1)}{2(k-p+1)}\right)f_2^*(k;X).
$$
\label{thm:greedy}
\end{theorem}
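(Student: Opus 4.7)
The plan is to obtain the theorem as essentially a bookkeeping combination of two ingredients that have already been set up in this subsection: the \cite{faster-css} greedy-removal guarantee (restated as Corollary \ref{cor:s0}) and the support bound of Lemma \ref{lem:support-bound}. The classical identity $\mathbb E[\|\hat\beta-\beta_0\|_2^2\mid X_{\hat S}] = \sigma^2\tr[(X_{\hat S}^\top X_{\hat S})^{-1}]$ for OLS takes care of translating from trace-of-inverse bounds to mean squared error bounds, so the entire task is to control $\tr[(X_{\hat S}^\top X_{\hat S})^{-1}]$ relative to $f_2^*(k;X)$.

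First I would verify that the greedy procedure is well-posed when initialized at $S_0 = \supp(\pi^*)$. Since $\pi^*$ is optimal for \eqref{eq_c_opt} and the objective $f(\pi;X) = \tr[(X^\top\diag(\pi)X)^{-1}]$ is finite at $\pi^*$, the matrix $\Sigma_* = X^\top\diag(\pi^*)X$ must be invertible, which forces the rows $\{x_j : j\in S_0\}$ to span $\mathbb R^p$. Hence $X_{S_0}^\top X_{S_0}$ has full column rank, and by the standard invariant underlying the greedy-removal lemma, full rank is preserved at every subsequent step as long as $|S_t|>p$; in particular the assumption $k>p$ ensures the algorithm terminates at a valid $\hat S$ with $X_{\hat S}^\top X_{\hat S}$ invertible.

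Next I would apply Corollary \ref{cor:s0}, which uses the fact that $\|\pi^*\|_\infty\leq 1$ under the without-replacement version of \eqref{eq_c_opt} to compare $\tr[(X_{S_0}^\top X_{S_0})^{-1}]$ with $f(\pi^*;X) = f_2^*(k;X)$. This yields
\[
\tr[(X_{\hat S}^\top X_{\hat S})^{-1}] \;\leq\; \frac{\|\pi^*\|_0 - p + 1}{k - p + 1}\, f_2^*(k;X).
\]
Plugging in Lemma \ref{lem:support-bound}, which under Assumption \ref{asmp:general-position} gives $\|\pi^*\|_0 \leq k + \tfrac{p(p+1)}{2}$, the multiplicative constant simplifies:
\[
\frac{\|\pi^*\|_0 - p + 1}{k - p + 1} \;\leq\; \frac{k + \tfrac{p(p+1)}{2} - p + 1}{k - p + 1} \;=\; 1 + \frac{p(p+1)}{2(k-p+1)}.
\]
Multiplying through by $\sigma^2$ and invoking the OLS MSE identity finishes the proof.

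I do not see any substantive obstacle in this concluding theorem itself: the real work has been pushed into Lemma \ref{lem:support-bound} (the KKT-based support bound), Corollary \ref{cor:s0} (the greedy-removal bound applied at the optimal support), and Theorem \ref{thm:minimax} (identifying $\sigma^2 f_2^*(k;X)$ as the lower bound against which we compare). The only two minor points deserving a sentence of care in writing are (i) the argument that $X_{\hat S}^\top X_{\hat S}$ remains invertible throughout the greedy procedure when $k>p$, and (ii) noting that $\|\pi^*\|_\infty\leq 1$ is exactly the constraint added in the without-replacement version of \eqref{eq_c_opt}, which is why Corollary \ref{cor:s0} is applicable here and not in the with-replacement setting.
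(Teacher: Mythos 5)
Your proposal matches the paper's own argument: the paper derives Theorem \ref{thm:greedy} precisely by combining Corollary \ref{cor:s0} (greedy removal started from $S_0=\supp(\pi^*)$, using $\|\pi^*\|_\infty\leq 1$) with the support bound $\|\pi^*\|_0\leq k+\tfrac{p(p+1)}{2}$ of Lemma \ref{lem:support-bound}, together with the standard OLS identity $\mathbb E[\|\hat\beta-\beta_0\|_2^2\mid X_{\hat S}]=\sigma^2\tr[(X_{\hat S}^\top X_{\hat S})^{-1}]$. Your arithmetic simplification of the ratio and the brief well-posedness remarks are consistent with what the paper does, so the proof is correct and essentially identical in approach.
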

Under a slightly stronger condition that $k> 2p$, the approximation ratio $C(n,k,p)=1+\frac{p(p+1)}{2(k-p+1)}$ can be simplified to
$C(n,k,p)=1+O(p^2/k)$.
In addition, $C(n,k,p)=1+o(1)$ if $p^2/k\to 0$,
meaning that near-optimal experiment selection is achievable with computationally tractable methods
if $O(p^2)$ design points are allowed in the selected subset.

\subsection{Interpretable subsampling example: anisotropic Gaussian design}

Even though we consider a fixed pool of design points so far, here we use an anisotropic Gaussian design example to demonstrate that a non-uniform sampling can outperform uniform sampling even 
under random designs, and to interpret the conditions required in previous analysis. Let
 $x_1,\cdots,x_n$ be i.i.d.~distributed according to an isotropic Gaussian distribution $\mathcal N_p(0,\Sigma_0)$.

We first show that non-uniform weights $\pi_i\neq k/n$ could improve the objective $f(\pi;X)$.
Let $\pi^{\unif}$ be the uniformly weighted solution of $\pi^{\unif}_i=k/n$, corresponding to selecting each row of $X$ uniformly at random.
We then have 
$$
f(\pi^{\unif};X) = \frac{1}{k}\tr\left[\left(\frac{1}{n}X^\top X\right)^{-1}\right] \overset{p}{\to} \frac{1}{k}\tr(\Sigma_0^{-1}).
$$
On the other hand, let $B^2=\gamma\tr(\Sigma_0)$ for some universal constant $\gamma>1$ and $\mathbb B=\{x\in\mathbb R^p: \|x\|_2^2\leq B^2\}$,
$\mathbb X=\{x_1,\cdots,x_n\}$.
By Markov inequality, $|\mathbb X\cap\mathbb B|\gtrsim \frac{\gamma n}{1-\gamma}$.
Define weighted solution $\pi^w$ as $\pi_i^w \propto 1/p(x_i|\mathbb B)\cdot I[x_i\in\mathbb B]$ normalized such that $\|\pi_i^w\|_1=k$.
\footnote{Note that $\pi^*$ may \emph{not} be the optimal solution of Eq.~(\ref{eq_c_opt}); however, it suffices for the purpose of the demonstration of improvement resulting from non-uniform weights.}
Then
\begin{multline*}
f(\pi^w;X) = \frac{1}{k}\frac{\tr\left[\left(\frac{1}{n}\sum_{x_i\in\mathbb X\cap\mathbb B}{\frac{x_ix_i^\top}{p(x_i|\mathbb B)}}\right)^{-1}\right]}{\frac{1}{n}\sum_{x_i\in\mathbb X\cap\mathbb B}{1/p(x_i|\mathbb B)}}\\
\overset{p}{\to} \frac{1}{k}\frac{n}{|\mathbb X\cap\mathbb B|}\frac{\tr\left[\left(\int_{\mathbb B}xx^\top\ud x\right)^{-1}\right]}{\int_{\mathbb B}1\ud x}
\lesssim \frac{p^2}{(1-\gamma)\tr(\Sigma_0)}.
\end{multline*}
Here in the last inequality we apply Lemma \ref{lem:integration}.
Because $\frac{\tr(\Sigma_0^{-1})}{p} = \frac{1}{p}\sum_{i=1}^p{\frac{1}{\sigma_i(\Sigma_0)}} \geq \left(\frac{1}{p}\sum_{i=1}^p{\sigma_i(\Sigma_0)}\right)^{-1}=\frac{p}{\tr(\Sigma_0)}$
by Jensen's inequality, we conclude that in general $f(\pi^w;X) < f(\pi^{\unif};X)$, and the gap is larger for
ill-conditioned covariance $\Sigma_0$.
This example shows that uneven weights in $\pi$ helps reducing the trace of inverse of the weighted covariance $X^\top\diag(\pi)X$. 

Under this model, we also simplify the conditions for the without replacement model in theorem \ref{thm:sampling_expected} and \ref{thm:sampling_deterministic}.
Because $x_1,\cdots,x_n\overset{i.i.d.}{\sim}\mathcal N_p(0,\Sigma_0)$,
it holds that $\|X\|_{\infty}^2 \leq O_\mP(\|\Sigma_0\|_2^2p\log n)$.
In addition, by simple algebra $\|\Sigma_*^{-1}\|_2 \leq  p^{-1}\kappa(\Sigma_*)\tr(\Sigma_*^{-1})
\leq p^{-1}\kappa(\Sigma_*)f_1^*(k;X)$.
Using a very conservative upper bound of $f_1^*(k;X)$ by sampling rows in $X$ uniformly at random and apply weak law of large numbers
and the continuous mapping theorem, we have that
$f_1^*(k;X)\lesssim \frac{1}{k}\tr(\Sigma_0^{-1})$.
In addition, $\tr(\Sigma_0^{-1})\|\Sigma_0\|_2\leq p\|\Sigma_0^{-1}\|_2\|\Sigma_0\|_2 = p\kappa(\Sigma_0)$.
Subsequently, the condition $\|\Sigma_*^{-1}\|_2\kappa(\Sigma_*)\|X\|_{\infty}^2\log p = O(\epsilon^2)$ is implied by
\begin{equation}
\frac{p\kappa(\Sigma_*)^2\kappa(\Sigma_0)\log p\log n}{k} = O(\epsilon^2).
\label{eq:gaussian-condition}
\end{equation}
Essentially, the condition is reduced to $k\gtrsim \kappa(\Sigma_0)\kappa(\Sigma_*)\cdot p\log n\log p$.
The linear dependency on $p$ is necessary, as we consider the low-dimensional linear regression problem and $k<p$ would imply an infinite 
mean-square error in estimation of $\beta_0$.
We also remark that the condition is scale-invariant, as $X'=\xi X$ and $X$ share the same quantity $\|\Sigma_*^{-1}\|_2\kappa(\Sigma_*)\|X\|_{\infty}^2\log k$.

\subsection{Extensions}

We discuss possible extension of our results beyond estimation of $\beta_0$ in the linear regression model.

\subsubsection{Generalized linear models}\label{subsec:glm}
In a generalized linear model $\mu(x)=\mathbb E[Y|x]$ satisfies $g(\mu(x))=\eta=x^\top\beta_0$
for some known link function $g:\mathbb R\to\mathbb R$.
Under regularity conditions \citep{van2000asymptotic}, the maximum-likelihood estimator $\hat\beta_n\in\argmax_{\beta}\{\sum_{i=1}^n{\log p(y_i|x_i;\beta)}\}$
satisfies $\mathbb E\|\hat\beta_n-\beta_0\|_2^2 = (1+o(1))\tr(I(X,\beta_0)^{-1})$, where
$I(X,\beta_0)$ is the Fisher's information matrix:
\begin{equation}
I(X,\beta_0) = -\sum_{i=1}^n{\mathbb E\frac{\partial^2\log p(y_i|x_i;\beta_0)}{\partial\beta\partial\beta^\top}}
= -\sum_{i=1}^n{\left(\mathbb E\frac{\partial^2\log p(y_i;\eta_i)}{\partial\eta_i^2}\right)x_ix_i^\top}.
\label{eq:glm_fisher}
\end{equation}
Here both expectations are taken over $y$ conditioned on $X$ and the last equality is due to the sufficiency of $\eta_i=x_i^\top\beta_0$.
The experiment selection problem is then formulated to select a subset $S\subseteq[n]$ of size $k$, either with or without duplicates,
that minimizes $\tr(I(X_S,\beta_0)^{-1})$.

It is clear from Eq.~(\ref{eq:glm_fisher}) that the optimal subset $S^*$ depends on the unknown parameter $\beta_0$, which itself is to be estimated.
This issue is known as the \emph{design dependence} problem for generalized linear models \citep{khuri2006design}.
One approach is to consider \emph{locally optimal designs} \citep{khuri2006design,chernoff1953locally}, where a consistent estimate $\check\beta$ of $\beta_0$
is first obtained on an initial design subset \footnote{Notice that a consistent estimate can be obtained using much fewer points than an estimate with finite approximation guarantee.}
and then $\check\eta_i=x_i^\top\check\beta$ is supplied to compute a more refined design subset
to get the final estimate $\hat\beta$.
With the initial estimate $\check\beta$ available, one may apply transform $x_i\mapsto \tilde x_i$ defined as
$$
\tilde x_i=\sqrt{-\mathbb E\frac{\partial^2\log p(y_i;\check\eta_i)}{\partial\eta^2}}x_i.
$$
Note that under regularity conditions $-\mathbb E\frac{\partial^2\log p(y_i;\check\eta_i)}{\partial\eta_i^2}=\mathbb E\left(\frac{\partial\log(y_i;\check x_i)}{\partial\eta_i}\right)^2$ is non-negative and hence the square-root is well-defined.
All results in Theorems \ref{thm:sampling_expected}, \ref{thm:sampling_deterministic} and \ref{thm:greedy} are valid with $X=[x_1,\cdots,x_n]^\top$ replaced by $\widetilde X=[\tilde x_1,\cdots,\tilde x_n]^\top$ for generalized linear models.
Below we consider two generalized linear model examples and derive explicit forms of $\widetilde X$.

\paragraph{Example 1: Logistic regression}
In a logistic regression model responses $y_i\in\{0,1\}$ are binary and the likelihood model is
$$
p(y_i;\eta_i) = \psi(\eta_i)^{y_i}(1-\psi(\eta_i))^{1-y_i}, \;\;\;\;\;\;\text{where}\;\;\;\;
\psi(\eta_i) = \frac{e^{\eta_i}}{1+e^{\eta_i}}.
$$
Simple algebra yields
$$
\widetilde x_i = \sqrt{\frac{e^{\check\eta_i}}{(1+e^{\check\eta_i})^2}}x_i,
$$
where $\check\eta_i=x_i^\top\check\beta$.

\paragraph{Example 2: Poisson count model} In a Poisson count model the response variable $y_i$ takes values of non-negative integers
and follows a Poisson distribution with parameter $\lambda=e^{\eta_i}=e^{x_i^\top\beta_0}$.
The likelihood model is formally defined as
$$
p(y_i=r;\eta_i) = \frac{e^{\eta_i r}e^{-e^{\eta_i}}}{r!}, \;\;\;\;\;\; r=0,1,2,\cdots.
$$
Simple algebra yields
$$
\widetilde x_i = \sqrt{e^{\check\eta_i}}x_i,
$$
where $\check\eta_i=x_i^\top\check\beta$.

\subsubsection{Delta's method}
Suppose $g(\beta_0)$ is the quantity of interest, where $\beta_0\in\mathbb R^p$ is the parameter in a linear regression model
and $g:\mathbb R^{p}\to\mathbb R^m$ is some known function.
Let $\hat\beta_n=(X^\top X)^{-1}X^\top y$ be the OLS estimate of $\beta_0$.
If $\nabla g$ is continuously differentiable and $\hat\beta_n$ is consistent, then by the classical delta's method \citep{van2000asymptotic}
$\mathbb E\|g(\hat\beta_n)-g(\beta_0)\|_2^2 = (1+o(1))\sigma^2\tr(\nabla g(\beta_0)(X^\top X)^{-1}\nabla g(\beta_0)^\top)=(1+o(1))\sigma^2\tr(G_0(X^\top X)^{-1})$,
where $G_0=\nabla g(\beta_0)^\top\nabla g(\beta_0)$.
If $G_0$ depends on the unknown parameter $\beta_0$ then the design dependence problem again exists, 
and a locally optimal solution can be obtained by replacing $G_0$ in the objective function with $\check G=\nabla g(\check\beta)^\top\nabla g(\check\beta)$
for some initial estimate $\check\beta$ of $\beta_0$.

If $\check G$ is invertible, then there exists invertible $p\times p$ matrix $\check P$ such that $\check G=\check P\check P^\top$ because 
$\check G$ is positive definite.
Applying the linear transform
$$
x_i \mapsto \tilde x_i = \check P^{-1}x_i
$$
we have that $\tr[G_0(X^\top X)^{-1}]=\tr[(\widetilde X^\top\widetilde X)^{-1}]$, where $\widetilde X=[\tilde x_1,\cdots,\tilde x_n]^\top$.
Our results in Theorems \ref{thm:sampling_expected}, \ref{thm:sampling_deterministic} and \ref{thm:greedy} remain valid by operating on the transformed matrix 
$\widetilde X=X\check P^{-\top}$.

{
\paragraph{Example: prediction error.}
In some application scenarios the \emph{prediction error} $\|Z\hat\beta-Z\beta_0\|_2^2$
rather than the estimation error $\|\hat\beta-\beta_0\|_2^2$ is of interesting, either because
the linear model is used mostly for prediction or component of the underlying model $\beta_0$ lack physical interpretations.
Another interesting application is the \emph{transfer learning} \citep{pan2010survey}, in which the training and testing data have different designs (e.g., $Z$ instead of $X$)
but share the same conditional distribution of labels, parameterized by the linear model $\beta_0$.

Suppose $Z\in\mathbb R^{m\times p}$ is a known full-rank data matrix upon which predictions are seeked,
and define $\hat\Sigma_Z=\frac{1}{m}Z^\top Z\succ 0$ to be the sample covariance of $Z$.
Our algorithmic framework as well as its corresponding analysis remain valid for such prediction problems
with transform $x_i\mapsto \hat\Sigma_Z^{-1/2}x_i$.
In particular, the guarantees for the greedy algorithm and the with replacement sampling algorithm remain unchanged, 
and the guarantee for the without replacement sampling algorithm is valid as well, except that the 
$\|\Sigma_*^{-1}\|_2$ and $\kappa(\Sigma_*)$ terms have to be replaced by the (relaxed) optimal sample covariance after the linear transform $x_i\mapsto\hat\Sigma_Z^{-1/2}x_i$.
}

\section{Numerical results on synthetic data}\label{subsec:synthetic}


\begin{figure}[p]
\centering
\includegraphics[width=5cm]{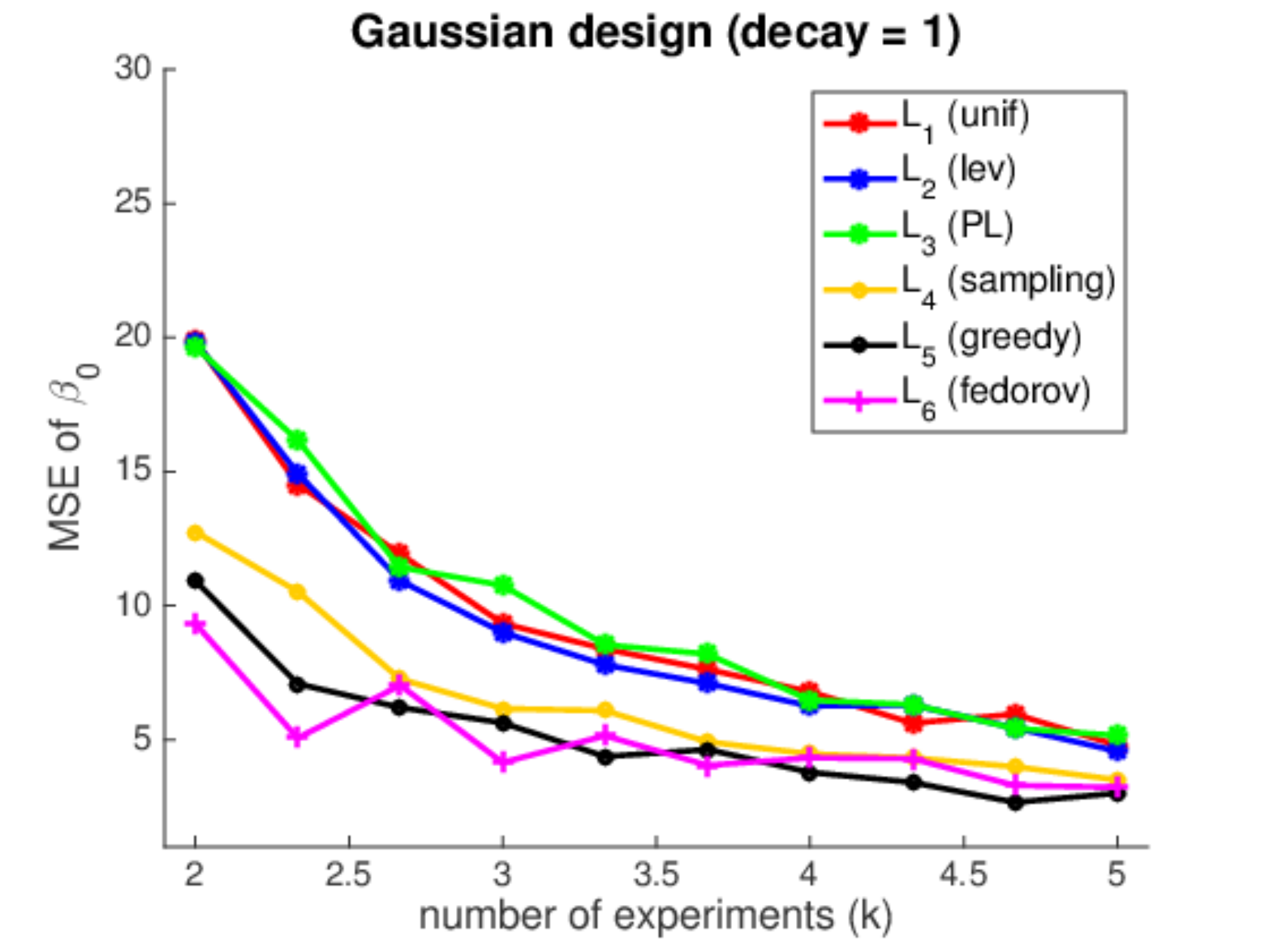}
\includegraphics[width=5cm]{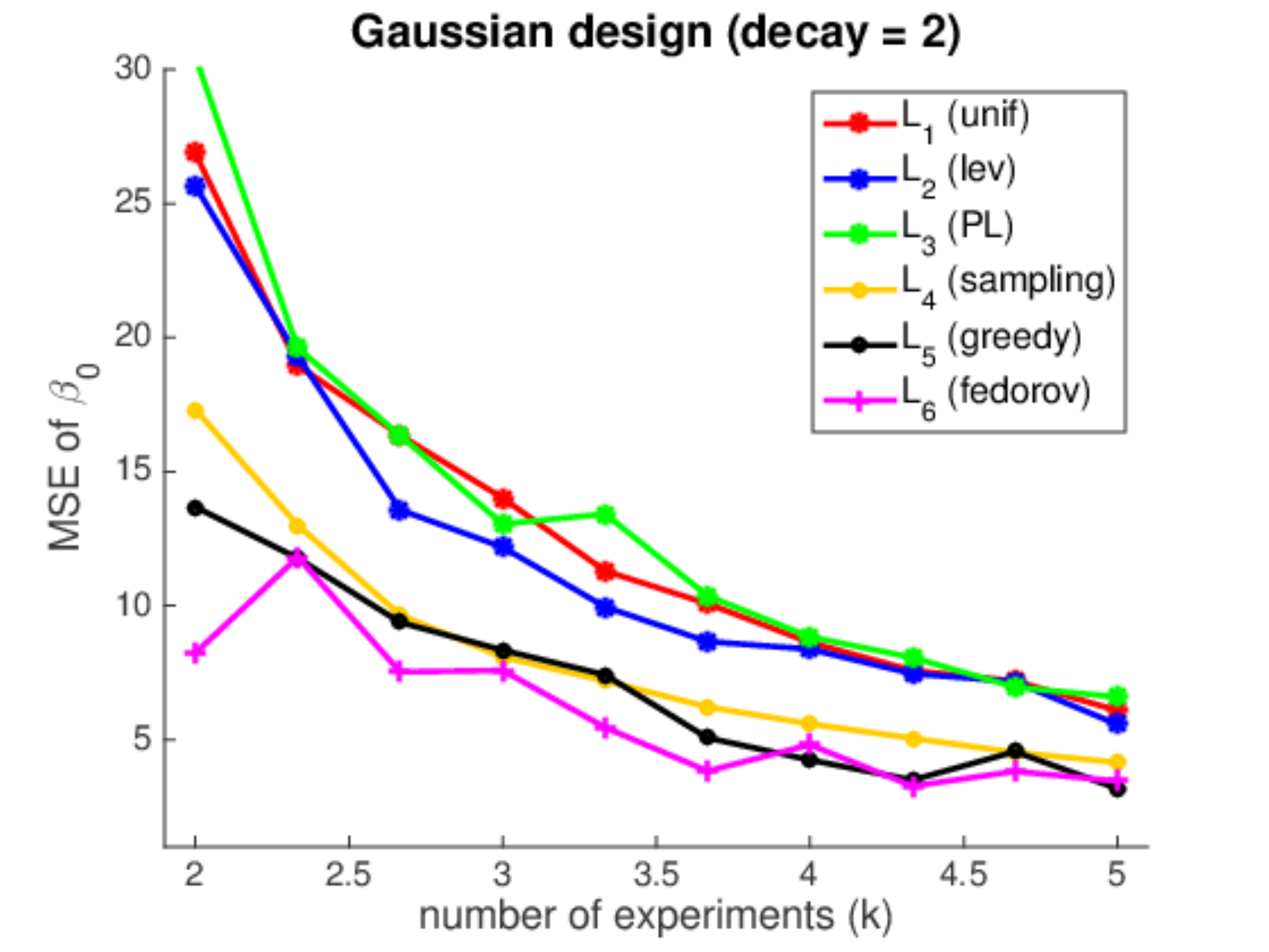}
\includegraphics[width=5cm]{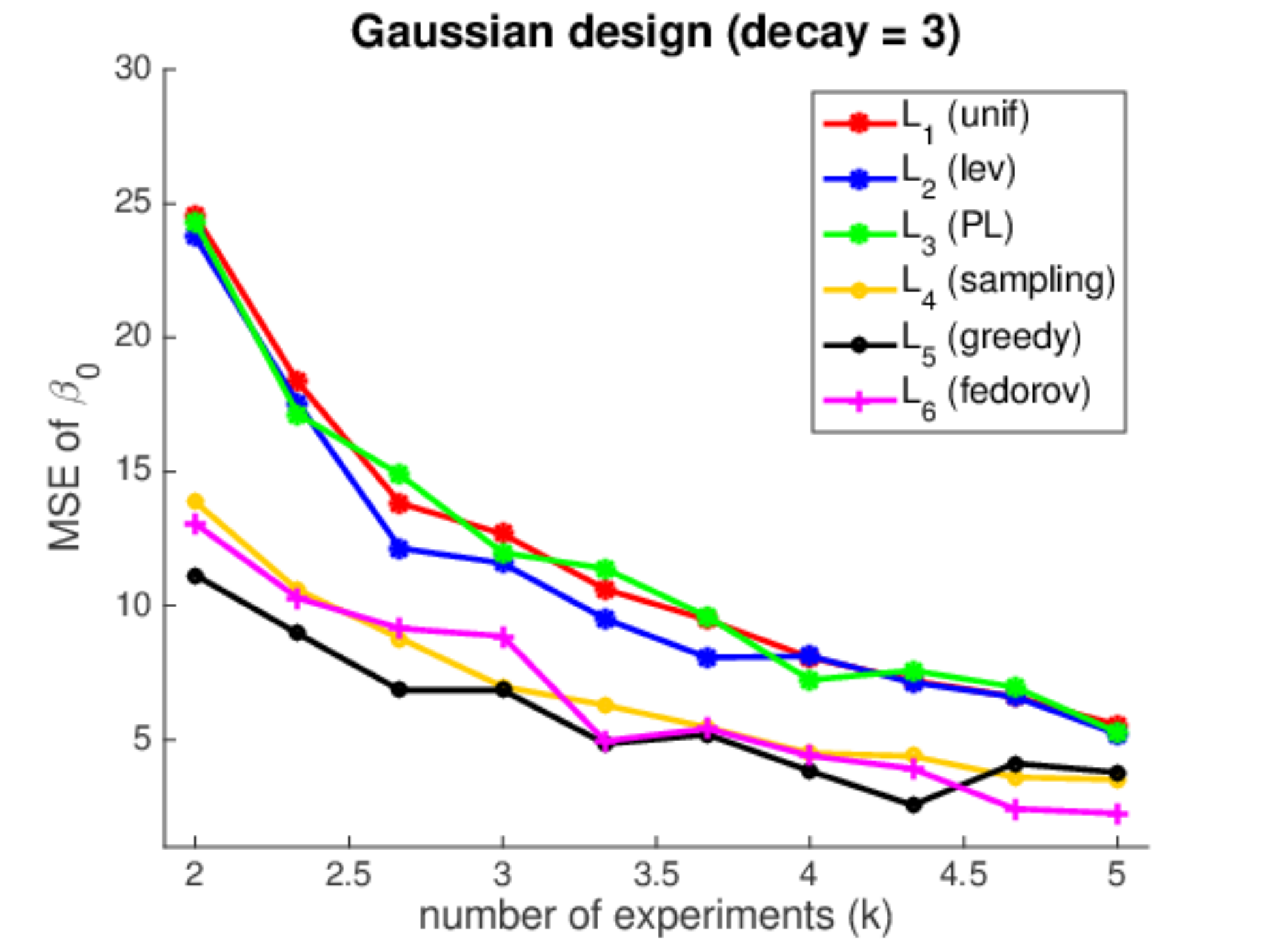}
\includegraphics[width=5cm]{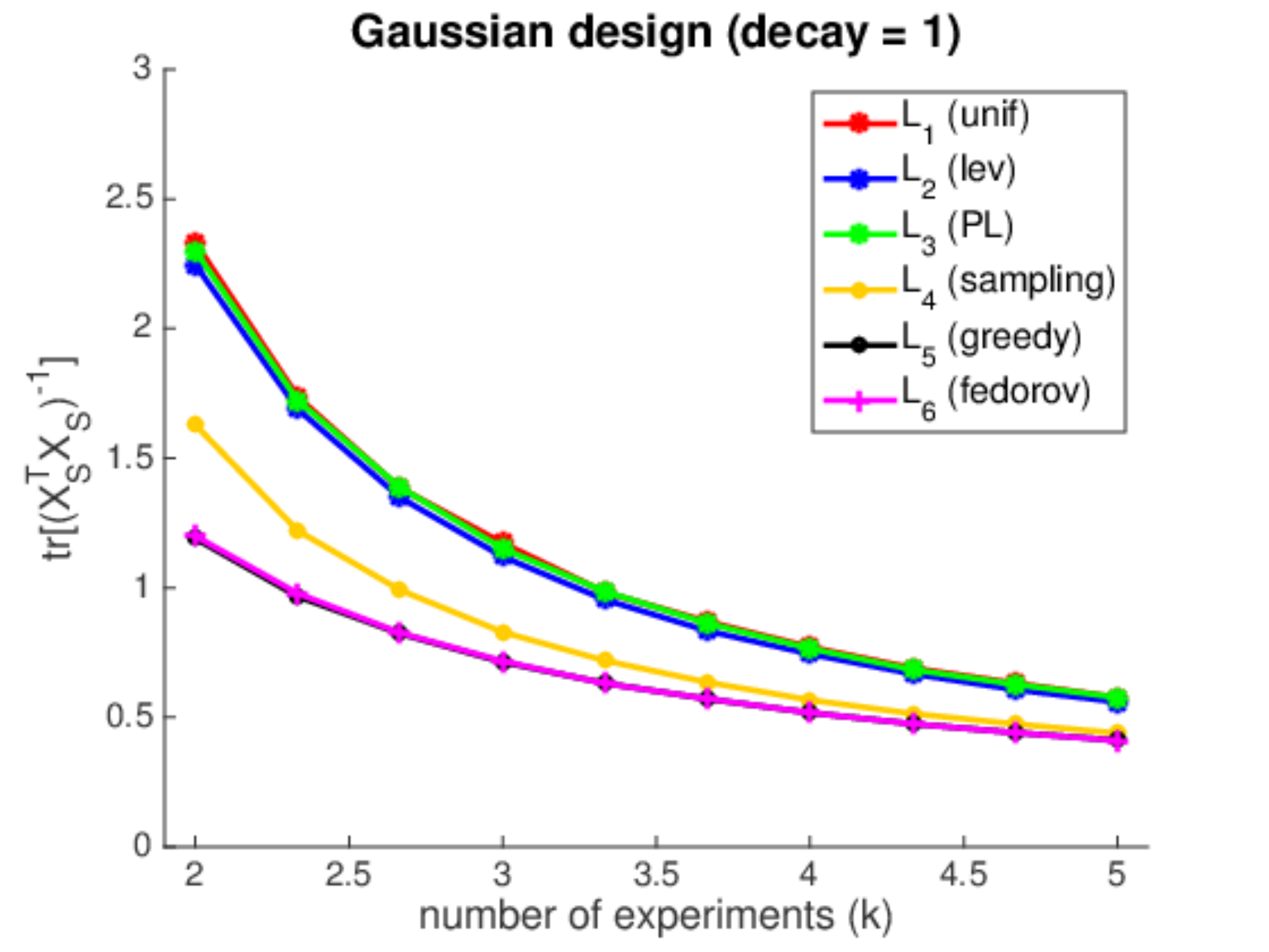}
\includegraphics[width=5cm]{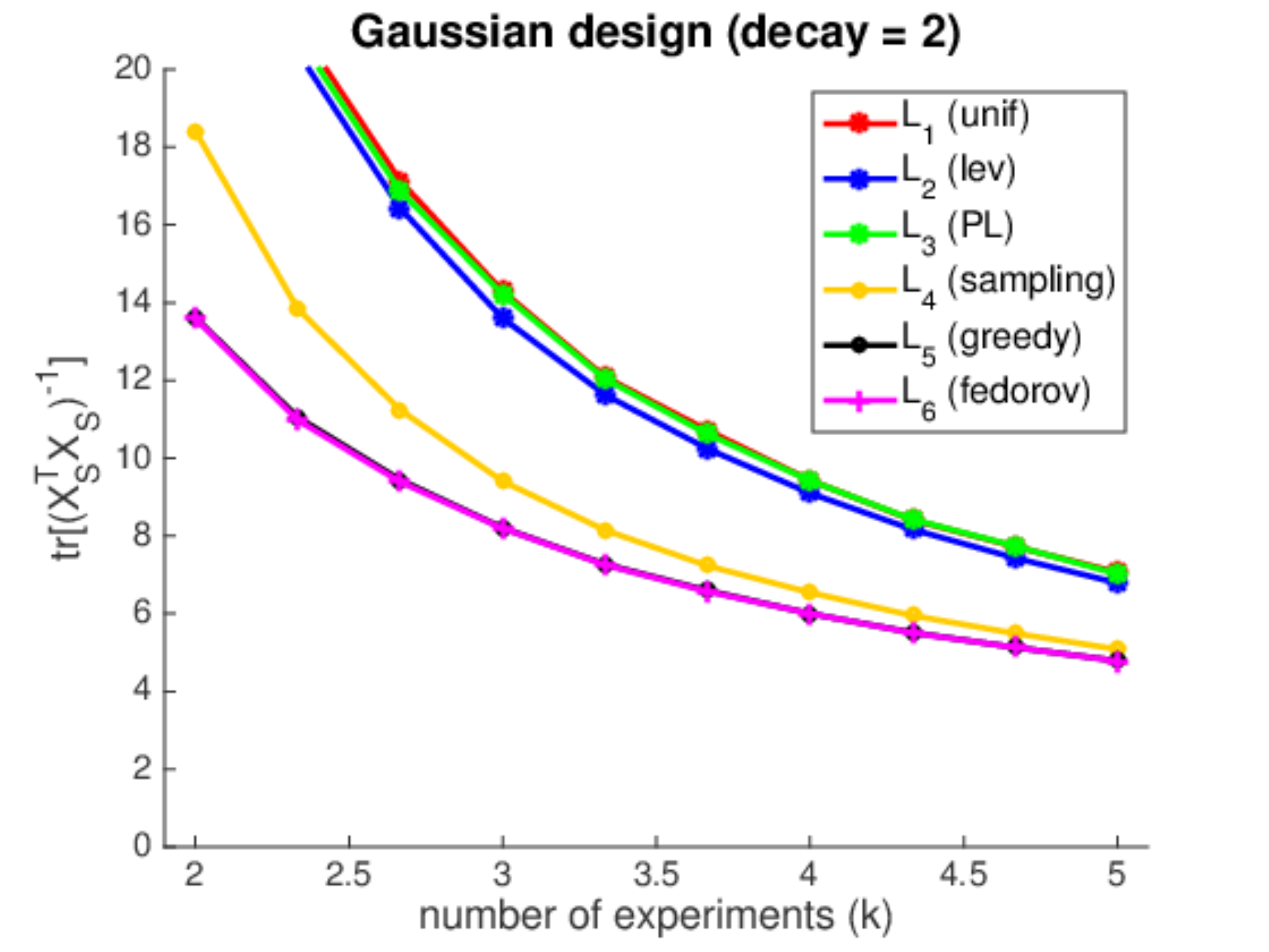}
\includegraphics[width=5cm]{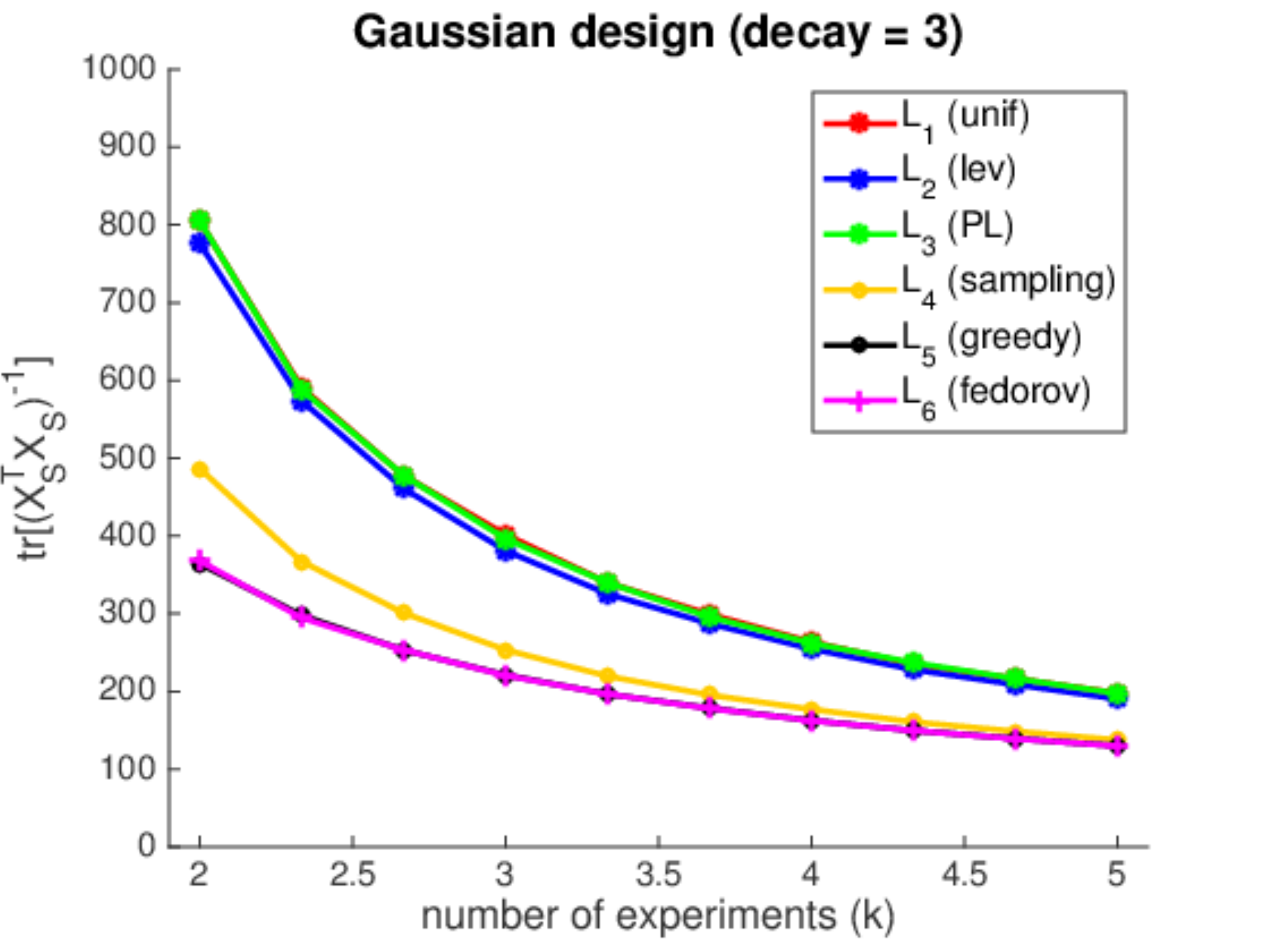}
\vskip 0.5in
\includegraphics[width=5cm]{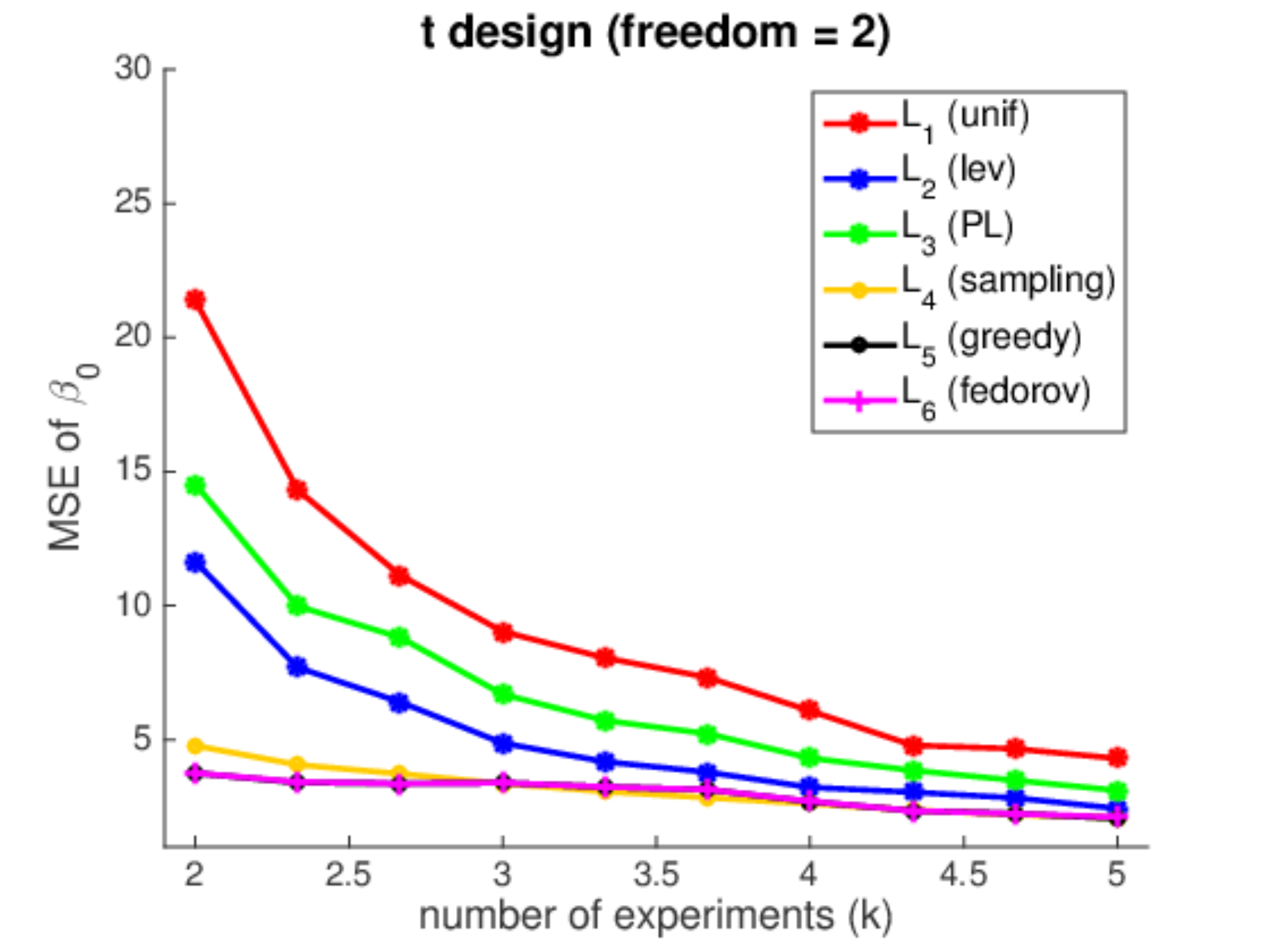}
\includegraphics[width=5cm]{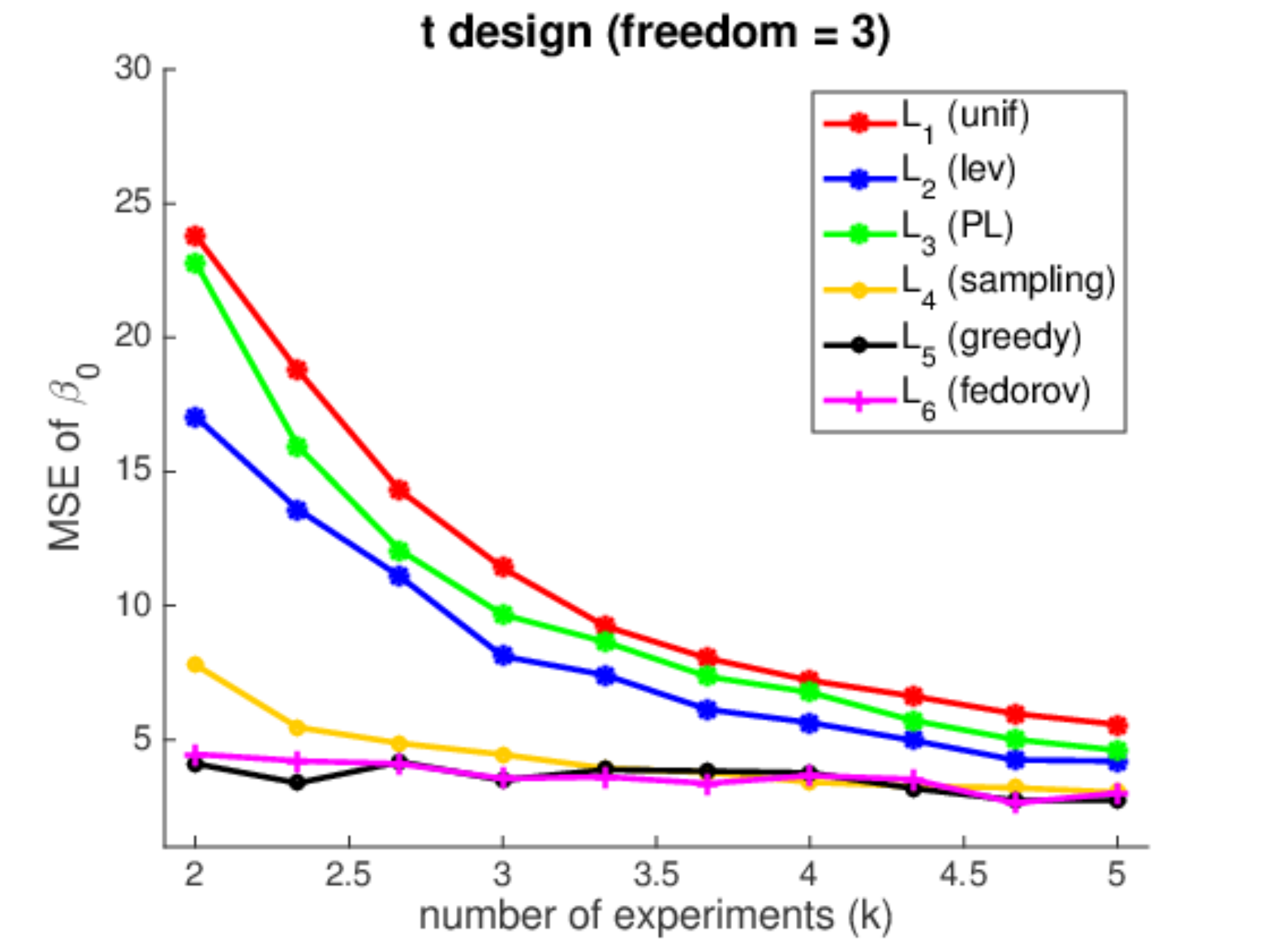}
\includegraphics[width=5cm]{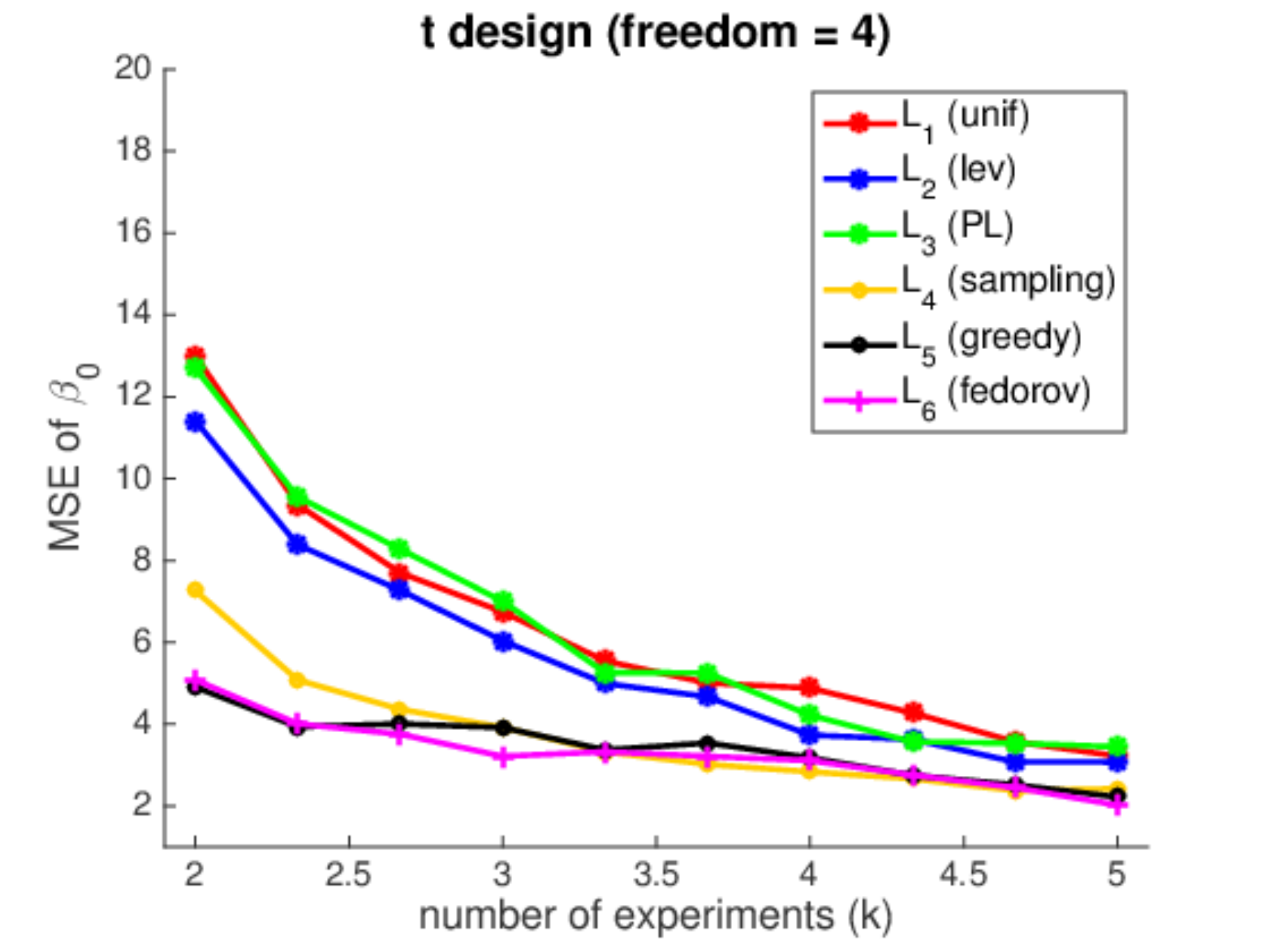}
\includegraphics[width=5cm]{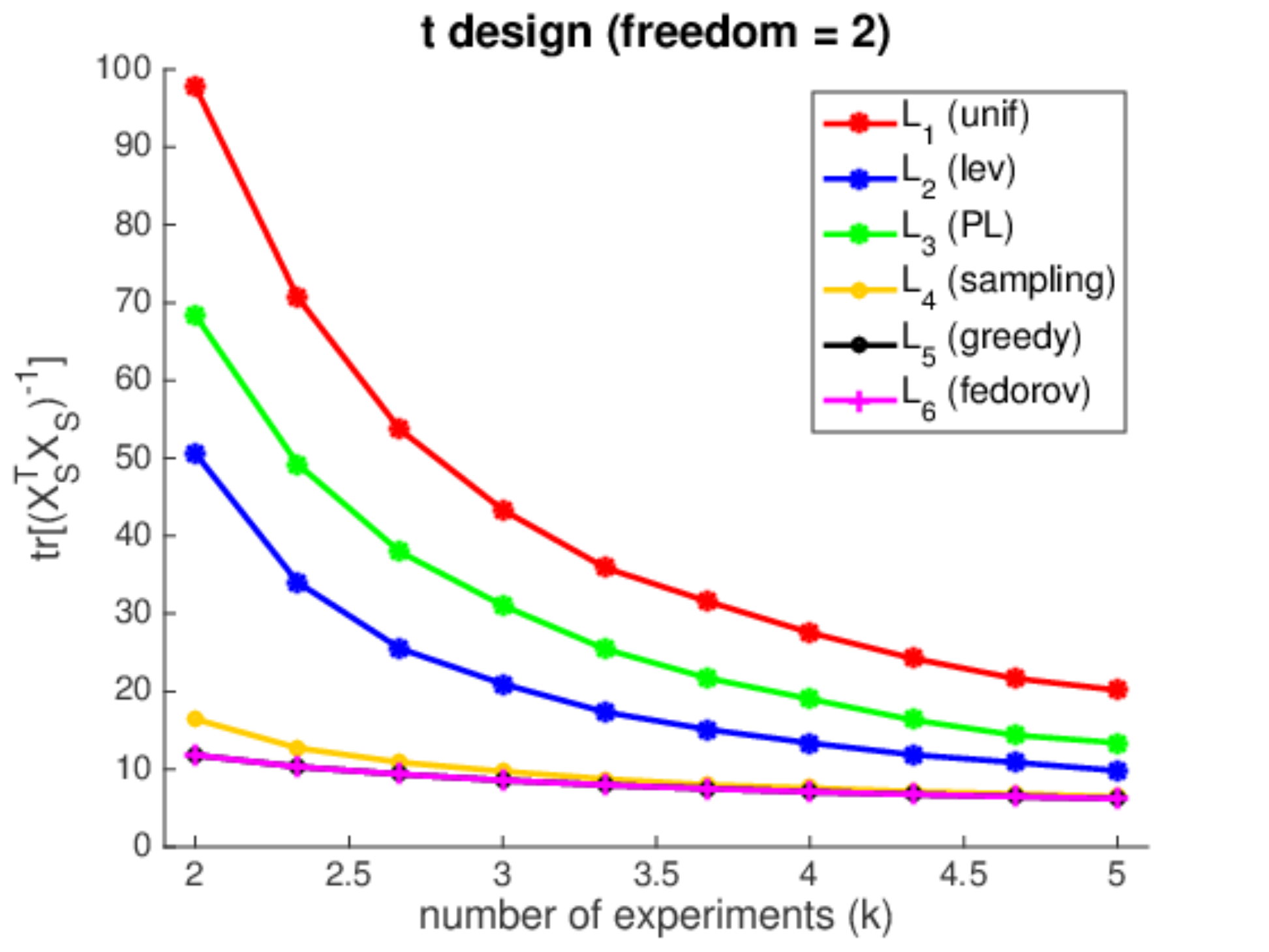}
\includegraphics[width=5cm]{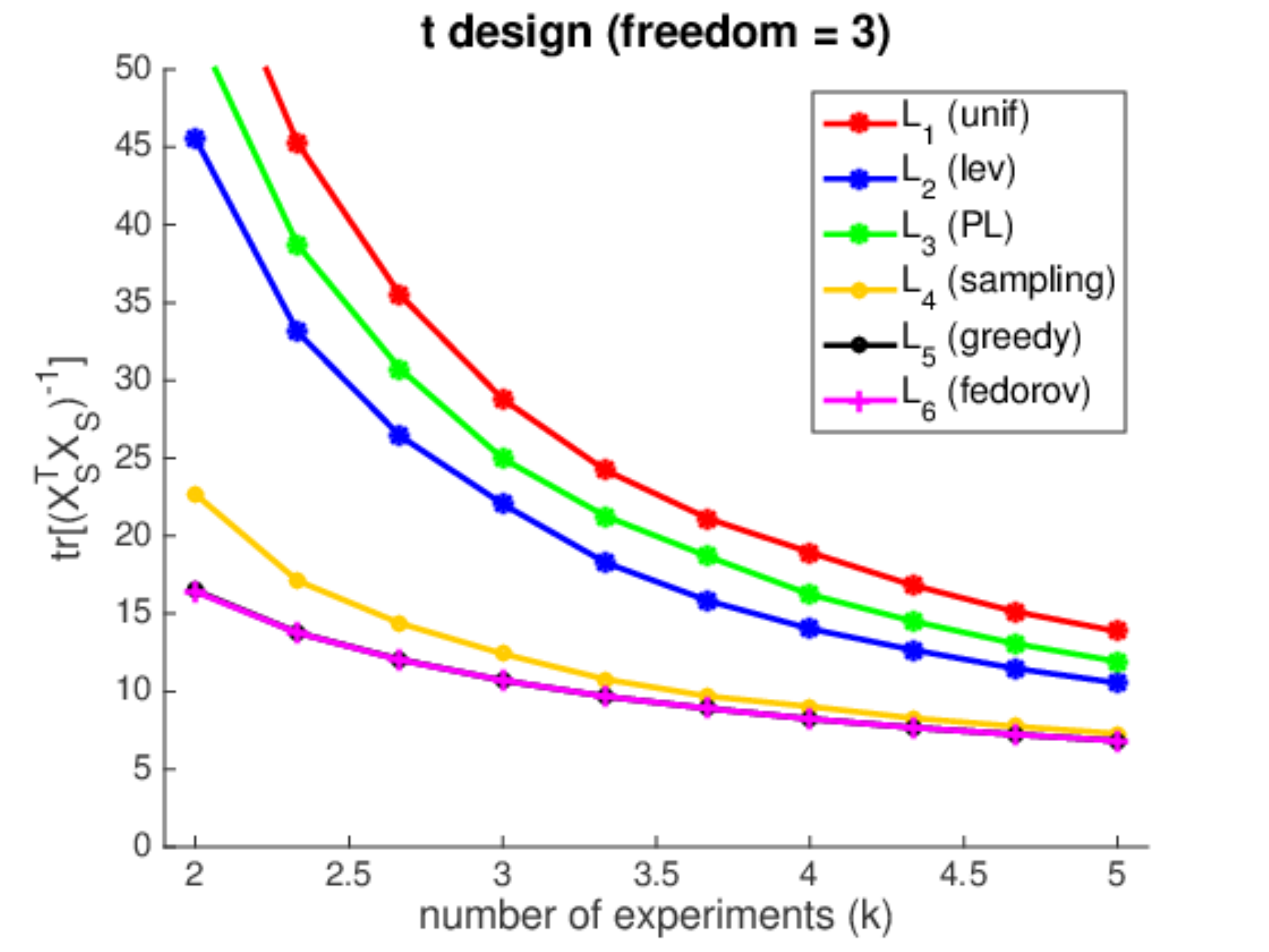}
\includegraphics[width=5cm]{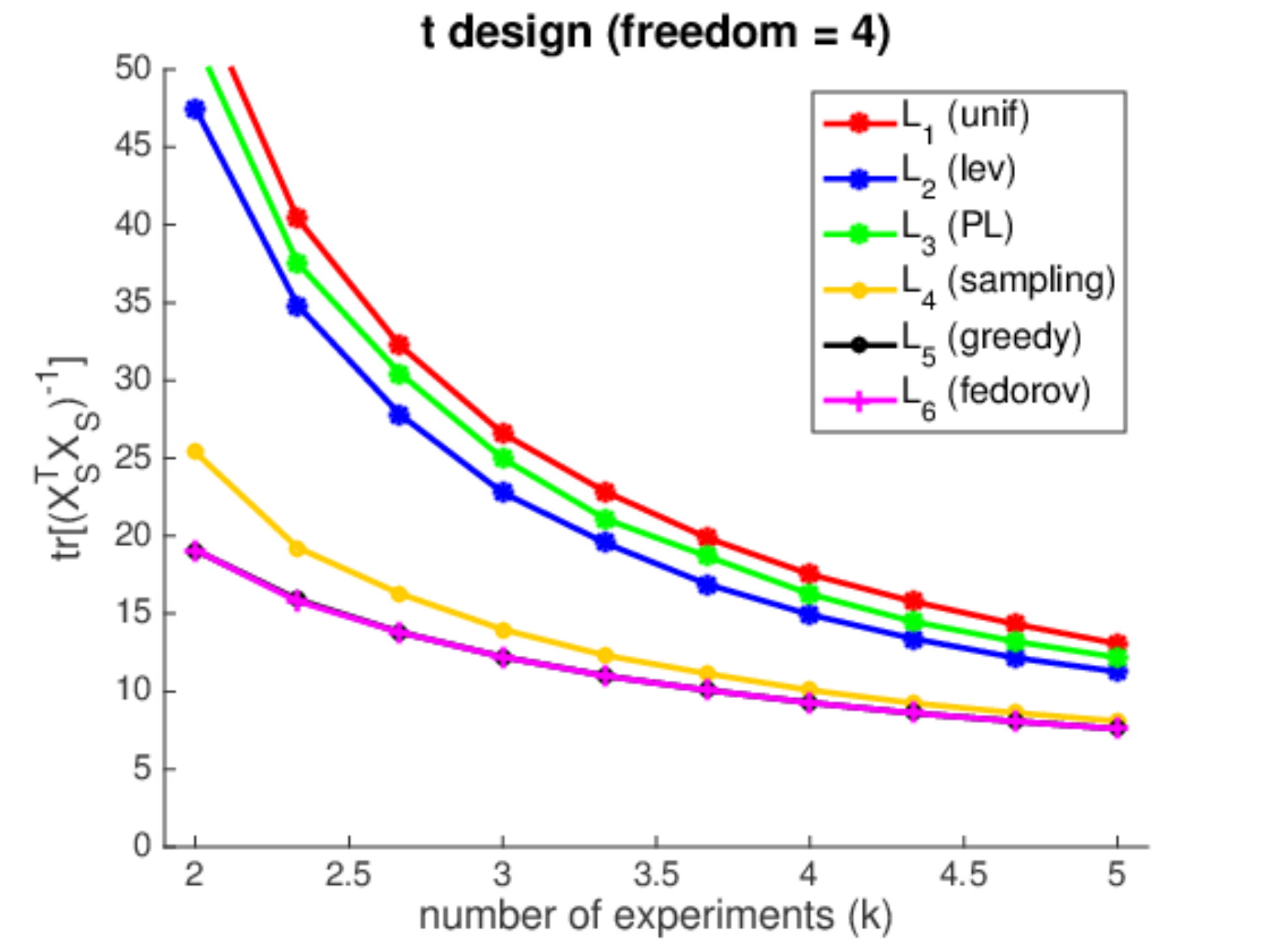}
\caption{\small The ratio of MSE $\|\hat\beta-\beta_0\|_2^2$ compared against the MSE of the OLS estimator on the full data pool $\|\hat\beta^{\mathrm{ols}}-\beta_0\|_2^2$, $\hat\beta^{\mathrm{ols}}=(X^\top X)^{-1}X^\top y$,
and objective values $F(\hat S;X)=\tr[(X_{\hat S}^\top X_{\hat S})^{-1}]$ on synthetic data sets. 
{Note that the $L_5$ (greedy) curve is under the $L_6$ (federov) curve when not visible.}
See Sec.~\ref{subsec:synthetic} for details of experimental settings.}
\label{fig:synthetic}
\end{figure}

\begin{table}[t]
\centering
\caption{Running time (seconds) / no. of iterations for $L_5^*$ and $L_6^{\dagger}$.}
\begin{tabular}{lccccc}
\hline
& \multicolumn{5}{c}{Isotropic design ($t$ distribution), $\mathrm{df}=1$.}\\
$k=$& $2p$&  $4p$&  $6p$&  $8p$&  $10p$\\
\hline
$L_5^*$& 1.8/31& 0.8/19& 1.7/26& 0.9/14& 0.3/9\\
$L_6^\dagger$&  140.1/90& 425.9/156& 767.5/216& 993.7/245& 1077/253 \\
\hline
& \multicolumn{5}{c}{Isotropic design ($t$ distribution), $\mathrm{df}=3$.}\\
$k=$& $2p$&  $4p$&  $6p$&  $8p$&  $10p$\\
\hline
$L_5^*$& 0.6/14& 0.4/8& 0.3/7& 0.2/5& 0.2/5 \\
$L_6^\dagger$& 158.3/104& 438.0/161& 802.5/223& 985.2/242& 1105/252  \\
\hline
& \multicolumn{5}{c}{Skewed design (multivariate Gaussian), $\alpha=3$.}\\
$k=$& $2p$&  $4p$&  $6p$&  $8p$&  $10p$\\
\hline
$L_5^*$& 0.8/16& 0.7/12& 0.5/9& 0.4/8& 0.4/8\\
$L_6^\dagger$& 182.9/120& 487.1/180& 753.4/212& 935.8/230& 1057/250\\
\hline
\end{tabular}
\label{tab:time}
\end{table}

We report selection performance (measured in terms of {$F(S;X) : = \tr[(X_S^\top X_S)^{-1}]$ which is 
the mean squared error of the ordinary least squares estimator based on $(X_S, y_S)$ in estimating the regression coefficients}) on synthetic data.
Only the without replacement setting is considered, and results for the with replacement setting are similar.
In all simulations, the experimental pool (number of given design points) $n$ is set to $1000$, number of variables $p$ is set to $50$,
number of selected design points $k$ ranges from $2p$ to $10p$.
For randomized methods, we run them for 20 independent trials under each setting and report the median.
Though Fedorov's exchange algorithm is randomized in nature (outcome depending on the initialization),
we only run it once for each setting because of its expensive computational requirement.
It is observed that in practice, the algorithm's outcome is not sensitive to initializations.

\subsection{Data generation}
The design pool $X$ is generated so that each row of $X$ is sampled from some underlying distribution.
We consider two distributional settings for generating $X$.
Similar settings were considered in \citep{levscore-regression} for subsampling purposes.

\begin{enumerate}

\item \emph{Distributions with skewed covariance}:
each row of $X$ is sampled i.i.d.~from a multivariate Gaussian distribution $\mathcal N_p(0,\Sigma_0)$
with $\Sigma_0=U\Lambda U^\top$, where $U$ is a random orthogonal matrix and $\Lambda=\diag(\lambda_1,\cdots,\lambda_p)$ controls the skewness or conditioning 
of $X$. A power-law decay of $\lambda$, $\lambda_j=j^{-\alpha}$, is imposed, with small $\alpha$ corresponding to ``flat'' distribution
and large $\alpha$ corresponding to ``skewed'' distribution.
Rows 1-2 in Figure \ref{fig:synthetic} correspond to this setting.

\item\emph{Distributions with heavy tails}:
We use $t$ distribution as the underlying distribution for generating each entry in $X$,
with degrees of freedom ranging from $2$ to $4$.
These distributions are heavy tailed and high-order moments of $X$ typically do not exist.
They test the robustness of experiment selection methods.
Rows 3-4 in Figure \ref{fig:synthetic} correspond to this setting.


\end{enumerate}

%

\subsection{Methods}
The methods that we compare are listed below:
\begin{itemize}
\item[-] $L_1$ (uniform sampling): each row of $X$ is sampled uniformly at random, without replacement.
\item[-] $L_2$ (leverage score sampling): each row of $X$, $x_i$, is sampled without replacement, with probability proportional to its leverage score
$x_i^\top(X^\top X)^{-1}x_i$.
This strategy is considered in \cite{levscore-regression} for subsampling in linear regression models.
\item[-] $L_3$ (predictive length sampling): each row of $X$, $x_i$, is sampled without replacement, with probability proportional to its $\ell_2$ norm $\|x_i\|_2$.
This strategy is derived in \cite{optimal-subsampling}.
\item[-] $L_4^*$ (sampling based selection with $\pi^*$): the sampling based method that is described in Sec.~\ref{subsec:sampling_deterministic},
based on $\pi^*$, the optimal solution of Eq.~(\ref{eq_c_opt}).
We consider the hard size constraint algorithm only.
\item[-] $L_5^*$ (greedy based selection with $\pi^*$): the greedy based method that is described in Sec.~\ref{subsec:greedy}.
\item[-] $L_6^\dagger$ (Fedorov's exchange algorithm): the Fedorov's exchange algorithm \citep{fedorov-exchange}
with the A-optimality objective. 
Note that this method does not have guarantees, nor does it provably terminate in polynomial number of exchanges.
Description of the algorithm is placed in Appendix \ref{appsec:fedorov}.
\end{itemize}

\subsection{Performance}

The ratio of the mean square error $\|\hat\beta-\beta_0\|_2^2$ compared to the OLS estimator on the full data set $\|\hat\beta^{\mathrm{ols}}-\beta_0\|_2^2$,
$\hat\beta^{\mathrm{ols}}=(X^\top X)^{-1}X^\top y$,
 and objective values $F(\hat S;X)=\tr((X_{\hat S}^\top X_{\hat S})^{-1})$ are reported in Figure \ref{fig:synthetic}.

Table \ref{tab:time} reports the running time and number of iterations of $L_5^*$ (greedy based selection) and $L_6^{\dagger}$ (Fedorov's exchange algorithm).
In general, the greedy method is 100 to 1000 times faster than the exchange algorithm, and also converges in much fewer iterations.

{
For both synthetic settings, our proposed methods ($L_4^*$ and $L_5^*$) significantly outperform existing approaches $(L_1,L_2,L_3)$,
and their performance is on par with the Fedorov's exchange algorithm, which is much more computationally expensive (cf. Table \ref{tab:time}).
 }

\section{Numerical results on real data}\label{sec:experiment}

\subsection{The material synthesis dataset}\label{subsec:material}

\begin{figure}[t]
\centering
\includegraphics[width=6cm]{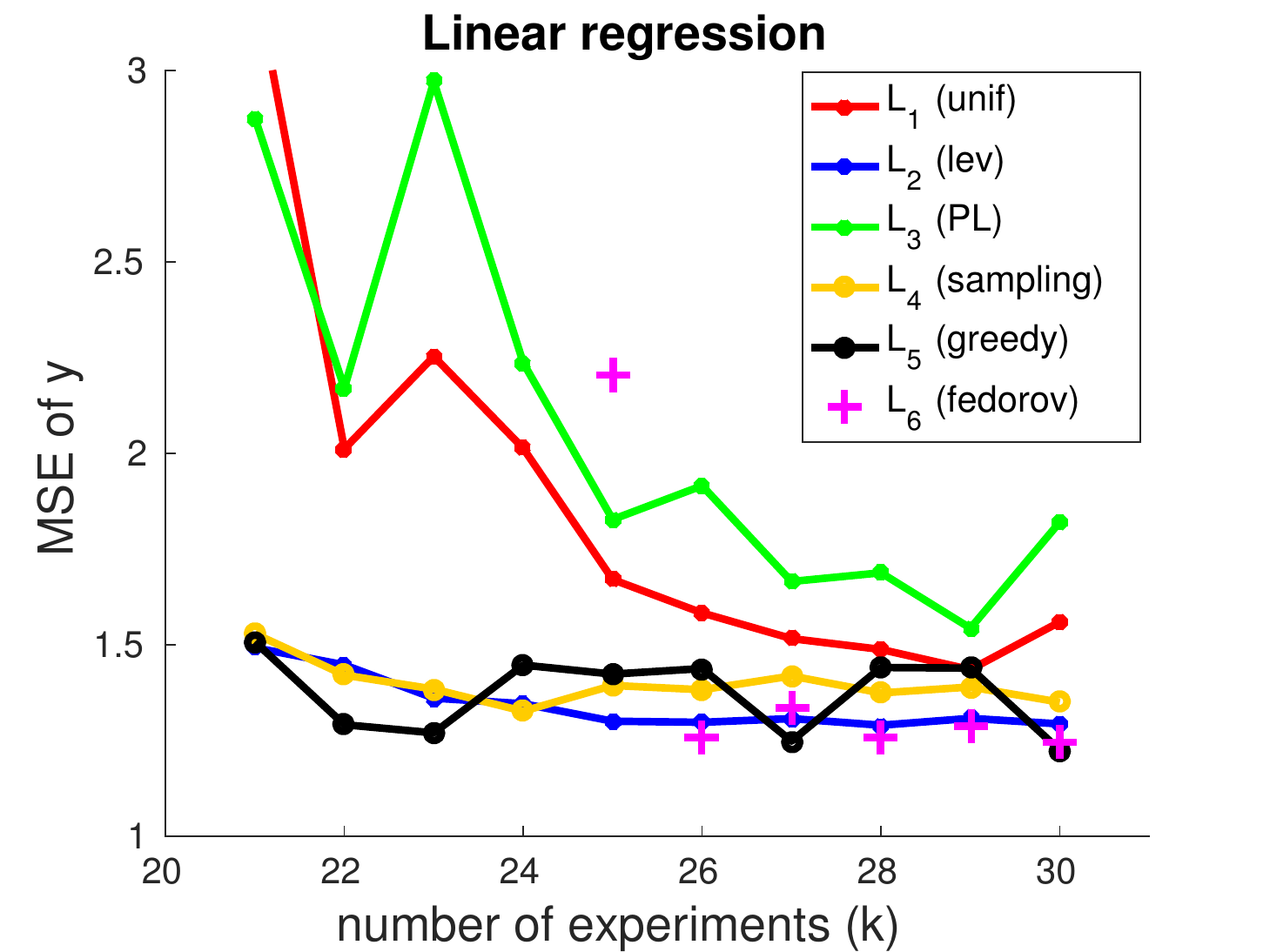}
\includegraphics[width=6cm]{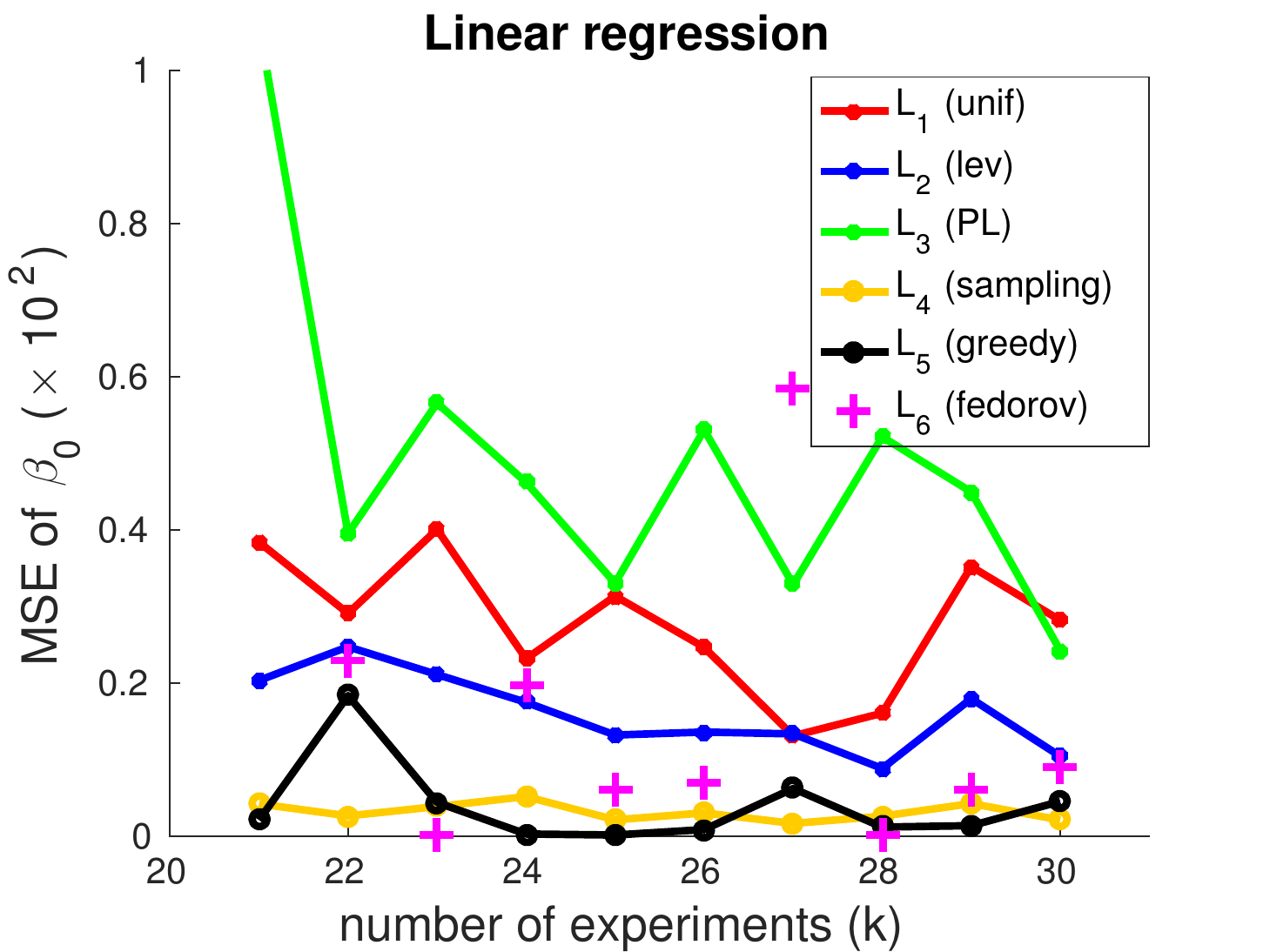}
\caption{\small Results on real material synthesis data ($n=133, p=9, 20<k\leq 30$).
Left: prediction error $\|y-X \hat\beta\|_2^2$, normalized by dividing by the prediction error $\|y-X \hat\beta^{\mathrm{ols}}\|_2^2$ of the OLS estimator on the full data set;
right: estimation error $\|\hat\beta-\hat\beta^{\mathrm{ols}}\|_2^2$.
All randomized algorithms ($L_1, L_2, L_3, L_4^*, L_6^\dagger$) are run for 50 times and the median performance is reported.}
\label{fig:material}
\end{figure}

\begin{table}[p]
\centering
\caption{\small Selected subsets of uniform sampling (\textsc{Uniform}), leverage score sampling (\textsc{Levscore}) and greedy method (\textsc{Greedy})
on the material synthesis dataset.}
\vskip 0.1in
\begin{tabular}{cccc||cccc||cccc}
\hline
\multicolumn{4}{c||}{\textsc{Uniform} ($L_1$)}& \multicolumn{4}{c||}{\textsc{Levscore} ($L_2$)}& \multicolumn{4}{c}{\textsc{Greedy} ($L_5^*$)}\\
$T$& $P$& $H$& $R$& $T$& $P$& $H$& $R$&$T$& $P$& $H$& $R$ \\
\hline
150& 30& 7.5& .07& 120& 30& 5& .07& 140& 15& 7.5& .80 \\
165& 30& 3& .73& 140& 15& 2.5& .80& 140& 15& 2.5& .80\\
140& 15& 7.6& .80& 170& 30& 7.5& .93& 140& 60& 2.5& .80\\
170& 30& 2.5& .07& 140& 60& 7.5& .80& 160& 15& 7.5& .80\\
160& 15& 7.5& .80& 150& 15& 2.5& .80& 160& 60& 7.5& .80\\
150& 15& 2.5& .80& 150& 30& 60& .67& 140& 60& 2.5& .80\\
160& 30& 2.5& .80& 160& 30& 30& .87&160& 60& 7.5& .80\\
150& 30& 6.1& .87& 170& 30& 2.5& .07& 170& 30& 7.5& .93\\
160& 30& 6.1& .87& 170& 30& 0& .93& 165& 30& 7.5& 0\\
150& 30& 4.1& .67& 140& 30& 5& .80& 150& 30& 5& 0\\
165& 30& 7.5& .13& 140& 30& 2.5& .87& 135& 30& 2.5& .5\\
140& 30& 15& .87& 140& 30& 6.2& .67& 150& 30& 4.1& .87\\
160& 60& 7.5& .80& 160& 30& 60& .87& 150& 30& 2.5& .87\\
150& 30& 5& .50& 140& 30& 60& .87& 135& 30& 3& .50\\
140& 30& 7.5& .80& 160& 30& 7.5& .80& 135& 30& 3& 0\\
150& 30& 5& .80& 160& 30& 2.5& .87& 140& 30& 30& .67\\
140& 30& 6.1& .67& 140& 30& 4.1& .67&140& 30& 30& .87\\
140& 30& 2.5& .67& 135& 30& 3& .50& 150& 30& 30& .87\\
150& 60& 5& .80& 150& 30& 30& .67&160& 30& 30& .87\\
160& 30& 5& .80& 165& 30& 5& .67&120& 30& 5& .07\\
135& 30& 3& .07& 160& 60& 7.5& .80& 160& 30& 60& .87\\
165& 30& 5& 0& 165& 30& 3& .67& 140& 30& 60& .87\\
120& 30& 7.5& .93& 135& 30& 3& .67& 120& 30& 7& .07\\
160& 30& 5& .80& 160& 15& 7.5& .80& 120& 30& 0& .93\\
165& 30& 0& .73& 150& 60& 7.5& .80& 170& 30& 0& .93\\
165& 30& 3& .67& 160& 30& 4.1& .67& 170& 30& 0& .07\\
150& 15& 7.5& .80& 165& 30& 0& .07& 150& 30& 0& .5\\
140& 30& 5& .80& 135& 30& 2.5& .73& 150& 30& 0& .6\\
150& 60& 2.5& .80& 150& 30& 15& .87& 165& 30& 0& .5\\
135& 30& 3& .67& 160& 60& 2.5& .80& 160& 30& 60& .07\\
\hline
\end{tabular}
\label{tab:material_subset}
\end{table}

We apply our proposed methods to an experimental design problem of low-temperature microwave-assisted crystallization of ceramic thin films \citep{reeja2012microwave,nakamura2017design}.
The microwave-assisted experiments were controlled by four experimental parameters: temperature $T$ ($120^\circ C$ to $170^\circ C$), hold time $H$ (0 to 60 minutes), ramp power $P$ ($0W$ to $60W$) and tri-ethyl-gallium (TEG) volume ratio $R$ (from 0 to 0.93).
A generalized quadratic regression model was employed in \citep{nakamura2017design} to estimate the coverage percentage of the crystallization $CP$:
\begin{equation}
CP = \beta_1 + \beta_2T  + \beta_3H + \beta_4 P + \beta_5 R + \beta_6 T^2 + \beta_7 H^2 + \beta_8 P^2 + \beta_9 R^2.
\end{equation}

As a proof of concept, we test our proposed algorithms and compare with baseline methods on a data set consisting of 133 experiments, with the coverage percentage fully collected and measured.
We consider selection of subsets of $k$ experiments, with $k$ ranging from 21 to 30, and report in Figure~\ref{fig:material} the mean-square error (MSE) of both the prediction error $\|y-X\hat\beta\|_2^2$, normalized by dividing by the prediction error $\|y-X \hat\beta_{\mathrm{ols}}\|_2^2$ of the OLS estimator on the full data set, 
on the full 133 experiments and the estimation error $\|\hat\beta-\hat\beta^{\mathrm{ols}}\|_2^2$. 
Figure \ref{fig:material} shows that our proposed methods consistently achieve the best performance and are more stable compared to uniform sampling and Fedorov's exchange methods, even though the 
linear model assumption may not hold. 

We also report the actual subset ($k=30$) design points selected by uniform sampling ($L_1$), leverage score sampling ($L_3$) and our greedy method ($L_5^*$) in Table \ref{tab:material_subset}.
Table \ref{tab:material_subset} shows that the greedy algorithm ($L_5^*$) picked very diverse experimental settings, including several settings of very lower temperature ($120^\circ C$) and very short hold time (0). In contrast, the experiments picked by both uniform sampling ($L_1$) and leverage score sampling ($L_3$) are less diverse.

\subsection{The CPU performance dataset}\label{subsec:cpu}

\begin{table}[t]
\centering
\caption{\small Comparison of the ``true'' model $\beta_0=[0.49;0.30;0.19;3.78]$ and the subsampled estimators $\hat\beta$ for different components of $\beta$. $k$ is the number of subsamples.
Random algorithms ($L_1$ through $L_4^*$) are repeated for 100 times and the median difference is reported.}
\vskip 0.1in
\begin{tabular}{cccccccccccc}
\hline 
& $\Delta\beta_1$& $\Delta\beta_2$& $\Delta\beta_3$& $\Delta\beta_4$& $\|\Delta\vct\beta\|_2$& & $\Delta\beta_1$& $\Delta\beta_2$& $\Delta\beta_3$& $\Delta\beta_4$& $\|\Delta\vct\beta\|_2$\\
\hline
$k=20$& \multicolumn{5}{}{}& $k=30$& \\
$L_1$& .043& .071& .074& .274& .295&  $L_1$& .038& .064& .070& .171& .199\\
$L_2$& .017& .026& .018& .260& .263& $L_2$& .011& .024& .014& .199& .201\\
$L_3$& .017& .043& .026& .243& .249& $L_3$& .015& .031& .018& .236& .239\\
$L_4^*$& .014& .039& .114& .215& .247& $L_4^*$& .014& .023& .030& .044& .060\\
$L_5^*$& .010& .024& .032& .209& .213& $L_5^*$& .022& .000& .050& .088& .104\\
$L_6^\dagger$& .010& .024& .032& .209& .213&$L_6^\dagger$& .022& .000& .050& .088& .104\\
\hline
$k=50$& \multicolumn{5}{}{}& $k=75$& \\
$L_1$& .025& .035& .037& .130& .142&$L_1$&  .016& .032& .024& .133& .139\\
$L_2$& .009& .027& .012& .130& .134& $L_2$& .006& .034& .010& .126& .131\\
$L_3$& .011& .035& .013& .166& .170& $L_3$& .009& .029& .009& .124& .128\\
$L_4^*$& .022& .009& .036& .097& .106& $L_4^*$& .012& .001& .015& .036& .041\\
$L_5^*$& .025& .003& .040& .009& .048&$L_5^*$&  .005& .013& .009& .016& .023\\
$L_6^\dagger$& .025& .003& .040& .009& .048& $L_6^\dagger$&.005& .012& .010& .011& .020\\
\hline
\end{tabular}
\label{tab_cpu}
\end{table}

CPU relative performance refers to the relative performance of a particular CPU model in terms of a base machine - the IBM 370/158 model.
Comprehensive benchmark tests are required to accurately measure the relative performance of a particular CPU,
which might be time-consuming or even impossible if the actual CPU has not been on the market yet.
\cite{cpu-relative-performance} considered a linear regression model to characterize the relationship between CPU relative performance
and several CPU capacity parameters such as main memory size, cache size, channel units and machine clock cycle time.
These parameters are fixed for any specific CPU model and could be known even before the manufacturing process.
The learned model can also be used to predict the relative performance of a new CPU model based on its model parameters, without running extensive benchmark tests.

Using domain knowledge, \cite{cpu-relative-performance} narrow down to three parameters of interest: \emph{average memory size} ($X_1$), \emph{cache size} ($X_2$)
and \emph{channel capacity} ($X_3$), all being explicitly computable functions from CPU model parameters.
An offset parameter is also involved in the linear regression model, making the number of variables $p=4$.
A total of $n=209$ CPU models are considered, with all of the model parameters and relative performance collected and no missing data.
Stepwise linear regression was applied to obtain the following linear model:
\begin{equation}
0.49X_1 + 0.30X_2 + 0.19X_3 + 3.78.
\label{eq:cpu-beta0}
\end{equation}

To use this data set as a benchmark for evaluating the experiment selection methods, we synthesize labels $Y$ using model Eq.~(\ref{eq:cpu-beta0}) with standard Gaussian noise 
and measure the difference between fit $\hat\beta$ and the true model $\beta_0=[0.49;0.30;0.19;3.78]$.
Table \ref{tab_cpu} shows that under various measurement budget ($k$) constraints, the proposed methods $L_4^*$ and $L_5^*$ consistently achieve
small estimation error, and is comparable to the Fedorov's exchange algorithm ($L_6^\dagger$).
As the data set is small ($209\times 4$), all algorithms are computationally efficient.

\begin{figure}[t]
\centering
\includegraphics[width=6cm]{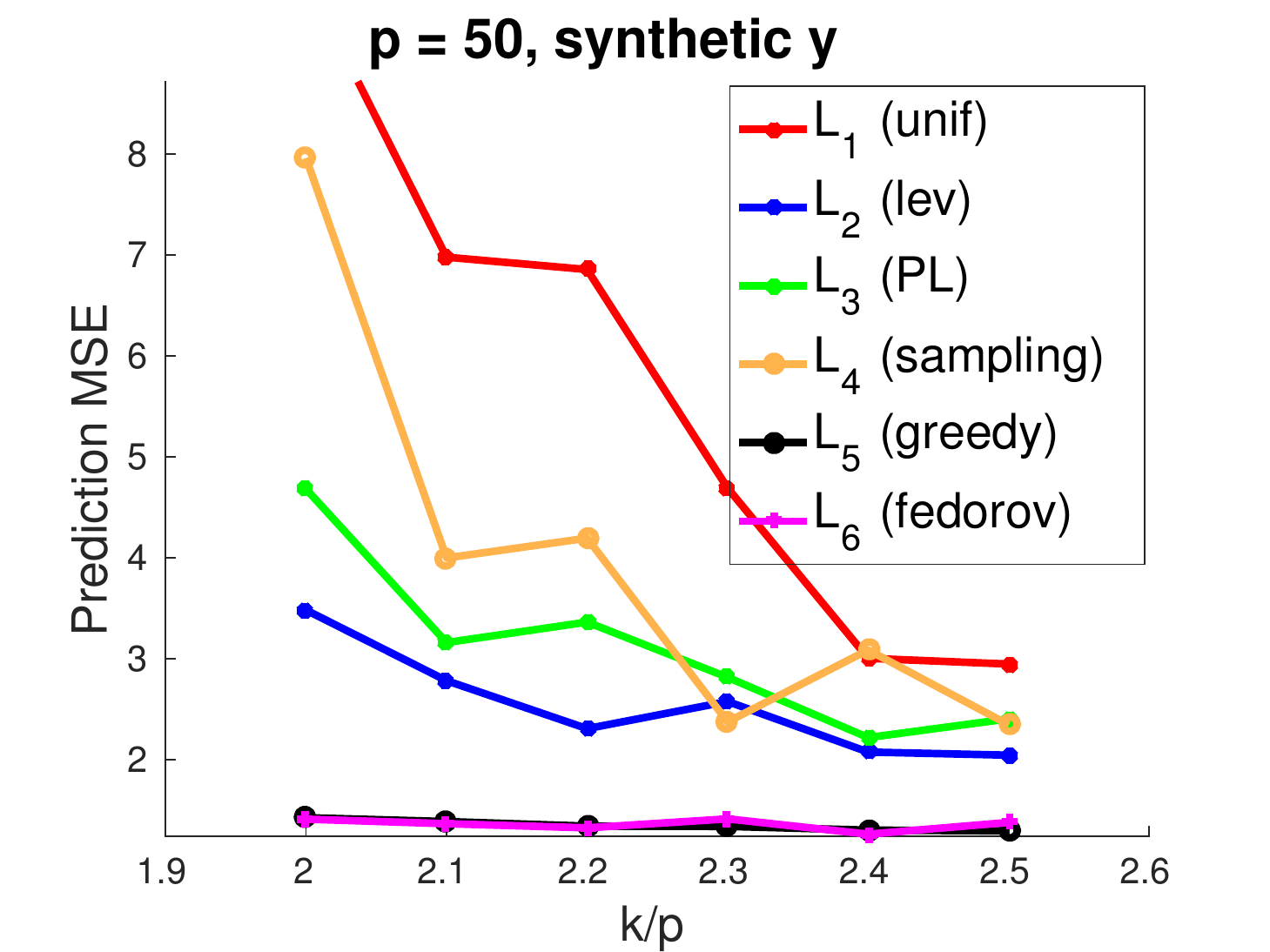}
\includegraphics[width=6cm]{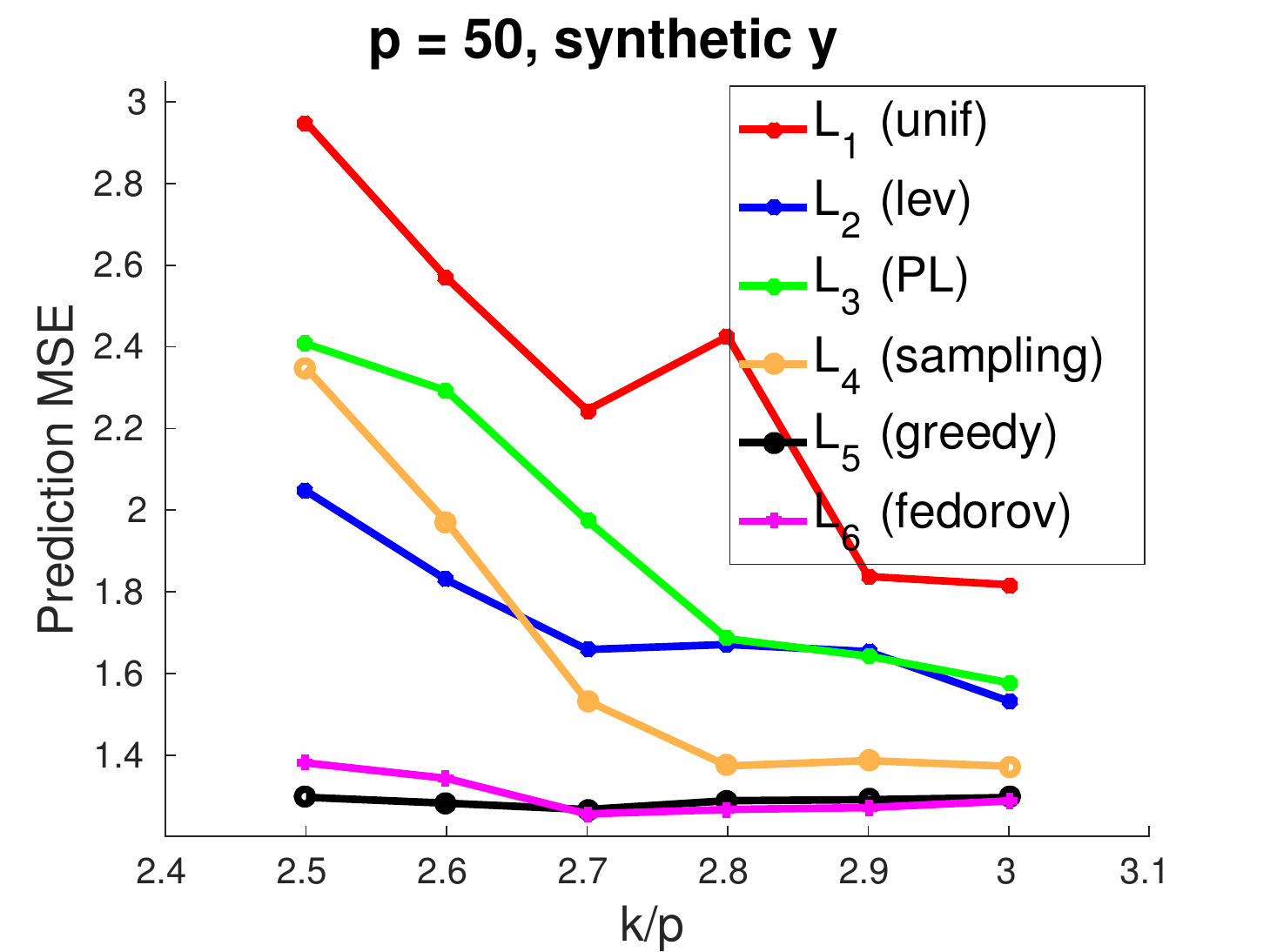}
\includegraphics[width=6cm]{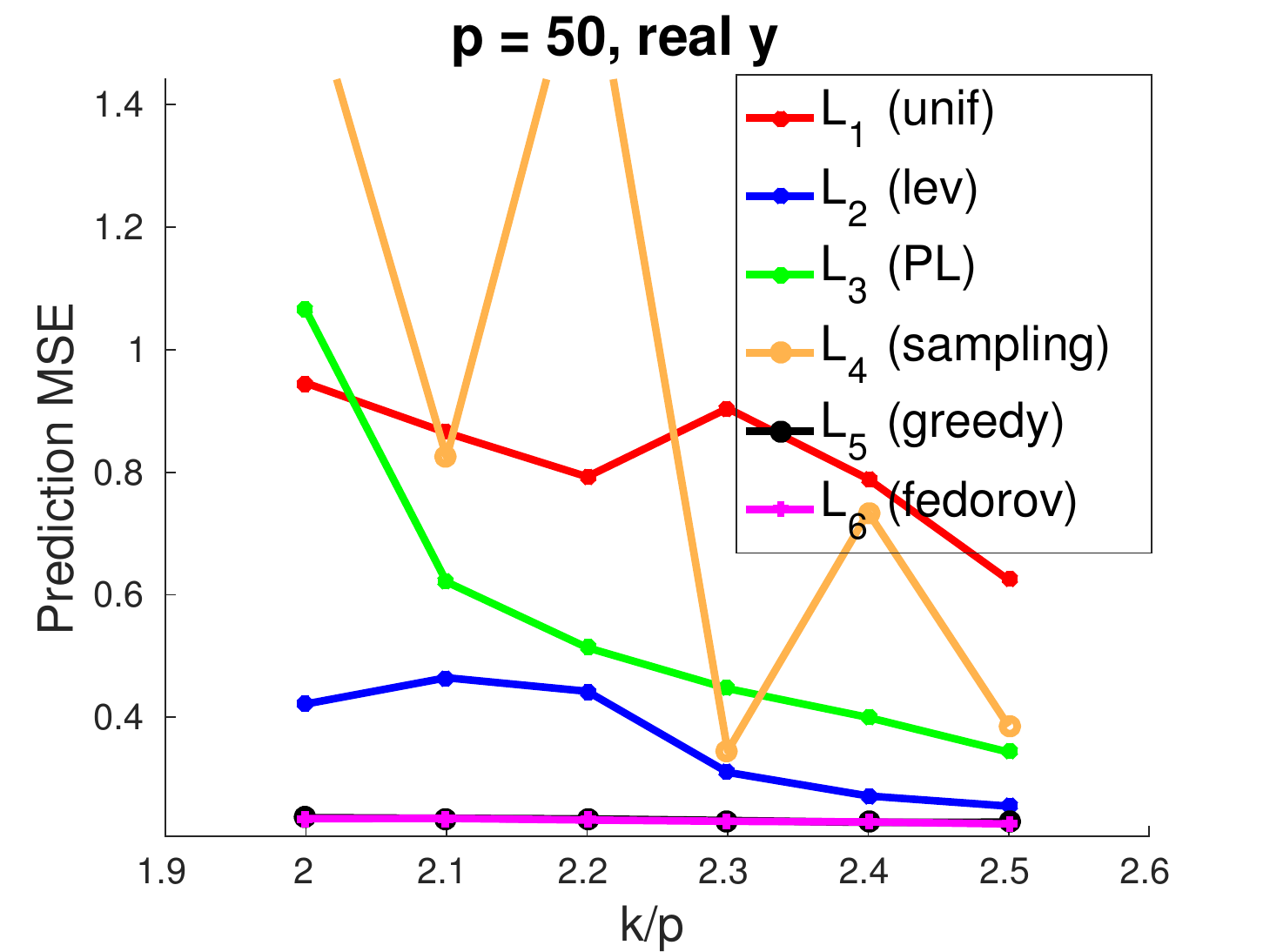}
\includegraphics[width=6cm]{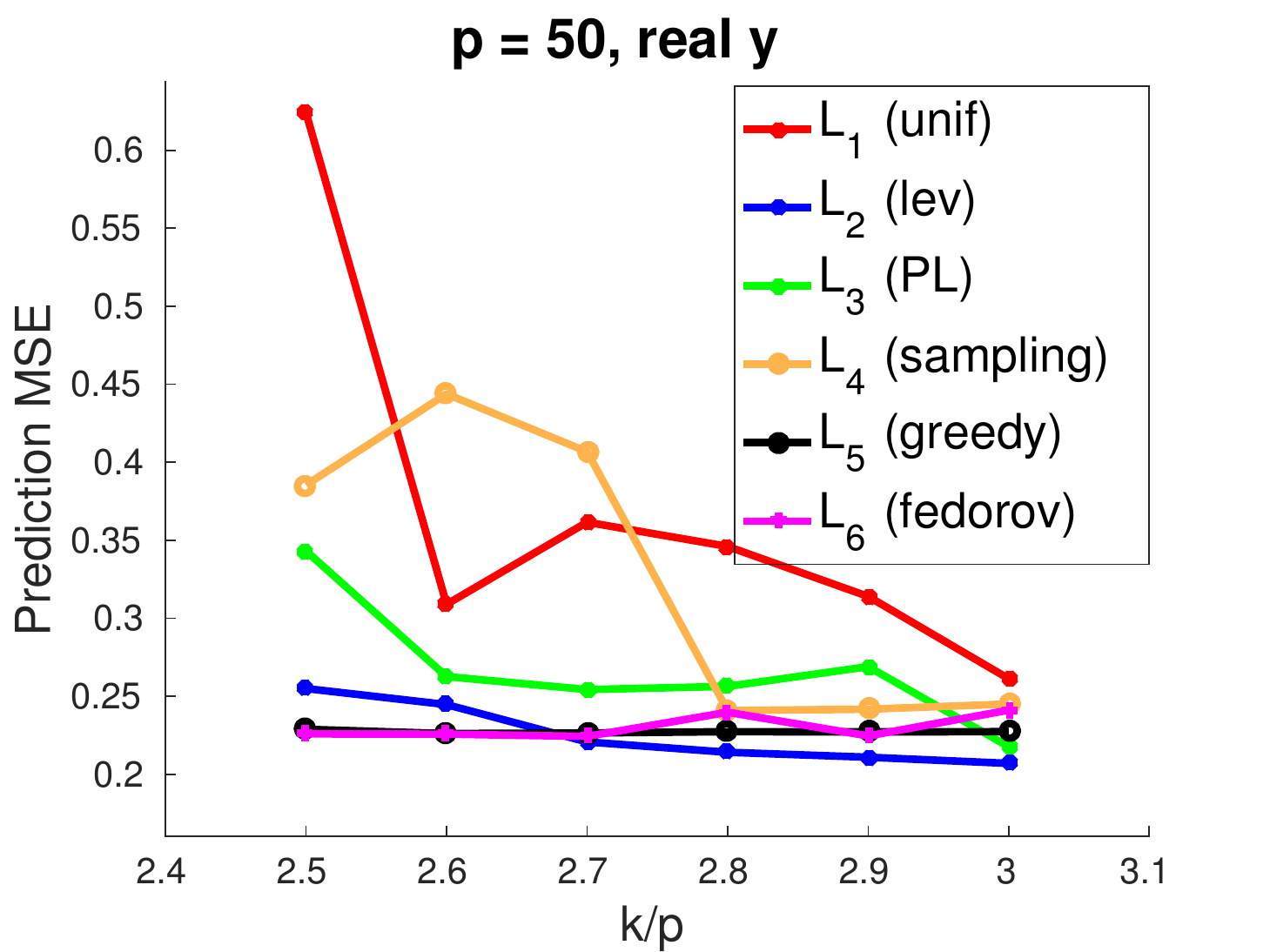}
\caption{\small Plots of the ratio of mean square prediction error $\frac{1}{n}\|V\hat\theta-y\|_2^2$ compared against the MSE of the full-sample OLS estimator $\frac{1}{n}\|V\hat\theta^{\mathrm{ols}}-y\|_2^2$ on the Minnesota wind dataset.
In the top panel response variables $y$ are synthesized as $y=V\hat\theta^{\ols}+\varepsilon$, where $\hat\theta^{\ols}$ is the full-sample OLS
estimator and $\varepsilon$ are i.i.d.~standard Normal random variables.
In the bottom panel the real wind speed response $y$ is used.
For randomized algorithms ($L_1$ to $L_4$) the experiments are repeated for 50 times and the median MSE ratio is reported.}
\label{fig:wind}
\end{figure}

\subsection{The Minnesota Wind Dataset}\label{subsec:wind}

The Minnesota wind dataset collects wind speed information across $n=2642$ locations in Minnesota, USA for a period of 24 months
(for the purpose of this experiment, we only use wind speed data for one month).
The 2642 locations are connected with 3304 bi-directional roads, which form an $n\times n$ sparse unweighted undirected graph $G$.
Let $L=\diag(d)-G$ be the $n\times n$ Laplacian of $G$,
where $d$ is a vector of node degrees
and let $V\in\mathbb R^{n\times p}$ be an orthonormal eigenbasis corresponding to the smallest $p$ eigenvalues of $L$.
As the wind speed signal $y\in\mathbb R^n$ is relatively smooth, it can be well-approximated as $y=\mat V\vct\theta+\vct\varepsilon$, where $\vct\theta\in\mathbb R^p$ corresponds to the coefficients of the graph Laplacian basis.
The speed signal $y$ has a fast decay on the Laplacian basis $V$;
in particular, OLS on the first $p=50$ basis accounts for over 99\% of the variation in $y$. (That is, $\|V\hat\theta^{\mathrm{ols}}-y\|_2^2 \leq 0.01\|y\|_2^2$, where
$\hat\theta^{\mathrm{ols}}=(V^\top V)^{-1}V^\top y$.)

In our experiments we compare the 6 algorithms ($L_1$ through $L_6$) when a very small portion of the full samples is selected.
The ratio of the mean-square prediction error $\frac{1}{n}\|V\hat\theta-y\|_2^2$ compared to the MLE of the full-sample OLS estimator $\frac{1}{n}\|V\hat\theta^{\mathrm{ols}}-y\|_2^2$ is reported.
Apart from the real speed data, we also report results under a ``semi-synthetic'' setting similar to Sec.~\ref{subsec:cpu},
where the full OLS estimate $\hat\theta^{\ols}$ is first computed and then $y$ is synthesized as $y=V\hat\theta^{\ols}+\varepsilon$ 
where $\varepsilon$ are i.i.d.~standard Normal random variables.

{
From Figure~\ref{fig:wind}, greedy methods ($L_5^*,L_6^\dagger$) achieve consistently the lowest MSE 
and are robust to subset size $k$.
However, the Fedorov's exchange algorithm is very slow and requires more than 10 times the running time than our proposed methods ($L_4^*,L_5^*$).
On the other hand, sampling based methods ($L_1$ through $L_4^*$) behave quite badly when subset size $k$ is small and close to the problem dimension $p$.
This is because when $k$ is close to $p$, even very small changes resulted from randomization could lead to highly singular designs and hence 
significantly increases the mean-square prediction error.
We also observe that for sufficiently large subset size (e.g., $k=3p$), the performance gap between all methods is smaller on real signal compared to the synthetic signals.
We conjecture that this is because the linear model $y=V\theta+\varepsilon$ only approximately holds in real data.
}

\section{Discussion}
We discuss potential improvements in the analysis presented in this paper.

\subsection{Sampling based method}
To fully understand the finite-sample behavior of the sampling method introduced in Secs.~\ref{subsec:sampling_expected} and \ref{subsec:sampling_deterministic},
it is instructive to relate it to the \emph{graph spectral sparsification} problem \citep{graph-sparsification} in theoretical computer science:
Given a directed weighted graph $G=(V,E,W)$, find a subset $\tilde E\subseteq E$ and new weights $\tilde W$
such that $\tilde G=(V,\tilde E,\tilde W)$ is a \emph{spectral sparsification} of $G$,
which means there exists $\epsilon\in(0,1)$ such that for any vector $z\in\mathbb R^{|V|}$:
$$
(1-\epsilon)\sum_{(u,v)\in E}{w_{uv}(z_u-z_v)^2} \leq \sum_{(u,v)\in\tilde E}{\tilde w_{uv}(z_u-z_v)^2} \leq (1+\epsilon)\sum_{u,v\in E}{w_{uv}(z_u-z_v)^2}.
$$
Define $B_G\in\mathbb R^{|E|\times |V|}$ to be the \emph{signed edge-vertex incidence matrix}, where each row of $B_G$ corresponds to an edge in $E$,
each column of $B_G$ corresponds to a vertex in $G$, and $[B_G]_{ij}=1$ if vertex $j$ is the head of edge $i$, $[B_G]_{ij}=-1$ if vertex $j$ is the tail of 
edge $i$, and $[B_G]_{ij}=0$ otherwise.
The spectral sparsification requirement can then be equivalently written as
$$
(1-\epsilon)z^\top (B_G^\top WB_G)z \leq z^\top (B_{\tilde G}^\top\tilde WB_{\tilde G}) z \leq (1+\epsilon)z^\top(B_G^\top WB_G)z.
$$

The similarity of graph sparsification and the experimental selection problem is clear:
$B_G\in\mathbb R^{n\times p}$ would be the known pool of design points and $W=\diag(\pi^*)$ is a diagonal matrix
with the optimal weights $\pi^*$ obtained by solving Eq.~(\ref{eq_c_opt}).
The objective is to seek a small subset of rows in $B_G$ (i.e., the sparsified edge set $\tilde E$) which is a spectral approximation of
the original $B_G^\top\diag(\pi^*)B_G$.
Approximation of the A-optimality criterion or any other eigen-related quantity immediately follows.
One difference is that in linear regression each row of $B_G$ is no longer a $\{\pm 1,0\}$ vector.
Also, the subsampled weight matrix $\tilde W$ needs to correspond to an unweighted graph for linear regression, i.e. diagonal entries of $\tilde W$ must be in $\{0,1\}$.
However, we consider this to be a minor difference as it does not interfere with the spectral properties of $B_G$.

The spectral sparsification problem where $\tilde W$ can be arbitrarily designed (i.e. not restricted to have $\{0,1\}$ diagonal entries) is completely solved \citep{graph-sparsification,batson2012twice},
where the size of the selected edge subset is allowed to be \emph{linear} to the number of vertices,
or in the terminology of our problem, $k\asymp p$.
Unfortunately, both methods require the power of arbitrary designing the weights in $\tilde W$, which is generally not available in experiment selection
problems (i.e., cannot set noise variance or signal strength arbitrarily for individual design points).
Recently, it was proved that when the original graph is unweighted ($W=I$), it is also possible to find unweighted linear-sized edge sparsifiers ($\tilde W\propto I$)
\citep{ks1,ks2,anderson2014efficient}.
This remarkable result leads to the solution of the long-standing Kardison-Singer problem.
However, the condition that the \emph{original} weights $W$ are uniform is not satisfied in the linear regression problem,
where the optimal solution $\pi^*$ may be far from uniform.
The experiment selection problem somehow falls in between, where an \emph{unweighted} sparsifier is desired for a \emph{weighted} graph.
This leads us to the following question:
\begin{question}
Given a \emph{weighted} graph $G=(V,E,W)$, under what conditions are there small edge subset $\tilde E\subseteq E$ with \emph{uniform} weights $\tilde W\propto I$
such that $\tilde G=(V,\tilde E,\tilde W)$ is a (one-sided) spectral approximation of $G$?
\end{question}

The answer to the above question, especially the smallest possible edge size $|\tilde E|$, would have immediate consequences on finite-sample conditions
of $k$ and $p$ in the experiment selection problem.

\subsection{Greedy method}
Corollary \ref{cor:s0} shows that the approximation quality of the greedy based method depends crucially upon $\|\pi^*\|_0$,
the support size of the optimal solution $\pi^*$.
In Lemma \ref{lem:support-bound} we formally established that $\|\pi^*\|_0\leq k+p(p+1)/2$ under mild conditions;
however, 
we conjecture the $p(p+1)/2$ term is loose and could be improved to $O(p)$ in general cases.

In Fig \ref{fig:check} we plot $\|\pi^*\|_0-k$ against number of variables $p$, where $p$ ranges from 10 to 100 and $k$ is set to $3p$.
The other simulation settings are kept unchanged.
We observe that in all settings $\|\pi^*\|_0-k$ scales linearly with $p$, suggesting that $\|\pi^*\|_0\leq k+O(p)$.
Furthermore, the slope of the scalings does not seem to depend on the conditioning of $X$, as shown in the right panel of Fig \ref{fig:check}
where the conditioning of $X$ is controlled by the spectral decay rate $\alpha$.
This is in contrast to the analysis of the sampling based method (Theorem \ref{thm:sampling_expected}), 
in which the finite sample bound depends crucially upon the conditioning of $X$ for the without replacement setting.

\begin{figure}[t]
\centering
\includegraphics[width=6cm]{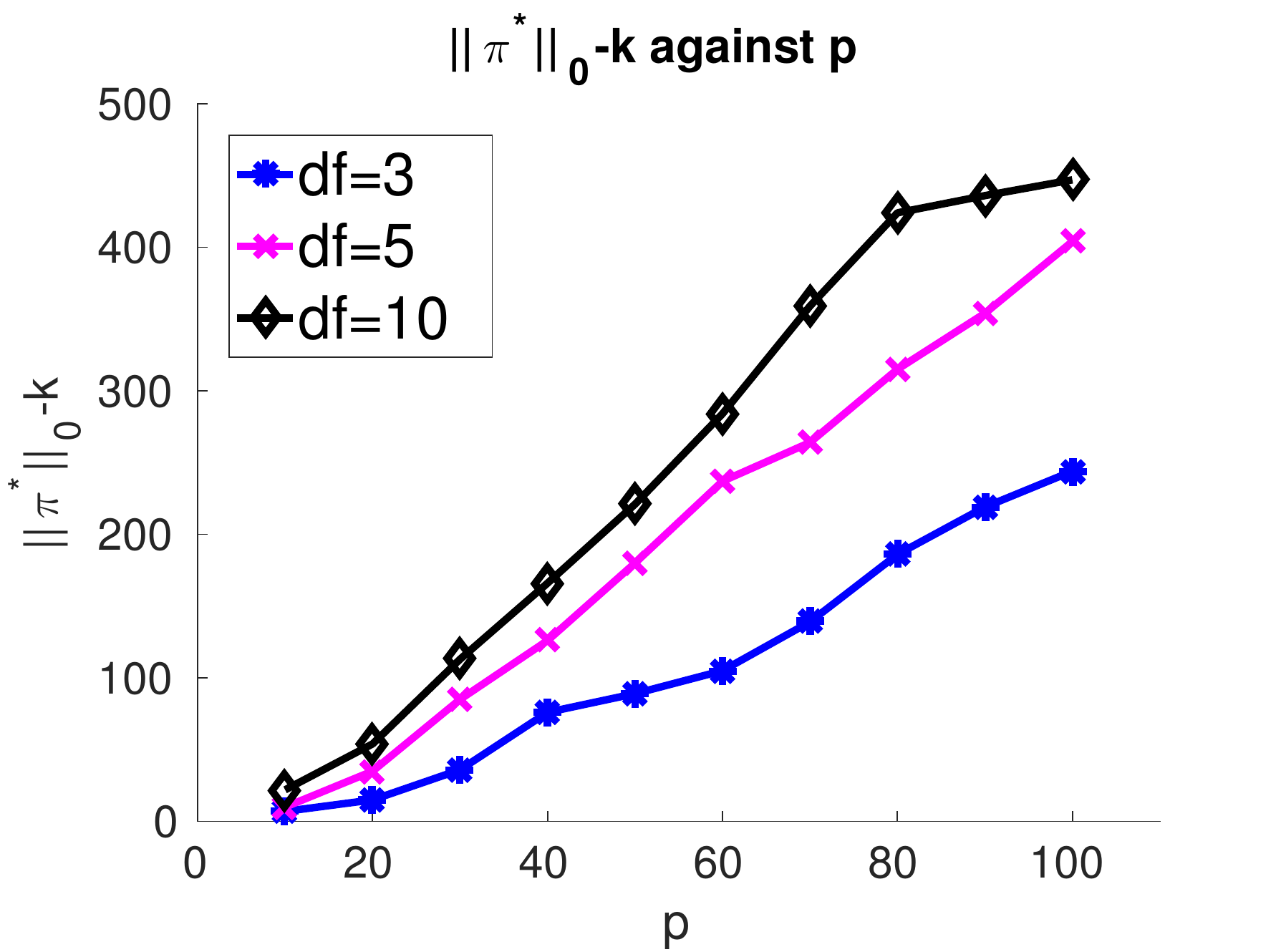}
\includegraphics[width=6cm]{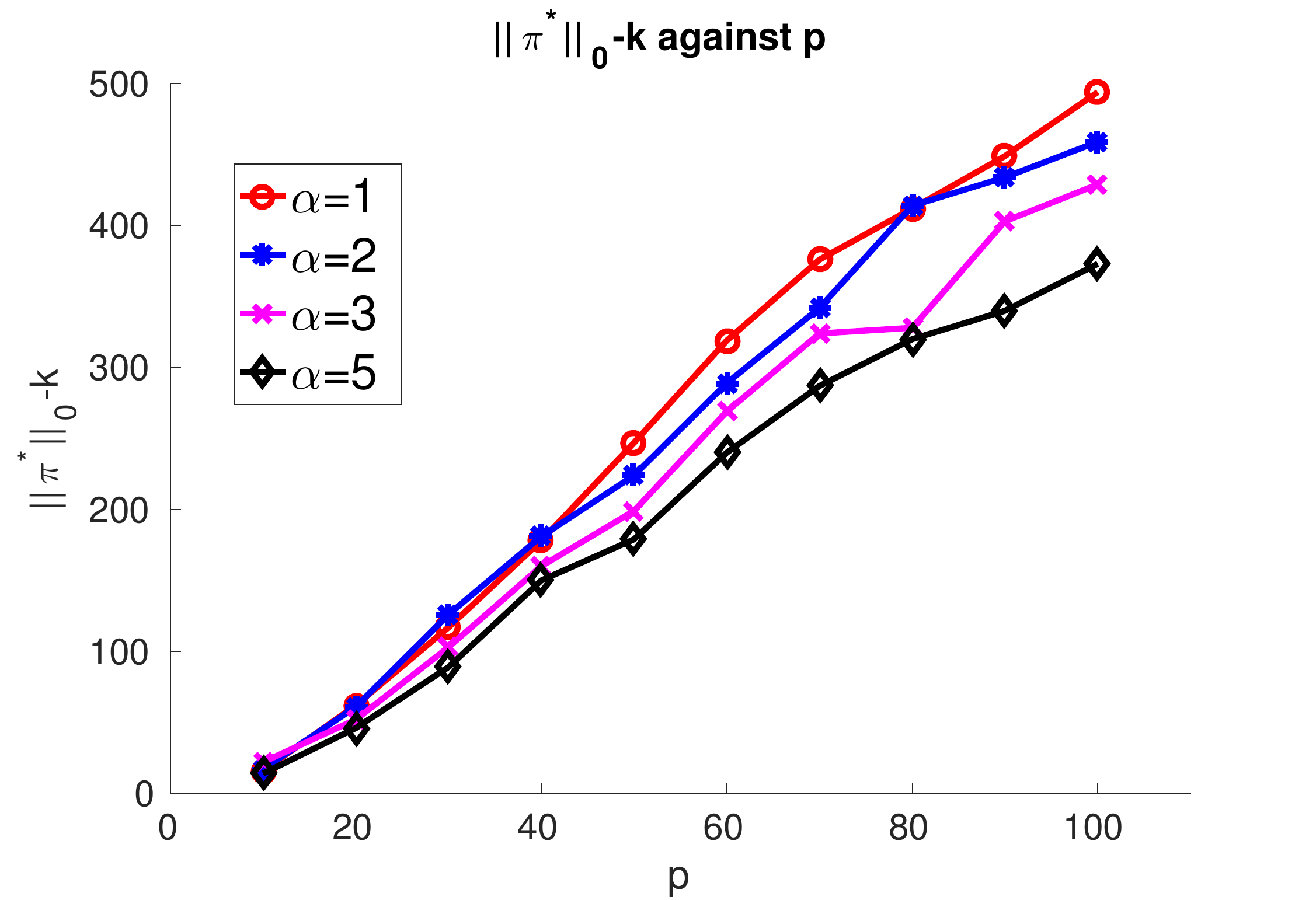}
\caption{\small Empirical verification of the rate of $\|\pi^*\|_0$. Left: isotropic design ($t$ distribution); Right: skewed design (transformed multivariate Gaussian) with spectral decay $\lambda_j\propto j^{-\alpha}$.}
\label{fig:check}
\end{figure}

{
\subsection{High-dimensional settings}

This paper focuses solely on the so-called \emph{large-sample, low-dimensional} regression setting, where the number of variables $p$ is assumed to be much smaller than both 
design pool size $n$ and subsampling size $k$.
It is an interesting open question to extend our results to the \emph{high-dimensional} setting, where $p$ is much larger than both $n$ and $k$,
and an assumption like sparsity of $\beta_0$ is made to make the problem feasible, meaning that very few components in $\beta_0$ are non-zero.
In particular, it is desirable to find a sub-sampling algorithm that attains the following minimax estimation rate over high-dimensional sparse models:
$$
\inf_{A\in\mathcal A_b(k)} \sup_{\|\beta_0\|_0\leq s} \mathbb E_{A,\beta_0}\left[\|\hat\beta-\beta_0\|_2^2\right].
$$

One major obstacle of designing subsampling algorithms for high-dimensional regression
is the difficulty of evaluating and optimizing the \emph{restricted eigenvalue} (or other similar criteria) of the subsampled covariance matrix.
Unlike the ordinary spectrum, the restricted eigenvalue of a matrix is NP-hard to compute \citep{fan2016regularity}
and heavily influences the statistical efficiency of a Lasso-type estimator \citep{bickel2009simultaneous}.
The question of optimizing such restricted eigenvalues could be even harder and remains largely open.
We also mention that \cite{constrained-adaptive-sensing} suggests a two-step approach where half the budget is used to identify the sparse locations from randomly sampled points and the remaining budget is used to generate a better estimate of regression coefficients using the same convex programming formulation proposed in our paper. 
Their paper provides some experimental support for this idea, however, no theoretical guarantees are established for the (sub)optimality of such a procedure. 
Analyzing such a two-step approach could be an interesting future direction. 
}

{

\subsection{Approximate linear models}

In cases when the linear model $y=X\beta_0+\varepsilon$ only approximately holds,  we describe here a method that takes into consideration
both bias and variance of OLS estimates on subsampled data in order to find good sub-samples.
Suppose $y=f_0(X)+\varepsilon$ for some unknown underlying function $f_0$ that might not be linear,
and let $\beta^*=(X^\top X)^{-1}X^\top f_0(X)$ be the optimal linear predictor on the full sample $X$.
Suppose $X_S=\Psi X$ is the sub-sampled data, where $\Psi\in\mathbb R^{|S|\times n}$ is the subsampling matrix
where each row of $\Psi$ is $e_i=(0,\cdots,0,1,0,\cdots,0)\in\mathbb R^n$, with $i\in[n]$ being the subsampled rows in $X$.
Let $\hat\beta=(X_S^\top X_S)^{-1}X_S^\top y_S$ be the OLS on the subsampled data.
The error of $\hat\beta$ can then be decomposed and upper bounded as
\begin{align}
\mathbb E\|\hat\beta-\beta^*\|_2^2
&= f_0(X)^\top\left[(X^\top\Psi^\top\Psi X)^{-1}X^\top\Psi^\top\Psi-(X^\top X)^{-1}X^{\top}\right]f_0(X) + \sigma^2\tr\left[(X^\top\Psi^\top\Psi X)^{-1}\right]\label{eq:bias}\\
&\leq \left\|(X^\top\Psi^\top\Psi X)^{-1}X^\top\Psi^\top\Psi-(X^\top X)^{-1}X^{\top}\right\|_{\mathrm{op}}\cdot \|f_0(X)\|_2^2 + \sigma^2\tr\left[(X^\top\Psi^\top\Psi X)^{-1}\right].\label{eq:relax-bias}
\end{align}
In the high noise setting, the first term can be ignored and the solution is close to the one considered in this paper. 
In the low-noise setting, the second term can be ignored and a relaxation similar to Eq.~(\ref{eq_c_opt}) can be derived; however a linear approximation may be undesirable in this case. 
In general, when $\|f_0(X)\|_2^2\approx\lambda$ is known or can be estimated, the following continuous optimization problem serves as an approximate objective of subsampled linear regression 
with approximate linear models:
\begin{eqnarray*}
\min_{\pi\in\mathbb R^n}& &
\lambda\left\|\left(\sum_{i=1}^n{\pi_ix_ix_i^\top}\right)^{-1}X^\top\diag(\pi) - (X^\top X)^{-1}X^\top\right\|_{\mathrm{op}} + \sigma^2\tr\left[\left(\sum_{i=1}^n{\pi_ix_ix_i^\top}\right)^{-1}\right]\\
s.t.& & \sum_{i=1}^n{\pi_i\leq k},\;\;\;0\leq \pi_i\leq 1.
\end{eqnarray*}

Unfortunately, the relaxation in Eq.~(\ref{eq:relax-bias}) may be loose and hence the optimization problem above fails to serve as a good objective for the 
optimal subsampling problem with approximate linear models.
Exact characterization of the bias term in Eq.~(\ref{eq:bias}) without strong assumptions on $f_0$ is an interesting open question.
}

\section{Proofs}\label{sec:proof}

\subsection{Proof of facts in Sec.~\ref{subsec:convex}}\label{subsec:proof-facts}


\begin{proof}[Proof of Fact \ref{fact:2}]
Let $A=X^\top\diag(\pi) X$ and $B=X^\top\diag(\pi')X$. By definition, 
$
B = A + \Delta
$
where $\Delta=X^\top\diag(\pi'-\pi)X$ is positive semi-definite.
Subsequently, $\sigma_{\ell}(B)\geq\sigma_{\ell}(A)$ for all $\ell=1,\cdots,p$.
We then have that
$$
f(\pi';X) = \tr(B^{-1}) = \sum_{\ell=1}^p{\sigma_{\ell}(B)^{-1}} \leq \sum_{\ell=1}^p{\sigma_\ell(A)^{-1}} = \tr(A^{-1}) = f(\pi;X).
$$
Note also that if $\pi'\neq \pi$ then $\Delta\neq 0$ and hence there exists at least one $\ell$ with $\sigma_{\ell}(B)<\sigma_{\ell}(A)$.
Therefore, the equality holds if and only if $\pi=\pi'$.
\end{proof}

\begin{proof}[Proof of Fact \ref{fact:3}]
Suppose $\|\pi^*\|_1 < k$.
Then there exists some coordinate $j\in\{1,\cdots,n\}$ such that $\pi_j^* < 1$.
Define $\pi'$ as $\pi'_i=\pi_i^*$ for $i\neq j$ and $\pi'_j = \min\{1,\pi_j^*+k-\|\pi^*\|_1\}$.
Then $\pi'$ is also a feasible solution.
On the other hand, By Fact \ref{fact:2} we have that $f(\pi';X) < f(\pi^*;X)$, contradicting the optimality of $\pi^*$.
Therefore, $\|\pi^*\|_1=k$.
\end{proof}

\subsection{Proof of Theorem \ref{thm:minimax}}

We only prove Theorem \ref{thm:minimax} for the with replacement setting ($b=1$). The proof for the without replacement setting
is almost identical.

Define $\widetilde{\mathcal A}_1(k)$ as the class of \emph{deterministic} algorithms that proceed as follows:
\begin{enumerate}
\item The algorithm deterministically outputs pairs $\{(w_i,\vct x_i)\}_{i=1}^M$, where $\vct x_i$ is one of the rows in $\mat X$
and $\{w_i\}_{i=1}^n$ satisfies $w_i\geq 0$, $\sum_{i=1}^M{w_i} \leq k$.
Here $M$ is an arbitrary finite integer.
\item The algorithm observes $\{y_i\}_{i=1}^M$ with $y_i=\sqrt{w_i}\vct x_i^\top\vct\beta + \varepsilon_i$.
Here $\vct\beta$ is a fixed but unknown regression model and $\varepsilon_i\overset{i.i.d.}\sim\nml(0,\sigma^2)$ is the noise.
\item The algorithm outputs $\hat{\vct\beta}$ as an estimation of $\vct\beta$, based on the observations $\{y_i\}_{i=1}^M$.
\end{enumerate}
Because all algorithms in $\widetilde{\mathcal A}_1(k)$ are deterministic and the design matrix still has full column rank,
 the optimal estimator of $\hat{\vct\beta}$ \emph{given} $\{(w_i,\vct x_i)\}_{i=1}^M$
is the OLS estimator. 
For a specific data set weighted through $\sqrt{w_1},\cdots,\sqrt{w_M}$, 
the minimax estimation error (which is achieved by OLS) is given by
$$
\inf_{\hat{\vct\beta}}\sup_{\vct\beta}\mathbb E\left[\|\hat{\vct\beta}-\vct\beta\|_2^2\right]
= \sup_{\vct\beta}\mathbb E\left[\|\hat{\vct\beta}^{\ols}-\vct\beta\|_2^2\right]
= \sigma^2\tr\left[(\widetilde{\mat X}^\top\widetilde{\mat X})^{-1}\right]
= \sigma^2\tr\left[\left(\sum_{i=1}^n{\tilde w_i\vct x_i\vct x_i^\top}\right)^{-1}\right],
$$
where $\tilde w_i$ is the aggregated weight of data point $\vct x_i$ in all the $M$ weighted pairs.
Subsequently,
$$
\inf_{\widetilde A\in\widetilde{\mathcal A}_1(k)}\sup_{\vct\beta}\mathbb E\left[\|\hat{\vct\beta}_{\widetilde A}-\vct\beta\|_2^2\right]
= \inf_{w_1+\cdots+w_M\leq k}\sigma^2\tr\left[\left(\sum_{i=1}^M{w_i\vct x_i\vct x_i^\top}\right)^{-1}\right]
= \sigma^2f_1^*(k;X).
$$
It remains to prove that
$$
\inf_{\widetilde A\in\widetilde{\mathcal A}_1(k)}\sup_{\vct\beta}\mathbb E\left[\|\hat{\vct\beta}_{\widetilde A}-\vct\beta\|_2^2\right]
\leq \inf_{A\in{\mathcal A}_1(k)}\sup_{\vct\beta}\mathbb E\left[\|\hat{\vct\beta}_A-\vct\beta\|_2^2\right].
$$
We prove this inequality by showing that for every (possibly random) algorithm $A\in\mathcal A_1(k)$, there exists $\widetilde A\in\widetilde{\mathcal A}_1(k)$ such that
$\sup_{\vct\beta}\mathbb E[\|\hat\beta_A-\vct\beta\|_2^2] \geq \sup_{\vct\beta}\mathbb E[\|\hat\beta_{\widetilde A}-\vct\beta\|_2^2]$.
To see this, we construct $\widetilde A$ based on $A$ as follows:
\begin{enumerate}
\item For every $k$-subset (duplicates allowed) 
{of all possible outputs of $A$ (which by definition are all subsets of $\mat X$)} 
and its corresponding weight vector $\vct w$, 
add $(w_i',\tilde{\vct x_i})$ to the design set of $\widetilde{ A}$,
where $w_i' =  w_i\Pr_A(\widetilde{\mat X})$.
\item The algorithm $\widetilde A$ observes all responses $\{y_i\}$ for $\{(w_i', \tilde{\vct x_i})\}$.
\item $\widetilde A$ outputs the \emph{expected} estimation of $A$;
that is, $\widetilde A(\mat X,\vct y) = \mathbb E_{\tilde{\mat X}}[\mathbb E_{\vct y}[\hat\beta_A|\widetilde{\mat X}]]$.
{Note that by definition of the estimator class ${\mathcal A}_1(k)$, all estimators $\hat{\vct\beta}_A$ \emph{conditioned on subsampled data points $\widetilde{\mat X}$}
are deterministic.
}
\end{enumerate}
We claim that $\widetilde A\in\widetilde{\mathcal A}_1(k)$ because
$$
\sum_i{w_i'} = \sum_{\widetilde{\mat X}}{\Pr(\widetilde{\mat X})\sum_{i=1}^k{w_i}} \leq k.
$$
Furthermore, by Jensen's inequality we have
$$
\mathbb E_{\widetilde{\mat X},\vct y}\left[\| \hat\beta_A-\vct\beta\|_2^2\right]
\geq \mathbb E_{\vct y}\left[\|\mathbb E_{\widetilde{\mat X}}[\hat\beta_A] - \vct\beta\|_2^2\right]
= \mathbb E_{\vct y}\left[\|\hat\beta_{\widetilde A}-\vct\beta\|_2^2\right].
$$
Taking supreme over $\vct\beta$ we complete the proof.

\subsection{Proof of Lemma \ref{lem:spectral_expected}}

\paragraph{With replacement setting}
Define $\Phi=\diag(\pi^*)$ and $\Pi=\Phi^{1/2}X\Sigma_*^{-1}X^\top\Phi^{1/2}\in\mathbb R^{n\times n}$.
The following proposition lists properties of $\Pi$:
\begin{proposition}[Properties of projection matrix]
The following properties for $\Pi$ hold:
\begin{enumerate}
\item $\mat\Pi$ is a projection matrix. That is, $\mat\Pi^2 = \mat\Pi$.
\item $\Range(\mat\Pi) = \Range(\mat\Phi^{1/2}\mat X)$.
\item The eigenvalues of $\mat\Pi$ are 1 with multiplicity $p$ and 0 with multiplicity $n-p$.
\item $\mat\Pi_{ii} = \|\mat\Pi_{i,\cdot}\|_2^2 = \pi_i^*\vct x_i^\top\Sigma_*^{-1}\vct x_i$.
\end{enumerate}
\label{prop_projection}
\end{proposition}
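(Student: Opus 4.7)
The plan is to verify the four properties one after another by direct matrix computation, using only the definition $\Sigma_* = X^\top \Phi X = (\Phi^{1/2}X)^\top(\Phi^{1/2}X)$ and the fact that $\Sigma_*$ is invertible (which holds because $X$ has full column rank and $\pi^*$ is such that the optimization objective $\tr(\Sigma_*^{-1})$ is finite; equivalently, $\Phi^{1/2}X$ has rank $p$).

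For property 1, I would expand $\Pi^2 = \Phi^{1/2}X\Sigma_*^{-1}(X^\top\Phi X)\Sigma_*^{-1}X^\top\Phi^{1/2}$ and collapse the parenthesized middle factor into $\Sigma_*$, which cancels one of the $\Sigma_*^{-1}$ factors and yields $\Pi$ back. For property 2, the inclusion $\Range(\Pi) \subseteq \Range(\Phi^{1/2}X)$ is immediate from the factorization $\Pi = (\Phi^{1/2}X)(\Sigma_*^{-1}X^\top\Phi^{1/2})$. For the reverse inclusion, for any $u \in \mathbb{R}^p$, I would compute $\Pi(\Phi^{1/2}Xu) = \Phi^{1/2}X\Sigma_*^{-1}(X^\top\Phi X)u = \Phi^{1/2}Xu$, showing that every vector in $\Range(\Phi^{1/2}X)$ is fixed by $\Pi$ and hence lies in $\Range(\Pi)$.

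For property 3, I would observe that $\Pi$ is symmetric (the middle factor $\Sigma_*^{-1}$ is symmetric) and idempotent by property 1, so its eigenvalues lie in $\{0,1\}$ and the multiplicity of $1$ equals $\rank(\Pi) = \dim\Range(\Pi) = \dim\Range(\Phi^{1/2}X)$. Since $\Sigma_* = (\Phi^{1/2}X)^\top(\Phi^{1/2}X)$ is invertible, $\Phi^{1/2}X$ must have rank $p$, so the eigenvalue $1$ has multiplicity $p$ and the eigenvalue $0$ has multiplicity $n-p$. For property 4, the identity $\Pi_{ii} = e_i^\top \Pi e_i = (\Phi^{1/2}e_i)^\top X\Sigma_*^{-1}X^\top(\Phi^{1/2}e_i) = \pi_i^* x_i^\top \Sigma_*^{-1} x_i$ is a direct computation. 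To get $\|\Pi_{i,\cdot}\|_2^2 = \Pi_{ii}$, I would use symmetry of $\Pi$ to write $\|\Pi_{i,\cdot}\|_2^2 = e_i^\top \Pi\Pi^\top e_i = e_i^\top \Pi^2 e_i$, which equals $e_i^\top \Pi e_i = \Pi_{ii}$ by property 1.

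There is no real obstacle here; each claim is a textbook manipulation for projection matrices. The only point that genuinely needs the hypotheses of the paper (as opposed to pure linear algebra) is the invertibility of $\Sigma_*$, which is used both in forming $\Pi$ and in establishing that $\rank(\Pi) = p$ in property 3.
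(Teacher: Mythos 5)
Your proposal is correct and follows essentially the same route as the paper's proof: expand and collapse $X^\top\Phi X=\Sigma_*$ for idempotency, show vectors of $\Range(\Phi^{1/2}X)$ are fixed by $\Pi$ for the range identity, count the eigenvalue-$1$ multiplicity via $\rank(\Phi^{1/2}X)=p$ (forced by invertibility of $\Sigma_*$), and use symmetry plus $\Pi^2=\Pi$ for the diagonal/row-norm identity. No gaps to report.
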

\begin{proof}
Proof of 1: By definition, $\mat\Sigma_*=\mat X^\top\mat\Phi\mat X$ and subsequently
\begin{eqnarray*}
\mat\Pi^2 &=& \mat\Phi^{1/2}\mat X\mat\Sigma_*^{-1}\mat X^\top\mat\Phi^{1/2}\mat\Phi^{1/2}\mat X\mat\Sigma_*^{-1}\mat X^\top\mat\Phi^{1/2}\\
&=& \mat\Phi^{1/2}\mat X(\mat X^\top\mat\Phi\mat X)^{-1}\mat X^\top\mat\Phi\mat X(\mat X^\top\mat\Phi\mat X)^{-1}\mat\Phi^{1/2}\\
&=& \mat\Phi^{1/2}\mat X(\mat X^\top\mat\Phi\mat X)^{-1}\mat X^\top\mat\Phi^{1/2} = \mat\Pi.
\end{eqnarray*}

Proof of 2: First note that $\Range(\mat\Pi)=\Range(\mat\Phi^{1/2}\mat X\mat\Sigma_*^{-1}\mat X^\top\mat\Phi^{1/2}) \subseteq \Range(\mat\Phi^{1/2}\mat X)$.
For the other direction, take arbirary $\vct u\in\Range(\mat\Phi^{1/2}\mat X)$ and express $\vct u$ as
$\vct u = \mat\Phi^{1/2}\mat X\vct v$ for some $\vct v\in\mathbb R^p$.
We then have
\begin{eqnarray*}
\mat\Pi\vct u &=& \mat\Phi^{1/2}\mat X\mat\Sigma_*^{-1}\mat X^\top\mat\Phi^{1/2}\vct u\\
&=& \mat\Phi^{1/2}\mat X(\mat X^\top\mat\Phi\mat X)^{-1}\mat X^\top\mat\Phi^{1/2}\mat\Phi^{1/2}\mat X\vct v\\
&=& \mat\Phi^{1/2}\mat X\vct v = \vct u
\end{eqnarray*}
and hence $\vct u\in\Range(\mat\Pi)$.

Proof of 3: Because $\mat\Sigma_*=\mat X^\top\mat\Phi\mat X$ is invertible, 
the $n\times p$ matrix $\mat\Phi^{1/2}\mat X$ must have full column rank and hence 
$\Kernel(\mat\Phi^{1/2}\mat X) = \{\vct 0\}$.
Consequently, $\dim(\Range(\mat\Pi)) = \dim(\Range(\mat\Phi^{1/2}\mat X)) = p - \dim(\Kernel(\mat\Phi^{1/2}\mat X)) = p$.
On the other hand, the eigenvalues of $\mat\Pi$ must be either 0 or 1 because $\mat\Pi$ is a projection matrix.
So the eigenvalues of $\mat\Pi$ are 1 with multiplicity $p$ and 0 with multiplicity $n-p$.

Proof of 4: By definition,
$$
\mat\Pi_{ii} = \sqrt{\pi_i^*}\vct x_i^\top\mat\Sigma_*^{-1}\vct x_i\sqrt{\pi_i^*} = \pi_i^*\vct x_i^\top\mat\Sigma_*^{-1}\vct x_i.
$$
In addition, $\mat\Pi$ is a symmetric projection matrix. Therefore,
$$
\mat\Pi_{ii} = [\mat\Pi^2]_{ii} = \mat\Pi_{i,\cdot}^\top\mat\Pi_{i,\cdot} = \|\mat\Pi_{i,\cdot}\|_2^2.
$$
\end{proof}

The following lemma shows that a spectral norm bound over deviation of the projection matrix implies spectral approximation of the underlying (weighted) covariance matrix.
\begin{lemma}[\cite{graph-sparsification}, Lemma 4]
Let $\mat\Pi = \mat\Phi^{1/2}\mat X\mat\Sigma_*^{-1}\mat X^\top\mat\Phi^{1/2}$ and $\mat W$ be an $n\times n$ non-negative diagonal matrix.
If $\|\mat\Pi\mat W\mat\Pi - \mat\Pi\|_2 \leq \epsilon$ for some $\epsilon\in(0,1/2)$ then
$$
(1-\epsilon)\vct u^\top\mat\Sigma_*\vct u \leq \vct u^\top\tilde{\mat\Sigma}_*\vct u \leq (1+\epsilon)\vct u^\top\mat\Sigma_*\vct u, \;\;\;\; \forall\vct u\in\mathbb R^p,
$$
where $\mat\Sigma_* = \mat X^\top\mat\Phi\mat X$ and $\tilde{\mat\Sigma}_* = \mat X^\top\mat W^{1/2}\mat\Phi\mat W^{1/2}\mat X$.
\label{lem_proj_spectral_approx}
\end{lemma}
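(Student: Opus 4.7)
The plan is to follow the standard Spielman--Srivastava change-of-variables argument that reduces a spectral approximation bound on $\Sigma_*$ to the assumed operator-norm bound on $\Pi W \Pi - \Pi$. The key is to exploit the fact, established in Proposition \ref{prop_projection}, that $\Pi$ is the orthogonal projection onto $\Range(\Phi^{1/2} X)$, so any vector of the form $v = \Phi^{1/2} X u$ satisfies $\Pi v = v$.

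First I would fix an arbitrary $u \in \mathbb{R}^p$ and set $v = \Phi^{1/2} X u \in \mathbb{R}^n$. By Proposition \ref{prop_projection}(2), $v \in \Range(\Pi)$, and by Proposition \ref{prop_projection}(1), $\Pi v = v$. Next I would rewrite both quadratic forms in terms of $v$. For the left-hand side, $u^\top \Sigma_* u = u^\top X^\top \Phi X u = \|v\|_2^2$. For the right-hand side, since the diagonal matrices $W$ and $\Phi$ commute, $W^{1/2} \Phi W^{1/2} = \Phi^{1/2} W \Phi^{1/2}$, so $u^\top \widetilde{\Sigma}_* u = u^\top X^\top \Phi^{1/2} W \Phi^{1/2} X u = v^\top W v$.

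The comparison of these two quadratic forms is then straightforward. Using $v = \Pi v$ twice, I can insert $\Pi$ on both sides and write $v^\top W v = v^\top \Pi W \Pi v$ and $v^\top v = v^\top \Pi v$. Subtracting,
\[
v^\top W v - v^\top v \;=\; v^\top(\Pi W \Pi - \Pi) v,
\]
so by the operator-norm bound hypothesis,
\[
\bigl| v^\top W v - v^\top v \bigr| \;\leq\; \|\Pi W \Pi - \Pi\|_2 \cdot \|v\|_2^2 \;\leq\; \epsilon \, u^\top \Sigma_* u.
\]
Since $v^\top v = u^\top \Sigma_* u$ and $v^\top W v = u^\top \widetilde{\Sigma}_* u$, this rearranges to the desired two-sided bound $(1-\epsilon) u^\top \Sigma_* u \leq u^\top \widetilde{\Sigma}_* u \leq (1+\epsilon) u^\top \Sigma_* u$, and $u$ was arbitrary.

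There is essentially no serious obstacle here; the only substantive point is the observation that $v = \Phi^{1/2} X u$ lies in the range of the projection $\Pi$, which permits the free insertion of $\Pi$ factors and thereby converts the quadratic form deviation $v^\top W v - v^\top v$ into one controlled directly by $\|\Pi W \Pi - \Pi\|_2$. The remaining work is purely algebraic and uses only that $W$ and $\Phi$ are diagonal (hence commuting) and that $\Pi$ is symmetric and idempotent.
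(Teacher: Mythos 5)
Your proof is correct: the identity $\Pi(\Phi^{1/2}Xu)=\Phi^{1/2}Xu$, the rewriting $u^\top\Sigma_*u=\|v\|_2^2$ and $u^\top\tilde{\Sigma}_*u=v^\top Wv$, and the bound $|v^\top(\Pi W\Pi-\Pi)v|\leq\epsilon\|v\|_2^2$ together give exactly the claimed two-sided inequality. The paper itself does not prove this lemma but cites it from Spielman and Srivastava, and your argument is essentially the standard proof of that cited result, so there is nothing to add.
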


We next proceed to find an appropriate diagonal matrix $W$ and validate Lemma \ref{lem_proj_spectral_approx}.
Define $w_j^*=\pi_j^*/(kp_j^{(2)})$ and $\hat\Sigma_{\hat W}=\sum_{t=1}^k{w_{i_t}^* x_{i_t}x_{i_t}^\top}$.
It is obvious that $\hat\Sigma_{\hat W}\preceq\hat\Sigma_{\hat S}$ because $w_i = \lceil w_i^*\rceil \geq w_i^*$.
It then suffices to lower bound the spectrum of $\hat\Sigma_{\hat W}$ by the spectrum of $\hat\Sigma_*$.
Define random diagonal matrix $W^{(1)}$ as ($\mathbb I[\cdot]$ is the indicator function)
$$
W^{(1)}_{jj} = \frac{\sum_{t=1}^k{w_{i_t}^*\mathbb I[i_t=j]}}{\pi_j^*}, \;\;\;\;\;j=1,\cdots,n.
$$
Then by definition, $\tilde\Sigma_* = \mat X^\top\mat W^{1/2}\mat\Phi\mat W^{1/2}\mat X = \sum_{t=1}^k{w_{i_t}^*x_{i_t}x_{i_t}^\top} =  \hat\Sigma_{\hat R}$.
The following lemma bounds the perturbation $\|\Pi W\Pi-\Pi\|_2$ for this particular choice of $W$.
\begin{lemma}
For any $\epsilon>0$, 
$$
\Pr\left[\|\Pi W^{(1)}\Pi-\Pi\|_2>\epsilon\right] \leq 2\exp\left\{-C\cdot\frac{k\epsilon^2}{p\log k}\right\},
$$
where $C>0$ is an absolute constant.
\label{lem:concentration_withrep}
\end{lemma}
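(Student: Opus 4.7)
The approach mimics Spielman–Srivastava's spectral sparsification argument: write $\Pi W^{(1)}\Pi$ as a sum of independent rank-one random matrices and invoke a matrix concentration inequality. The whole reason for choosing $p_j^{(1)} \propto \pi_j^* x_j^\top \Sigma_*^{-1} x_j = \Pi_{jj}$ (the leverage-score distribution) is to force the individual summands to have a \emph{deterministic} operator-norm bound.

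First I would rewrite the target quantity as an i.i.d.\ sum. Define
$$Y_t \;:=\; \frac{w_{i_t}^*}{\pi_{i_t}^*}\,\Pi_{\cdot, i_t}\Pi_{i_t,\cdot}, \qquad t=1,\ldots,k,$$
so that $\Pi W^{(1)}\Pi = \sum_{t=1}^k Y_t$ by inspection of the definition of $W^{(1)}$. Since $i_1,\ldots,i_k \stackrel{\text{iid}}{\sim} P^{(1)}$ with replacement, the $Y_t$ are iid PSD rank-one matrices.

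Next I would compute the three moment quantities needed for matrix Bernstein. Using $w_j^*/\pi_j^* = 1/(k p_j^{(1)})$ and Proposition \ref{prop_projection}(1), which gives $\sum_j \Pi_{\cdot,j}\Pi_{j,\cdot} = \Pi^2 = \Pi$, we get $\mathbb{E}[Y_t] = \Pi/k$, hence $\mathbb{E}[\Pi W^{(1)}\Pi] = \Pi$. The pivotal calculation is the deterministic operator-norm bound
$$\|Y_t\|_{\mathrm{op}} \;=\; \frac{1}{k p_{i_t}^{(1)}}\,\|\Pi_{\cdot,i_t}\|_2^2 \;=\; \frac{\Pi_{i_t,i_t}}{k\, p_{i_t}^{(1)}} \;=\; \frac{p}{k},$$
where the last equality uses $p_j^{(1)}=\Pi_{jj}/p$ from Proposition \ref{prop_projection}(4). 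This uniform bound is precisely what leverage-score sampling buys us and would fail badly for uniform or $\ell_2$-length based sampling. For the matrix variance, $Y_t^2 = (p/k)\,Y_t$ gives $\sum_t\mathbb{E}[Y_t^2] = (p/k)\Pi$, so $\|\sum_t\mathbb{E}[(Y_t-\mathbb{E}Y_t)^2]\|_{\mathrm{op}} \leq p/k$.

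Finally I would apply Tropp's matrix Bernstein (or, in the spirit of Spielman–Srivastava, the Ahlswede–Winter / Rudelson bound for iid rank-one sums) to $\sum_t(Y_t-\mathbb{E}Y_t)$. With deterministic per-summand bound $R=p/k$ and matrix variance $v=p/k$, matrix Bernstein yields, for $\epsilon\in(0,1)$,
$$\Pr\bigl[\|\Pi W^{(1)}\Pi - \Pi\|_{\mathrm{op}} > \epsilon\bigr] \;\leq\; 2d\exp\!\left(-\frac{\epsilon^2/2}{p/k + (p/k)\epsilon/3}\right) \;\leq\; 2d \exp\!\left(-\frac{C k\epsilon^2}{p}\right).$$
Because each $Y_t$ lives in $\mathrm{Range}(\Pi)$ which has dimension $p$, an intrinsic-dimension version of matrix Bernstein takes $d$ to be of order $p$; absorbing the prefactor via $\log d \leq \log k$ (since $p\leq k$) into the exponent yields the stated form $2\exp(-Ck\epsilon^2/(p\log k))$.

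The main obstacle is cosmetic rather than substantive: matching the precise $\log k$ factor in the denominator, as opposed to the slightly cleaner $2p\exp(-Ck\epsilon^2/p)$ that raw matrix Bernstein delivers. The $\log k$ factor arises from folding the dimension prefactor into the exponent, and a slightly sharper route is to invoke a noncommutative Khintchine / Rudelson bound on $\mathbb{E}\|\Pi W^{(1)}\Pi - \Pi\|_{\mathrm{op}}$ (which carries a $\sqrt{\log k}$ already) and then concentrate via a bounded-difference argument; this reproduces the same form with the correct logarithmic factor.
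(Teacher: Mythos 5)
Your proposal is correct and uses exactly the paper's decomposition: the paper sets $v_t=\sqrt{k w_{i_t}^*/\pi_{i_t}^*}\,\Pi_{i_t\cdot}$ and writes $\Pi W^{(1)}\Pi$ as $\frac{1}{k}\sum_{t=1}^k v_tv_t^\top$, which is the same as your $\sum_t Y_t$, and it relies on the same two facts you isolate — $\mathbb E[v_tv_t^\top]=\Pi$ with $\|\Pi\|_2=1$, and the leverage-score cancellation $\|v_t\|_2^2=k\,\Pi_{i_ti_t}/(k p^{(1)}_{i_t})=p$ (your $\|Y_t\|_2=p/k$). The only divergence is the concentration tool: the paper plugs these bounds directly into the Rudelson-type sampling inequality (Lemma \ref{lem:matrix_rudelson}), which already carries the $\log k$ factor in the exponent and gives the stated form with a clean prefactor $2$, whereas you invoke matrix Bernstein, obtaining the sharper $2p\exp(-Ck\epsilon^2/p)$ and then folding the dimensional prefactor into the exponent via $\log p\leq\log k$. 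That folding is legitimate but deserves one more line than you give it: the inequality $\log p \leq c\,k\epsilon^2/p$ needed for absorption does not hold for all $\epsilon>0$, only in the regime where the claimed bound is nontrivial (i.e.\ $k\epsilon^2/(p\log k)$ exceeds a constant); since the target bound is vacuous otherwise and the absolute constant $C$ is yours to choose, the statement follows — and in fact your route yields a bound at least as strong as the paper's. So there is no gap, just a different (and slightly sharper) off-the-shelf matrix concentration inequality in place of the paper's Lemma \ref{lem:matrix_rudelson}.
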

\begin{proof}
Define $n$-dimensional random vector $v$ as
\footnote{For those $j$ with $\pi_j^*=0$, we have by definition that $p_j^{(2)}=0$.}
$$
\Pr\left[v=\sqrt{\frac{kw_j^*}{\pi_j^*}}\Pi_{j\cdot}\right] = p_j^{(2)}, \;\;\;\;\;\;
j=1,\cdots,n.
$$
Let $v_1,\cdots,v_k$ be i.i.d.~copies of $v$ and define $A_t=v_tv_t^\top$.
By definition, $\Pi W^{(1)}\Pi$ is equally distributed with $\frac{1}{k}\sum_{t=1}^k{A_t}$.
In addition, 
$$
\mathbb EA_t = \sum_{j=1}^n{\frac{kw_j^*}{\pi_j^*}p_j^{(2)}\Pi_{j\cdot}\Pi_{j\cdot}^\top} = \Pi^2=\Pi,
$$
which satisfies $\|\mathbb EA_t\|_2=1$, and
$$
\|A_t\|_2 = \|v_t\|_2^2 \leq \sum_{1\leq j\leq n}\frac{kw_j^*}{\pi_j^*}\|\Pi_{j\cdot}\|_2^2 \leq \sup_{1\leq j\leq n}\frac{p}{x_j^\top\Sigma_*^{-1}x_j}\cdot x_j^\top\Sigma_*^{-1}x_j = p.
$$
Applying Lemma \ref{lem:matrix_rudelson} we have that
$$
\Pr\left[\|\Pi W^{(1)}\Pi-\Pi\|_2>\epsilon\right] \leq 2\exp\left\{-C\cdot\frac{k\epsilon^2}{p\log k}\right\}.
$$
\end{proof}

With Lemma \ref{lem:concentration_withrep}, we know that $\|\Pi W\Pi-\Pi\|_2\leq\epsilon$ holds with probability at least 0.9
if $\frac{p\log k}{k} = O(\epsilon^2)$.
Eq.~(\ref{eq:spectral_lower_bound}) then holds with high probability by Lemma \ref{lem_proj_spectral_approx}.

\paragraph{Without replacement setting}
Define independently distributed random matrices $A_1,\cdots,A_n$ as
$$
A_j = (w_j-\pi_j^*)x_jx_j^\top,\;\;\;\;\;\;j=1,\cdots,n.
$$
Note that $w_j$ is a random Bernoulli variable with $\Pr[w_j=1]=kp_j^{(2)}=\pi_j^*$.
Therefore, $\mathbb EA_j=0$.
In addition, 
$$
\sup_{1\leq j\leq n}\|A_j\|_2 \leq \sup_{1\leq j\leq n}\|x_j\|_2^2 \leq \|X\|_{\infty}^2 \;\;\;\;a.s.
$$
and
$$
\left\|\sum_{j=1}^n{\mathbb EA_j^2}\right\|_2
= \left\|\sum_{j=1}^n{\pi_j^*(1-\pi_j^*)\|x_j\|_2^2x_jx_j^\top}\right\|_2
\leq \|X\|_{\infty}^2\|\Sigma_*\|_2.
$$
Noting that $\sum_{j=1}^n{A_j}=X_{\hat S}^\top X_{\hat S}-X^\top\diag(\pi^*) X = \hat\Sigma_{\hat S}-\Sigma_*$
and invoking Lemma \ref{lem:bernstein} with $t=\epsilon \lambda_{\min}(\Sigma_*)$ we have that
$$
\Pr\left[\|\hat\Sigma_{\hat S}-\Sigma_*\|_2 > \epsilon\lambda_{\min}(\Sigma_*)\right] \leq 2p\exp\left\{-\frac{\epsilon^2\lambda_{\min}(\Sigma_*)^2}{3\|\Sigma_*\|_2\|X\|_{\infty}^2 + 2\|X\|_\infty^2\cdot \epsilon\lambda_{\min}(\Sigma_*)}\right\}.
$$
Equating the right-hand side with $O(1)$ we have that
$$
\|\Sigma_*^{-1}\|_2\kappa(\Sigma_*)\|X\|_{\infty}^2\log p = O(\epsilon^2).
$$
Finally, by Weyl's theorem we have that
$$
\big|\lambda_\ell(\hat\Sigma_{\hat S})-\lambda_\ell(\Sigma_*)\big| \leq \|\hat\Sigma_{\hat S}-\Sigma_*\|_2 \leq \epsilon \lambda_{\min}(\Sigma_*) \leq \epsilon\lambda_{\ell}(\Sigma_*),
$$
and hence the proof of Lemma \ref{lem:spectral_expected}.

\subsection{Proof of Lemma \ref{lem:spectral_deterministic}}

\paragraph{With replacement setting}
Define $\tilde w_j^*=k/T\cdot w_j^* = \pi_j^*/(Tp_j^{(1)})$ 
and let $\hat\Sigma_T = \sum_{t=1}^T{\tilde w_{i_t}^*x_{i_t}x_{i_t}^\top}$.
Because $\hat\Sigma_T = \frac{k}{T}\hat\Sigma_{\hat\Sigma_W} \preceq \hat\Sigma_{\hat S}$, we have that
$\hat\Sigma_{\hat S}\succeq \frac{T}{k}\hat\Sigma_T$ and hence $z^\top\hat\Sigma_{\hat S}z \geq \frac{T}{k}z^\top\hat\Sigma_T z$ for all $z\in\mathbb R^p$.
Therefore, to lower bound the spectrum of $\hat\Sigma_{\hat S}$ it suffices to lower bound the spectrum of $\hat\Sigma_T$.

Define diagonal matrix $W^{(2)}$ as
$$
W^{(2)}_{jj} = \frac{\sum_{t=1}^T{\tilde w_{i_t}^*\mathbb I[i_t=j]}}{\pi_j^*}, \;\;\;\;\;\;j=1,\cdots,n.
$$
We have that $\tilde\Sigma_*=\hat\Sigma_T$ for this particular choice of $W$.
Following the same analysis in the proof of Lemma \ref{lem:concentration_withrep}, we have that
for every $t>0$
$$
\Pr\left[\|\Pi W^{(2)}\Pi - \Pi\|_2 > t\right] \leq 2\exp\left\{-C\cdot \frac{T\epsilon^2}{p\log T}\right\}.
$$
Set $t=O(1)$ and equate the right-hand side of the above inequality with $O(1)$.
We then have
$$
\Pr\left[\hat\Sigma_T\succeq \Omega(1)\cdot \hat\Sigma_*\right] = \Omega(1)\;\;\;\;\;\;\text{if}\;\;p\log T/T=O(1).
$$
Subsequently, under the condition that $p\log T/T=O(1)$, with probability at least 0.9 it holds that
$$
\hat\Sigma_{\hat S} \succeq \Omega(T/k)\cdot \Sigma_*,
$$
which completes the proof of Lemma \ref{lem:spectral_deterministic} for the without replacement setting.

\paragraph{Without replacement setting}
Define $\hat\Sigma_{\hat R}=X_{R_T}^\top X_{R_T} = \sum_{t=1}^T{\pi_{i_t}x_{i_t}x_{i_t}^\top}$.
Conditioned on $R_T$, the subset $\hat S=S_T$ is selected using the same procedure of the soft-constraint algorithm Figure \ref{alg:subsampling_expected}
on $X_{R_T}=\{x_i\}_{i\in R_T}$.
Subsequently, following analysis in the proof of Lemma \ref{lem:spectral_expected} we have
$$
\Pr\left[\|\hat\Sigma_{\hat S}-\hat\Sigma_T\|_2>t\big| R_T\right]\leq 2p\cdot \exp\left\{-\frac{t^2}{3\|\hat\Sigma_T\|_2\|X\|_{\infty}^2 + 2\|X\|_{\infty}^2t}\right\}.
$$
Setting $t=O(1)\cdot \lambda_{\min}(\Sigma_T)$ we have that, if $\|\hat\Sigma_T^{-1}\|_2\kappa(\hat\Sigma_T)\|X\|_{\infty}^2\log p=O(1)$, then
with probability at least 0.95 conditioned on $\hat\Sigma_T$ 
\begin{equation}
\Omega(1)\cdot \hat\Sigma_T \preceq \hat\Sigma_{\hat S} \preceq O(1)\cdot \hat\Sigma_{T}.
\label{eq:tn_chain1}
\end{equation}

It remains to establish spectral similarity between $\hat\Sigma_T$ and $\frac{T}{n}\Sigma_*$, a scaled version of $\Sigma_*$.
Define deterministic matrices $A_1,\cdots,A_n$ as
$$
A_j = \pi_j^*x_jx_j^\top-\frac{1}{n}\Sigma_*, \;\;\;\;\;\;j=1,\cdots,n.
$$
By definition, $\sum_{j=1}^n{A_j}=0$ and $\sum_{t=1}^T{A_{\sigma(t)}}=\hat\Sigma_T-\frac{T}{n}\Sigma_*$, where $\sigma$ is a random permutation from $[n]$ to $[n]$.
In addition, 
$$
\sup_{1\leq j\leq n}\|A_j\|_2 \leq \frac{1}{n}\|\Sigma_*\|_2 + \sup_{1\leq j\leq n}\|x_j\|_2^2 \leq 2\|X\|_{\infty}^2
$$
and
\begin{align*}
\frac{T}{n}\left\|\sum_{j=1}^n{A_j^2}\right\|_2 
&\leq \frac{2T}{n}\left(\left\|\sum_{j=1}^n{(\pi_j^*)^2\|x_i\|_2^2x_ix_i^\top}\right\|_2 + \frac{1}{n^2}\|\Sigma_*\|_2^2\right)\\
&\leq \frac{2T}{n}\left(\|X\|_{\infty}^2\left\|\sum_{j=1}^n{\pi_j^*x_ix_i^\top}\right\|_2 + \frac{1}{n^2}\|\Sigma_*\|_2^2\right)\\
&\leq \frac{2T}{n}\left(\|X\|_{\infty}^2\|\Sigma_*\|_2 + \frac{1}{n^2}\|\Sigma_*\|_2^2\right)\\
&\leq \frac{4T}{n}\|X\|_{\infty}^2\|\Sigma_*\|_2.
\end{align*}
Invoking Lemma \ref{lem:bernstein_worep}, we have that
$$
\Pr\left[\left\|\hat\Sigma_T-\frac{T}{n}\Sigma_*\right\|_2 > t\right] \leq p\exp\left\{-t^2\left[\frac{48T}{n}\|X\|_{\infty}^2\|\Sigma_*\|_2 +8\sqrt{2}\|X\|_{\infty}^2 t \right]^{-1}\right\}.
$$
Set $t=O(T/n)\cdot \lambda_{\min}(\Sigma_*)$.
We then have that, if $\|\Sigma_*^{-1}\kappa(\Sigma_*)\|X\|_{\infty}^2\log p=O(T/n)$ holds, then with probability at least 0.95 
\begin{equation}
\Omega(T/n)\cdot \Sigma_* \preceq \hat\Sigma_T \preceq O(T/n)\cdot \Sigma_*.
\label{eq:tn_chain2}
\end{equation}
Combining Eqs.~(\ref{eq:tn_chain1},\ref{eq:tn_chain2}) and noting that $\|\hat\Sigma_T^{-1}\|_2\leq O(\frac{n}{T})\|\Sigma_*^{-1}\|_2$, $\kappa(\Sigma_T)\leq O(1)\kappa(\Sigma_*)$, we complete the proof of Lemma \ref{lem:spectral_deterministic} under the without replacement setting.

\subsection{Proof of Lemma \ref{lem:support-bound}}\label{subsec:proof-support}

Let $ f(\vct\pi;\vct\lambda,\tilde{\vct\lambda},\mu)$ be the Lagrangian muliplier function of the without replacement formulation of Eq.~(\ref{eq_c_opt}):
$$
 f(\vct\pi;\vct\lambda,\tilde{\vct\lambda},\mu) 
= f(\vct\pi;X) - \sum_{i=1}^n{\lambda_i\pi_i} + \sum_{i=1}^n{\tilde\lambda_i\left(\pi_i-\frac{1}{k}\right)} + \mu\left(\sum_{i=1}^n{\pi_i} - 1\right).
$$
Here $\{\lambda_i\}_{i=1}^n\geq 0$, $\{\tilde\lambda_i\}_{i=1}^n\geq 0$ and $\mu\geq 0$ are Lagrangian multipliers for constraints $\pi_i\geq 0$,
$\pi_i\leq 1$ and $\sum_i{\pi_i}\leq k$, respectively.
By KKT condition, $\frac{\partial f}{\partial\pi_i} \big|_{\vct\pi^*} = \vct 0$ and hence
$$
-\frac{\partial f}{\partial\pi_i}\bigg|_{\vct\pi^*}
\;\;=\;\;
\vct x_i^\top\mat\Sigma_*^{-2}\vct x_i
\;\;=\;\;
\tilde\lambda_i-\lambda_i+\mu,\;\;\;\;\;\;i=1,\cdots,n,
$$
where $\mat\Sigma_* = \mat X^\top\diag(\vct\pi^*)\mat X$ is a $p\times p$ positive definite matrix.

Split the index set $[n]$ into three disjoint sets defined as $A=\{i\in[n]: \vct\pi_i^*=1\}$, $B=\{i\in[n]: 0<\vct\pi_i^*<1\}$ and 
$C=\{i\in[n]: \vct\pi_i^*=0\}$.
Note that $\|\vct\pi^*\|_0=|A|+|B|$ and $|A|\leq k$.
Therefore, to upper bound $\|\vct\pi^*\|_0$ it suffices to upper bound $|B|$.
By complementary slackness, for all $i\in B$ we have that $\tilde\lambda_i = \lambda_i = 0$;
that is,
\begin{equation}
\vct x_i^\top\mat\Sigma_*^{-2}\vct x_i 
= \langle\phi(\vct x_i), \psi(\mat\Sigma_*^{-2})\rangle
= \mu, \;\;\;\;\;\;
\forall i\in B,
\label{eq_linear_system}
\end{equation}
where $\phi:\mathbb R^p\to\mathbb R^{p(p+1)/2}$ is the mapping defined in Assumption \ref{asmp:general-position}
and $\psi(\cdot)$ takes the upper triangle of a symmetric matrix and vectorizes it into a $\frac{p(p-1)}{2}$-dimensional vector.
Assume by way of contradiction that $|B|>p(p+1)/2$ and let $\vct x_1,\cdots,\vct x_{p(p+1)/2+1}$ be arbitrary distinct $\frac{p(p+1)}{2}+1$ rows whose indices belong to $B$.
Eq.~(\ref{eq_linear_system}) can then be cast as a homogenous linear system with $\frac{p(p+1)}{2}+1$ variables and equations as follows:
$$
\left[\begin{array}{c}
\tilde\phi(\vct x_1)\\
\tilde\phi(\vct x_2)\\
\vdots\\
\tilde\phi(\vct x_{p(p+1)/2+1})\end{array}\right]
\left[\begin{array}{c}
\psi(\mat\Sigma_*^{-2})\\
-\mu\end{array}\right]
\;\; = \;\;
\mat 0.
$$
Under Assumption \ref{asmp:general-position}, $\tilde\Phi = [\tilde\phi(\vct x_1);\cdots;\tilde\phi(\vct x_{p(p+1)/2+1})]^\top$ is invertible and hence
both $\psi(\mat\Sigma_*^{-2})$ and $\mu$ must be zero.
This contradicts the fact that $\mat\Sigma_*^{-2}$ is positive definite.

\appendix

\section{Technical lemmas}

\begin{lemma}
Let $\mathbb B=\{x\in\mathbb R^p:\|x\|_2^2\leq B^2\}$ and $\vol(\mathbb B)=\int_{\mathbb B}1\ud x$ be the volume of $\mathbb B$. Then 
$\int_{\mathbb B}xx^\top\ud x = \frac{B^2}{p+2}\vol(\mathbb B)I_{p\times p}$.
\label{lem:integration}
\end{lemma}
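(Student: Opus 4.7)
The plan is to exploit the rotational symmetry of the ball to reduce the matrix integral to a single scalar computation. First I would observe that for any orthogonal matrix $Q$, the change of variables $x\mapsto Qx$ leaves $\mathbb B$ invariant (since $\|Qx\|_2=\|x\|_2$), so $\int_{\mathbb B}(Qx)(Qx)^\top\ud x = Q\bigl(\int_{\mathbb B}xx^\top\ud x\bigr)Q^\top$ must equal $\int_{\mathbb B}xx^\top\ud x$. The only symmetric matrices that commute with every orthogonal conjugation are scalar multiples of the identity, so $\int_{\mathbb B}xx^\top\ud x = cI_{p\times p}$ for some scalar $c\geq 0$.

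Next I would pin down $c$ by taking traces on both sides. Using linearity of trace and $\tr(xx^\top)=\|x\|_2^2$, we have
\begin{equation*}
cp = \tr\!\left(\int_{\mathbb B} xx^\top\ud x\right) = \int_{\mathbb B}\|x\|_2^2\ud x,
\end{equation*}
so it suffices to evaluate $\int_{\mathbb B}\|x\|_2^2\ud x$. I would do this in spherical coordinates: writing $\sigma_{p-1}$ for the surface area of the unit sphere in $\mathbb R^p$, one has $\int_{\mathbb B}\|x\|_2^2\ud x = \sigma_{p-1}\int_0^B r^2\cdot r^{p-1}\ud r = \sigma_{p-1}\,\frac{B^{p+2}}{p+2}$, while analogously $\vol(\mathbb B) = \sigma_{p-1}\int_0^B r^{p-1}\ud r = \sigma_{p-1}\,\frac{B^p}{p}$. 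Dividing yields $\int_{\mathbb B}\|x\|_2^2\ud x = \frac{p}{p+2}B^2\,\vol(\mathbb B)$, and therefore $c = \frac{B^2}{p+2}\vol(\mathbb B)$, which is exactly the claimed expression.

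There is no real obstacle here — the symmetry argument collapses the problem to a one-dimensional radial integral, and the precise value $\sigma_{p-1}$ cancels between the numerator and denominator, so one never needs an explicit formula for it. The only care needed is to justify that a symmetric matrix invariant under all orthogonal conjugations is a multiple of the identity, which follows from choosing $Q$ to be a rotation swapping two coordinates (to force off-diagonal entries to vanish and diagonal entries to coincide).
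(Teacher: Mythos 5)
Your proof is correct and follows essentially the same route as the paper's: a symmetry argument reduces the matrix integral to a scalar multiple of the identity, and the scalar is then determined by a radial (spherical-coordinate) computation, with your trace step being the same as the paper's identity $\mathbb E[x_1^2]=\frac{1}{p}\mathbb E\|x\|_2^2$ under the uniform distribution on the ball. The only difference is cosmetic: you phrase it as a direct integral with the sphere area $\sigma_{p-1}$ canceling, while the paper phrases it probabilistically, and you spell out the orthogonal-invariance justification that the paper leaves as ``by symmetry.''
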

\begin{proof}
Let $U$ be the uniform distribution in the $p$-dimensional ball of radius $B$.
By definition, $\int_{\mathbb B}xx^\top\ud x = \vol(\mathbb B)\mathbb E_{x\sim U}[xx^\top]$.
By symmetry, $\mathbb E_{x\sim U}[xx^\top] = c\cdot I_{p\times p}$ for some constant $c$ that depends on $p$ and $B$.
To determine the constant $c$, note that
$$
c = \mathbb E[x_1^2] = \frac{1}{p}\mathbb E\|x\|_2^2 = \frac{1}{p}\frac{\int_0^B{r^{p-1}\cdot r^2\ud r}}{\int_0^B{r^{p-1}\ud r}} = \frac{B^2}{p+2}.
$$
\end{proof}

\begin{lemma}[\cite{rudelson2007sampling}]
Let $x$ be a $p$-dimensional random vector such that $\|x\|_2\leq M$ almost surely and $\|\mathbb Exx^\top\|_2\leq 1$.
Let $x_1,\cdots,x_n$ be i.i.d.~copies of $x$.
Then for every $t\in(0,1)$
$$
\Pr\left[\left\|\frac{1}{n}\sum_{i=1}^n{x_ix_i^\top}-\mathbb Exx^\top\right\|_2 > t\right] \leq 2\exp\left\{-C\cdot \frac{nt^2}{M^2\log n}\right\},
$$
where $C>0$ is some universal constant.
\label{lem:matrix_rudelson}
\end{lemma}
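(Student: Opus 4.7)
The plan is to follow Rudelson's original argument: first symmetrize to reduce to a Rademacher sum, then apply the non-commutative Khintchine inequality to control that sum, use a self-bounding step to get a bound in expectation, and finally upgrade to the stated exponential tail via either higher-moment control or Talagrand's concentration inequality.

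First, write $S_n = \frac{1}{n}\sum_{i=1}^n x_i x_i^\top$ and $W_n = S_n - \mathbb{E}[xx^\top]$. Introducing an i.i.d.\ Rademacher sequence $\epsilon_1,\dots,\epsilon_n$ independent of the $x_i$, the standard symmetrization trick (comparing $W_n$ to an independent copy and then inserting signs) gives
$$\mathbb{E}\|W_n\|_2 \;\leq\; \frac{2}{n}\,\mathbb{E}\Bigl\|\sum_{i=1}^n \epsilon_i\, x_i x_i^\top\Bigr\|_2.$$
Conditioning on $(x_i)$ and applying the non-commutative Khintchine (Lust-Piquard / Buchholz) inequality for Rademacher sums of Hermitian matrices yields
$$\mathbb{E}_\epsilon\Bigl\|\sum_i \epsilon_i\, x_i x_i^\top\Bigr\|_2 \;\leq\; C\sqrt{\log n}\,\Bigl\|\sum_i (x_i x_i^\top)^2\Bigr\|_2^{1/2} \;\leq\; CM\sqrt{\log n}\,\Bigl\|\sum_i x_i x_i^\top\Bigr\|_2^{1/2},$$
where the last step uses $(x_i x_i^\top)^2 = \|x_i\|_2^2\, x_i x_i^\top$ together with the almost-sure bound $\|x_i\|_2\leq M$.

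Taking outer expectation, applying Jensen, and using the trivial bound $\|S_n\|_2\leq \|\mathbb{E}xx^\top\|_2 + \|W_n\|_2 \leq 1 + \|W_n\|_2$ produces a self-referential inequality
$$\mathbb{E}\|W_n\|_2 \;\lesssim\; \frac{M\sqrt{\log n}}{\sqrt{n}}\,\sqrt{1 + \mathbb{E}\|W_n\|_2},$$
which I would solve (quadratic formula on the implicit variable $\mathbb{E}\|W_n\|_2$) to get $\mathbb{E}\|W_n\|_2 \lesssim M\sqrt{\log n / n}$ in the regime where this quantity is at most a constant.

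To upgrade to the tail bound $\exp(-Cnt^2/(M^2\log n))$, I would either (i) iterate the symmetrization-plus-Khintchine estimate with the $q$-dependent constant $C\sqrt{q\vee\log n}$ to control $(\mathbb{E}\|W_n\|_2^q)^{1/q}$, then apply Markov's inequality to $\|W_n\|_2^q$ with $q$ optimized against $t$; or (ii) apply Talagrand's concentration inequality to the empirical supremum $\|W_n\|_2 = \sup_{\|u\|_2=1}\,|u^\top W_n u|$, whose individual summands are uniformly bounded by $2M^2/n$, and center around the expectation bound just obtained. The \emph{main obstacle} is exactly this last step: getting a dimension-free sub-Gaussian tail while paying only $\log n$ (rather than the $\log p$ one would get from matrix Bernstein) inside the exponent is the characteristic feature of Rudelson-type bounds and comes directly from the non-commutative Khintchine constant for Rademacher sums of rank-one Hermitian matrices. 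Once both the expectation bound and the $q$-th moment control are in hand, converting to the stated exponential tail is standard.
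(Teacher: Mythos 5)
This lemma is not proved in the paper at all---it is imported verbatim from the cited Rudelson--Vershynin sampling paper---and your outline reproduces essentially the argument of that source: symmetrization, Rudelson's noncommutative-Khintchine lemma giving the factor $M\sqrt{\log n}\,\bigl\|\sum_i x_ix_i^\top\bigr\|_2^{1/2}$, the self-bounding inequality for $\mathbb{E}\|W_n\|_2$, and a moment-method or Talagrand upgrade to the exponential tail, which is the standard and correct route. The only details worth spelling out are that the $\sqrt{\log n}$ (rather than $\sqrt{\log p}$) constant relies on the rank-at-most-$n$ structure of the Rademacher sum (take Schatten norm of order about $\log n$), and that the regime $t\lesssim M\sqrt{\log n/n}$, where the expectation bound gives nothing, is handled trivially by choosing the universal constant $C$ small enough that the claimed probability bound exceeds one there.
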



\begin{lemma}[Corollary 5.2 of \citep{mackey2014matrix}, Matrix Bernstein]
Let $(Y_k)_{k\geq 1}$ be a sequence of random $d$-dimensional Hermitian matrices that satisfy
$$
\mathbb EY_k = 0 \;\;\;\;\;\text{and}\;\;\;\;\; \|Y_k\|_2\leq R \;\;\;a.s.
$$
Define $X=\sum_{k\geq 1}{Y_k}$. The for any $t>0$, 
$$
\Pr\left[\|X\|_2\geq t\right]\leq d\cdot\exp\left\{-\frac{t^2}{3\sigma^2+2Rt}\right\} \;\;\;\;\text{for}\;\;\sigma^2 = \left\|\sum_{k\geq 1}{\mathbb EY_k^2}\right\|_2.
$$
\label{lem:bernstein}
\end{lemma}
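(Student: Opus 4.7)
The plan is to deploy the standard matrix Laplace transform method of Ahlswede--Winter, refined through Lieb's concavity theorem in the Tropp / Mackey et al.\ style. The first step is to reduce the operator norm tail to a one-sided maximum eigenvalue tail: since $\|X\|_2 = \max\{\lambda_{\max}(X), \lambda_{\max}(-X)\}$, a union bound reduces the problem to bounding $\Pr[\lambda_{\max}(X) \geq t]$, after which symmetry (the sequence $(-Y_k)$ satisfies the same hypotheses) supplies the matching lower tail and contributes a factor of $2$. I would then absorb this factor into the dimension constant $d$ (which already carries a similar slack from the trace bound below).

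Next I would invoke the matrix Chernoff device: for any $\theta > 0$,
\begin{equation*}
\Pr[\lambda_{\max}(X) \geq t] \leq e^{-\theta t}\,\mathbb{E}\,\mathrm{tr}\,\exp(\theta X).
\end{equation*}
The key step is then to bound $\mathbb{E}\,\mathrm{tr}\,\exp(\theta \sum_k Y_k)$ by decoupling the sum. This is where Lieb's concavity theorem (or the alternative subadditivity of the matrix cumulant generating function via Stein's method, which is the route Mackey et al.\ take) yields the inequality
\begin{equation*}
\mathbb{E}\,\mathrm{tr}\,\exp\!\Bigl(\theta\sum_k Y_k\Bigr) \leq \mathrm{tr}\,\exp\!\Bigl(\sum_k \log\mathbb{E}\,e^{\theta Y_k}\Bigr).
\end{equation*}
Having reduced to per-summand cumulant bounds, I would control $\log\mathbb{E}\,e^{\theta Y_k}$ using the zero-mean assumption and the boundedness $\|Y_k\|_2 \leq R$. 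The standard Bernstein-type lemma, obtained by Taylor expanding $e^{\theta y} - 1 - \theta y = \sum_{j\geq 2} (\theta y)^j/j!$ and using $|y|^{j-2}\leq R^{j-2}$ together with $\sum_{j\geq 2} R^{j-2}\theta^j/j! \leq \theta^2/(2(1-\theta R/3))$, gives
\begin{equation*}
\log \mathbb{E}\,e^{\theta Y_k} \preceq \frac{\theta^2/2}{1 - \theta R/3}\,\mathbb{E} Y_k^2
\end{equation*}
for $0 < \theta < 3/R$. Summing and using monotonicity of the trace exponential under the PSD order yields the trace bound
\begin{equation*}
\mathbb{E}\,\mathrm{tr}\,\exp(\theta X) \leq d\,\exp\!\Bigl(\tfrac{\theta^2/2}{1-\theta R/3}\,\sigma^2\Bigr),
\end{equation*}
where $\sigma^2 = \|\sum_k \mathbb{E} Y_k^2\|_2$ absorbs the operator norm after using $\mathrm{tr}\,e^A \leq d\,e^{\lambda_{\max}(A)}$.

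Finally I would optimize over $\theta$. Choosing $\theta = t/(\sigma^2 + Rt/3)$ and plugging into the Chernoff bound produces the rational exponent $-t^2/(2\sigma^2 + 2Rt/3)$; relaxing the constants (and accounting for the factor of $2$ from the two-sided argument) gives the stated form $-t^2/(3\sigma^2 + 2Rt)$. The main technical obstacle is the cumulant bound step: one must justify the noncommutative Taylor bound carefully so that the PSD inequality $\log\mathbb{E}\,e^{\theta Y_k} \preceq g(\theta)\,\mathbb{E}Y_k^2$ holds in the Loewner order, rather than merely after tracing, since the subsequent Lieb/Tropp step requires a matrix inequality before summation. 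Once that is in hand, everything else is routine scalar calculus.
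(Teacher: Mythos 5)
There is no internal proof to compare against: the paper imports this statement verbatim as Corollary~5.2 of Mackey et al.\ (2014). Your sketch is the standard Laplace-transform derivation in the Ahlswede--Winter/Tropp/Lieb style, whereas the cited source obtains the same statement by the method of exchangeable pairs (Stein's method); both are legitimate routes, and yours in fact yields the sharper denominator $2\sigma^2+\tfrac{2}{3}Rt$ before you deliberately relax it. Two points need care. First, independence of the $Y_k$ is not written in the hypotheses but is essential --- it is exactly what licenses the subadditivity step $\mathbb{E}\,\mathrm{tr}\exp(\theta\sum_k Y_k)\le \mathrm{tr}\exp(\sum_k\log\mathbb{E}\,e^{\theta Y_k})$ via Lieb's theorem. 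You use it implicitly; it does hold in every application in the paper, but it should be stated. Second, and more substantively, your plan to ``absorb the factor of $2$ into $d$'' by relaxing $2\sigma^2+\tfrac{2}{3}Rt$ to $3\sigma^2+2Rt$ does not go through uniformly in $t$: you would need $2\exp\bigl(-t^2/(2\sigma^2+\tfrac{2}{3}Rt)\bigr)\le\exp\bigl(-t^2/(3\sigma^2+2Rt)\bigr)$, i.e.\ $t^2\bigl[(2\sigma^2+\tfrac{2}{3}Rt)^{-1}-(3\sigma^2+2Rt)^{-1}\bigr]\ge\ln 2$, and the left-hand side tends to $0$ as $t\to 0$. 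What your argument honestly delivers is $\Pr[\|X\|_2\ge t]\le 2d\exp\bigl(-t^2/(2\sigma^2+\tfrac{2}{3}Rt)\bigr)$, not the stated bound with leading constant $d$; the leading-constant form in Mackey et al.\ comes out of their exchangeable-pairs machinery, not from loosening the denominator. This discrepancy is immaterial for the paper --- every invocation of the lemma equates the right-hand side with a constant, so a factor of $2$ changes nothing --- but as a proof of the literal statement it is a gap. The remaining steps (the reduction $\|X\|_2=\max\{\lambda_{\max}(X),\lambda_{\max}(-X)\}$, the matrix Chernoff bound, the Loewner-order cumulant bound $\log\mathbb{E}\,e^{\theta Y_k}\preceq \frac{\theta^2/2}{1-\theta R/3}\,\mathbb{E}Y_k^2$ for $0<\theta<3/R$, the bound $\mathrm{tr}\,e^{A}\le d\,e^{\lambda_{\max}(A)}$, and the choice $\theta=t/(\sigma^2+Rt/3)$) are all correct and standard, and you are right that the cumulant bound must be established in the semidefinite order before summation rather than only after tracing.
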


\begin{lemma}[Corollary 10.3 of \citep{mackey2014matrix}]
Let $A_1,\cdots,A_n$ be a sequence of deterministic $d$-dimensional Hermitian matrices that satisfy
$$
\sum_{k=1}^n{A_k}=0 \;\;\;\;\;\text{and}\;\;\;\;\; \sup_{1\leq k\leq n}\|A_k\|_2\leq R.
$$
Define random matrix $X=\sum_{j=1}^m{A_{\sigma(j)}}$ for $m\leq n$, where $\sigma$ is a random permutation from $[n]$ to $[n]$.
Then for all $t>0$, 
$$
\Pr\left[\|X\|_2\geq t\right] \leq d\exp\left\{-\frac{t^2}{12\sigma^2+4\sqrt{2}Rt}\right\} \;\;\;\;\text{for}\;\; \sigma^2 = \frac{m}{n}\left\|\sum_{k=1}^n{A_k^2}\right\|_2.
$$
\label{lem:bernstein_worep}
\end{lemma}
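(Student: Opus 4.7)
The plan is to prove this matrix Bernstein inequality for combinatorial sums via the \emph{method of exchangeable pairs} for matrices, as developed by Mackey, Jordan, Chen, Farrell, and Tropp. The approach mirrors the classical scalar Stein's method but applied through the matrix Laplace transform, and exploits the combinatorial structure of random permutations rather than independence.

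The first step is to build a \emph{matrix Stein pair} $(X, X')$ that encodes the hypergeometric symmetry of sampling without replacement. Let $\sigma$ be a uniform random permutation of $[n]$, so $X = \sum_{j=1}^m A_{\sigma(j)}$. I would construct $\sigma'$ from $\sigma$ by picking coordinates $I \in [m]$ and $J \in [n]$ uniformly at random and swapping $\sigma(I)$ with $\sigma(J)$. By symmetry $(\sigma,\sigma')$ is exchangeable, hence so is $(X, X')$. Using the constraint $\sum_k A_k = 0$ one checks that $\mathbb{E} X = \tfrac{m}{n}\sum_k A_k = 0$, and a direct computation of $\mathbb{E}[X - X'\mid \sigma]$ should yield an identity of the form $\mathbb{E}[X - X'\mid \sigma] = \alpha\, X$ for an explicit scalar $\alpha = \Theta(1/n)$, certifying that $(X, X')$ is a genuine Stein pair.

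Next I would control the conditional variance proxy
\[
V_X \;:=\; \tfrac{1}{2\alpha}\,\mathbb{E}\bigl[(X - X')^2 \,\big|\, \sigma\bigr].
\]
Because $X - X'$ is a difference of at most two summands (those involving $\sigma(I)$ and $\sigma(J)$), the uniform bound $\|A_k\|_2 \leq R$ together with the Cauchy--Schwarz-type manipulation $\sum_{k,\ell} A_k A_\ell + A_\ell A_k \preceq 2n\sum_k A_k^2$ should produce a semidefinite inequality
\[
V_X \;\preceq\; c_1\Bigl\|\tfrac{m}{n}\sum_k A_k^2\Bigr\|_2 \cdot I \;+\; c_2\, R\, X
\]
with absolute constants $c_1, c_2$. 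Plugging this into the matrix Laplace transform bound for Stein pairs (Theorem 4.1 of Mackey et al.) yields $\log \mathbb{E}\tr e^{\theta X} \leq \log d + \tfrac{c_1 \sigma^2 \theta^2/2}{1 - c_2 R \theta}$ valid for $0 < \theta < 1/(c_2 R)$. A standard Cramér--Chernoff optimization over $\theta$ then delivers the stated tail $d\exp\{-t^2/(12\sigma^2 + 4\sqrt{2} R t)\}$, the constants $12$ and $4\sqrt{2}$ being tracked through the bound on $V_X$.

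The main obstacle is Step~3: the semidefinite bound on the conditional variance $V_X$. For independent sums the analogous bound is essentially trivial, but here the swap between positions $I \leq m$ and $J$ (possibly with $J > m$) couples the sampled and unsampled halves of the permutation, so cross-terms of the form $A_{\sigma(I)} A_{\sigma(J)}$ appear. Dominating these cross-terms by $\sum_k A_k^2$, uniformly in $\sigma$, and bookkeeping the prefactor $1/\alpha$ so as to recover $\sigma^2 = \tfrac{m}{n}\|\sum_k A_k^2\|_2$ with the sharp constants, is the technically delicate piece. Once that semidefinite inequality is in hand, the remaining Laplace-transform and Chernoff steps are entirely standard.
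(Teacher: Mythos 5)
This lemma is not proved in the paper at all: it is imported verbatim as Corollary 10.3 of \citet{mackey2014matrix} and used as a black box, so there is no internal proof to compare against. Your outline does follow the same route as the cited source (matrix Stein pairs / exchangeable pairs built from transpositions of the random permutation, a conditional-variance bound, the master Laplace-transform bound, then Chernoff), so the strategy is the right one.

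However, as a proof it has a genuine gap exactly where the content of the statement lies. The entire force of the result is in your Step 3: the semidefinite bound on $V_X=\frac{1}{2\alpha}\mathbb{E}[(X-X')^2\mid\sigma]$ with constants sharp enough to produce the variance proxy $\sigma^2=\frac{m}{n}\bigl\|\sum_k A_k^2\bigr\|_2$ and the numerical constants $12$ and $4\sqrt{2}$ in the tail. You explicitly defer this step, so nothing in the proposal actually establishes the inequality being claimed; the cross-terms $A_{\sigma(I)}A_{\sigma(J)}$ between the sampled and unsampled blocks must be dominated uniformly in $\sigma$, and the bookkeeping through $1/\alpha$ is where the factor $m/n$ (rather than, say, $m$ or $1$) emerges. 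There is also a concrete slip in Step 1: with your construction ($I$ uniform on $[m]$, $J$ uniform on $[n]$) a direct computation using $\sum_k A_k=0$ gives $\mathbb{E}[X-X'\mid\sigma]=X/m$, i.e.\ $\alpha=1/m$, not $\Theta(1/n)$; the standard construction with both indices uniform on $[n]$ gives $\alpha=2/n$. The Stein-pair identity still holds either way, but the subsequent normalization of $V_X$ changes accordingly, and getting the stated constants requires committing to one construction and carrying the computation through. Since the lemma is anyway a quoted external result, the efficient fix is either to cite \citet{mackey2014matrix} as the paper does, or to complete Step 3 in full; as written, the proposal is a plan whose decisive estimate is missing.
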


\section{Optimization methods}\label{appsec:optimization}

Two algorithms for optimizing Eq.~(\ref{eq_c_opt}) are described. The SDP formulation is of theoretical interest only
and the projected gradient descent algorithm is practical, which also enjoys theoretical convergence guarantees.

\paragraph{SDP formulation}
For $\pi\in\mathbb R^n$ define $A(\pi)=\sum_{i=1}^n{\pi_ix_ix_i^\top}$, which is a $p\times p$ positive semidefinite matrix.
By definition, $f(\pi;X)=\sum_{j=1}^p{e_j^\top A(\pi)^{-1}e_j}$, where $e_j$ is the $p$-dimensional vector with only $p$th coordinate being 1.
Subsequently, Eq.~(\ref{eq_c_opt}) is equivalent to the following SDP problem:
\begin{equation*}
\min_{\pi,t}\sum_{j=1}^p{t_j}\;\;\;\;
\text{subject to}\;\;\;\; 0\leq \pi_i\leq 1,\;\; \sum_{i=1}^n{\pi_i}\leq k, \;\;\diag(B_1,\cdots,B_p)\succeq 0,
\end{equation*}
where
$$
B_j = \left[\begin{array}{cc}
A(\pi)& e_j\\ e_j^\top& t_j\end{array}\right], \;\;\;\;\;\;j=1,\cdots,p.
$$

Global optimal solution of an SDP can be computed in polynomial time \citep{vandenberghe1996semidefinite}.
However, this formulation is not intended for practical computation because of the large number of variables in the SDP system.
First-order methods such as projected gradient descent is a more appropriate choice for practical computation.

\paragraph{Projected gradient descent}
For any convex set $S$ and point $x$ let $\mathcal P_S(x)=\argmin_{y\in S}\|x-y\|_2$ denote the $\ell_2$ projection of $x$ onto $S$.
The projected gradient descent algorithm is a general purpose method to solve convex constrained smooth convex optimization problems of the form
$$
\min_x f(x) \;\;\;\;\text{subject to}\;\;\;\; x\in S.
$$
The algorithm (with step size selected via backtracking line search) iterates until desired optimization accuracy is reached:

\begin{algorithm}[h]
\SetAlgorithmName{Figure}{}
\SetAlgoLined
\DontPrintSemicolon
\SetKwInOut{Input}{input}\SetKwInOut{Output}{output}
\Input{backtracking parameters $\alpha\in(0,1/2]$, $\beta\in(0,1)$.}
\Output{$\hat x$, approximate solution of the optimization problem.}
Initialization: $x_0$, $t=0$.\;
1. Compute gradient $g_t=\nabla f(x_t)$.\;
2. Find the smallest integer $s\geq 0$ such that $f(x')-f(x_t)\leq\alpha g_t^\top(x'-x_t)$,
where $x'=\mathcal P_S(x_t-\beta^s g_t)$.\;
3. Set $x_{t+1}=\mathcal P_S(x_t-\beta^sg_t)$, $t\gets t+1$ and repeat steps 1 and 2, until the desired accuracy is achieved.
Output $\hat x=x_t$.\;
\caption{The projected gradient descent algorithm.}
\label{alg:pgd}
\end{algorithm}

The gradient $\nabla_{\pi} f(\pi;X)$ in Eq.~(\ref{eq_c_opt}) is easy to compute:
$$
\frac{\partial f(\pi;X)}{\partial\pi_i} = -x_i^\top(X^\top\diag(\pi) X)^{-1}x_i, \;\;\;\;\;i=1,\cdots,n.
$$
Because $X^\top\diag(\pi)X$ is a shared term, computing $\nabla_{\pi} f(\pi;X)$ takes $O(np^2+p^3)$ operations.
The projection step onto the intersection of $\ell_1$ and $\ell_{\infty}$ balls is complicated and non-trivial, which we 
describe in details in Appendix \ref{suppsec:projection}.
In general, the projection step can be done in $O(n\log\|\pi\|_{\infty})$ time, where $\pi$ is the point to be projected.

The following proposition establishes convergence guarantee for the projected gradient descent algorithm.
Its proof is given in the appendix.
\begin{proposition}
Let $\pi^{(0)}$ be the ``flat'' initialization (i.e., $\pi^{(0)}_i=k/n$) and $\pi^{(t)}$ be the solution after $t$ projected gradient iterations. Then
$$
f(\pi^{(t)};X) - f(\pi^*;X) \leq \frac{\|X\|_2^4[\tr(\hat\Sigma^{-1})]^3\|\pi^{(0)}-\pi^*\|_2^2}{\beta t},
$$
where $\hat\Sigma=\frac{1}{n}X^\top X$.
\label{prop:convergence}
\end{proposition}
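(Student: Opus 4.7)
The plan is to recognize this as a standard projected gradient descent (PGD) convergence statement for a smooth convex function, once an appropriate smoothness constant is identified along the trajectory of the iterates. Convexity of $f(\pi;X)$ has already been established, so the work is to bound the local Lipschitz constant of $\nabla_\pi f$ on the sublevel set containing $\{\pi^{(t)}\}$ and to invoke the textbook analysis for PGD with backtracking.

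First, I would compute the gradient and Hessian explicitly. Writing $A(\pi)=X^\top\diag(\pi)X=\sum_i \pi_i x_ix_i^\top$, a direct differentiation gives $\partial_{\pi_i} f = -x_i^\top A(\pi)^{-2}x_i$ and
\[
\nabla^2 f(\pi) \;=\; 2\,\bigl(X A(\pi)^{-1}X^\top\bigr)\circ\bigl(X A(\pi)^{-2}X^\top\bigr),
\]
where $\circ$ denotes the Hadamard (entrywise) product. Both factors are PSD, so by Schur's product theorem and the standard inequality $\lambda_{\max}(P\circ Q)\leq \lambda_{\max}(P)\cdot\max_i Q_{ii}$ for PSD $P,Q$, combined with $\lambda_{\max}(XA^{-1}X^\top)\leq \|A^{-1}\|_2\|X\|_2^2$, $\max_i x_i^\top A^{-2}x_i \leq \|A^{-1}\|_2^2\|X\|_2^2$, and $\max_i\|x_i\|_2\leq \|X\|_2$, I obtain
\[
\|\nabla^2 f(\pi)\|_2 \;\leq\; 2\,\|A(\pi)^{-1}\|_2^{\,3}\,\|X\|_2^4.
\]

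Next, I would use the monotonicity of the objective under backtracking PGD to control $\|A(\pi^{(t)})^{-1}\|_2$ along the iterates. Because the Armijo condition in Figure \ref{alg:pgd} enforces $f(\pi^{(t+1)})\leq f(\pi^{(t)})$, and because $\|A^{-1}\|_2\leq \tr(A^{-1})=f(\pi)$ for PSD $A$, every iterate satisfies $\|A(\pi^{(t)})^{-1}\|_2 \leq f(\pi^{(0)}) = \tr\bigl((\tfrac{k}{n}X^\top X)^{-1}\bigr)=\tfrac{1}{k}\tr(\hat\Sigma^{-1})$. Hence on the (convex) sublevel set containing the whole trajectory, $\nabla f$ is $L$-Lipschitz with
\[
L \;\lesssim\; \|X\|_2^4\,[\tr(\hat\Sigma^{-1})]^3,
\]
absorbing the factor of $1/k^3$ into the constant.

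Finally, I would invoke the standard convergence guarantee for projected gradient descent with backtracking line search on a convex $L$-smooth objective over a convex feasible set: the accepted step size is always at least $\beta/L$ (since step $1/L$ satisfies the Armijo condition), and therefore
\[
f(\pi^{(t)}) - f(\pi^*;X) \;\leq\; \frac{L\,\|\pi^{(0)}-\pi^*\|_2^2}{2\beta t} \;\leq\; \frac{\|X\|_2^4\,[\tr(\hat\Sigma^{-1})]^3\,\|\pi^{(0)}-\pi^*\|_2^2}{\beta t},
\]
as claimed. The main technical obstacle is the smoothness bound: $f$ is \emph{not} globally Lipschitz-smooth on the entire feasible polytope (it blows up as $A(\pi)$ becomes singular), so one must crucially exploit monotone descent plus the explicit initialization $\pi^{(0)}=(k/n)\mathbf 1$ to confine the iterates to a region where $\|A(\pi)^{-1}\|_2$ is controlled. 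The rest is bookkeeping with Schur's inequality and the standard textbook PGD argument.
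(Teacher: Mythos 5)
Your proposal is correct and follows essentially the same route as the paper: compute the Hessian $2\bigl(XA(\pi)^{-2}X^\top\bigr)\circ\bigl(XA(\pi)^{-1}X^\top\bigr)$, bound its spectral norm by $2\|X\|_2^4\|A(\pi)^{-1}\|_2^3$ via a Hadamard-product spectral inequality, control $\|A(\pi)^{-1}\|_2\leq\tr(A(\pi)^{-1})=f(\pi)\leq f(\pi^{(0)})$ on the sublevel set using monotone descent, and then invoke the standard convergence analysis of projected gradient descent with backtracking. The only cosmetic difference is that you use $\lambda_{\max}(P\circ Q)\leq\lambda_{\max}(P)\max_i Q_{ii}$ where the paper cites $\rho(A\circ B)\leq\rho(A)\rho(B)$; both yield the same bound.
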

We also remark that the provided convergence speed is very conservative, especially in the cases when $k$ is large
where practical evidence suggests that the algorithm converges in very few iterations (cf. Sec. \ref{subsec:synthetic}).

\section{Fedorov's exchange algorithm}\label{appsec:fedorov}

The algorithm starts with a initial subset $S\subseteq[n]$, $|S|\leq k$, usually initialized with random indices.
A ``best'' pair of exchanging indices are computed as
$$
i^*, j^* = \argmin_{i\in S, j\notin S} \ell(S\backslash\{i\}\cup\{j\}),
$$
where $\ell(S)$ is the objective function to be minimized.
In our case it would be the A-optimality objective $\ell(S) = F(S;X) = \tr((X_S^\top X_S)^{-1})$.
The algorithm then ``exchanges'' $i^*$ and $j^*$ by setting $S'\gets S\backslash\{i^*\}\cup\{j^*\}$
and continues such exchanges until no exchange can lower the objective value.
Under without replacement settings, special care needs to be taken to ensure that $S$ consists of distinct indices.

Computing the objective function $F(S;X)=\tr((X_S^\top X_S)^{-1})$ requires inverting a $p\times p$ matrix, which could be computationally slow.
A more computationally efficient approach is to perform rank-1 update of the inverse of $X_S^\top X_S$ after each iteration, via the Sherman-Morrison formula:
$$
(A+uv^\top)^{-1} = A^{-1} - \frac{A^{-1}uv^\top A^{-1}}{1+u^\top A^{-1}v}.
$$
Each exchange would then take $O(nkp^2)$ operations.
The total number of exchanges, however, is unbounded and could be as large as $\binom{n}{k}$ in theory.
In practice we do observe that a large number of exchanges are required in order to find a local optimal solution.

\section{Projection onto the intersection of $\ell_1$ and $\ell_{\infty}$ norm balls}\label{suppsec:projection}

Similar to \citep{YuSuLi2012}, we only need to consider the case that $\pi$ lies in the first quadrant, i.e., $\pi_i\ge 0,i=1,2,...,n$. Then the projection $x$ also lies in the first quadrant. Furthermore, we assume the point to be project lies in the area which is out of both norm balls and projection purely onto either $\ell_1$ or $\ell_\infty$ ball is not the intersection of both. Otherwise, it is trivial to conduct the projection.
The projection problem is formulated as follows:
\begin{eqnarray*}
\min_{x} && \frac{1}{2} \|\pi-x\|^2_2\\\nonumber
\text{s.t.} && \| x\|_1\leq c_1,~~ \|x\|_\infty \le c_2 .\\\nonumber
\end{eqnarray*}
By introducing an auxiliary variable $d$, the problem above has the following equivalent form:
\begin{eqnarray}\label{l1infproj}
  \min_{x,d} && \frac{1}{2} \|\pi-x\|^2_2\\\nonumber
  \text{s.t.} && \| x\|_1\leq c_1,\\\nonumber 
    && x_{i}\leq d, ~i=1,2,...n,\\\nonumber
    && d\leq c_2.
\end{eqnarray}

\begin{algorithm}[h]
\SetAlgorithmName{Algorithm}{}
\SetAlgoLined
\DontPrintSemicolon
\SetKwInOut{Input}{input}\SetKwInOut{Output}{output}
\Input{$\pi, c_1, c_2$ and precision parameter $\delta$.}
\Output{$x$, the projection of $\pi$ onto $\{x:\|x\|_1\leq c_1\}\cap\{x:\|x\|_{\infty}\leq c_2\}$.}
Initialization: $\lambda_1=\|\pi\|_{\infty}$.\;
\While{$|h(\lambda_1)|>\delta$}{
  \eIf{$h(\lambda_1)>\delta$}{
    $l = \lambda_1$;\;
  }{
  $r = \lambda_1$;\;
}
$\lambda_1 = (l+r)/2$;\;
{\bfseries for each} $i$ {\bfseries do} $x_{i} = 、\min(\max(\pi_{i}-\lambda_1,0), c_2)$;\;
}
\caption{Projecting $\pi$ onto $\{x:\|x\|_1\leq c_1\}\cap\{x:\|x\|_{\infty}\leq c_2\}$.}
\label{alg:l1l1inf_projection}
\end{algorithm}

%
%



The Lagrangian of problem \eqref{l1infproj}  is
\begin{equation}\label{lagrangian_l1_l1inf}
\begin{split}
  \mathcal{L}(x,d,\lambda_1,\lambda_2)=&\frac{1}{2} ||\pi-x||^2_2 +
  \sum_{i}\omega_{i}(x_{i}-d)+\lambda_1(\|x\|_1-c_1)+\lambda_2(d-c_2)
\end{split}
\end{equation}
Let $x^*, d^*$ and $\lambda_1^*, \lambda_2^*$ respectively be the primal and dual solution of \eqref{l1infproj}, then its KKT condition is:
\begin{eqnarray}
 0 \in \partial \mathcal{L}_{x}(x^*,d^*,\lambda_1^*,\lambda_2^*), \label{subgradient_l1l1inf_x}\\
 0\in \partial \mathcal{L}_{d}(x^*,d^*,\lambda_1^*,\lambda_2^*), \label{subgradient_l1l1inf_d}\\
w_{i}^*(x_{i}^*-d^*)=0,~~ i = 1,2,...,n\\
\lambda_2^*(d^*-c_2)=0,\\
\lambda_1^*(\|x^*\|_1-c_1)=0,\\
x_{i}^*\leq d^*, ~~ i = 1,2,...,n\\
d^*\leq c_2,\\
\|x^*\|_1 \le c_1,\\
x_{i}^*, d^*,\lambda_1^*,\lambda_2^*,w_{i}^*\geq0.
\end{eqnarray}

The following lemmas illustrate the relation between the primal and dual solution of problem \eqref{l1infproj}:
\begin{lemma}\label{lemma_d}
Either one of the following holds:
1)  $d^*>0$ and $\sum_i(\max(\pi_{i}-\lambda_1^*,0)-x^*_{i})=\lambda^*_2$; 
2) $d^*=0$ and $\sum_i\max(\pi_{i}-\lambda_1^*,0)\leq\lambda_2^*$.
\end{lemma}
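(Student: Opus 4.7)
The plan is to exploit the stationarity KKT conditions with respect to both $x$ and $d$, case-split on $d^*=0$ vs.\ $d^*>0$, and in each case show that the sum $\sum_i w_i^* = \lambda_2^*$ can be evaluated (Case 1) or lower bounded (Case 2) by $\sum_i\max(\pi_i-\lambda_1^*,0)-\sum_i x_i^*$.

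First, I would write out the two stationarity conditions explicitly. Differentiating the Lagrangian \eqref{lagrangian_l1_l1inf} in $d$ gives immediately
\[
\sum_i w_i^* \;=\; \lambda_2^*.
\]
The subgradient condition in $x_i$ reads $x_i^* - \pi_i + w_i^* + \lambda_1^* s_i^* = 0$ with $s_i^* \in \partial|x_i^*|$ (so $s_i^*=1$ if $x_i^*>0$ and $s_i^*\in[-1,1]$ if $x_i^*=0$; the case $x_i^*<0$ is ruled out because $\pi\ge 0$ together with $\|x\|_1\le c_1,\,x_i\le d$ permits us to replace any negative $x_i^*$ by $0$ while strictly decreasing the objective).

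Next, consider Case 1 where $d^*>0$. For each coordinate $i$ I would use complementary slackness on $x_i^*\le d^*$ to split further: if $x_i^*<d^*$ then $w_i^*=0$, and the $x_i$-stationarity forces $x_i^*=\max(\pi_i-\lambda_1^*,0)$, so $\max(\pi_i-\lambda_1^*,0)-x_i^*=0=w_i^*$; if $x_i^*=d^*$ then the stationarity with $s_i^*=1$ gives $w_i^*=\pi_i-\lambda_1^*-d^*\ge 0$, which in particular implies $\pi_i-\lambda_1^*\ge d^*>0$, hence $\max(\pi_i-\lambda_1^*,0)-x_i^*=(\pi_i-\lambda_1^*)-d^*=w_i^*$. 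Summing over $i$ and applying $\sum_i w_i^*=\lambda_2^*$ yields the equality claim in (1).

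Finally, in Case 2 where $d^*=0$, the constraints $0\le x_i^*\le d^*$ force $x_i^*=0$ for every $i$. Plugging $x_i^*=0$ into the $x_i$-subgradient condition gives $|\pi_i-w_i^*|\le \lambda_1^*$, hence $w_i^*\ge \pi_i-\lambda_1^*$; combined with $w_i^*\ge 0$ this yields $w_i^*\ge \max(\pi_i-\lambda_1^*,0)$. Summing and invoking $\sum_i w_i^*=\lambda_2^*$ proves the inequality in (2). The only delicate point is bookkeeping the subgradient at $x_i^*=0$ and using the implicit non-negativity of $x^*$ from $\pi\ge 0$; once these are set up, both cases reduce to direct substitution into the two stationarity identities.
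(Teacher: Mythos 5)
Your proof is correct, and it is considerably more self-contained than what the paper itself offers: the paper's own proof of Lemma \ref{lemma_d} is a one-line remark that it (together with Lemma \ref{lemma_l1l1in_solution}) is a special case of Lemmas 6 and 7 of \cite{YuSuLi2012}, so no KKT computation appears in the paper at all. Your route --- stationarity of the Lagrangian \eqref{lagrangian_l1_l1inf} in $d$ to get $\sum_i w_i^*=\lambda_2^*$, then a case split driven by complementary slackness on $x_i^*\le d^*$, identifying $w_i^*=\max(\pi_i-\lambda_1^*,0)-x_i^*$ coordinatewise when $d^*>0$ and bounding $w_i^*\ge\max(\pi_i-\lambda_1^*,0)$ when $d^*=0$ --- is precisely the kind of argument the cited lemmas encapsulate, so in spirit the approaches coincide, but yours has the advantage of making explicit which stationarity and slackness identities are used and of exhibiting $w_i^*$ as the per-coordinate gap whose sum is $\lambda_2^*$. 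Two bookkeeping points are worth stating explicitly if you write this up: first, the equality $\sum_i w_i^*=\lambda_2^*$ relies on $d$ carrying no separate nonnegativity multiplier in \eqref{lagrangian_l1_l1inf}, which is consistent with the formulation \eqref{l1infproj}, and $d^*\ge 0$ then follows from $d^*\ge x_i^*\ge 0$ rather than being imposed; second, the nonnegativity $x^*\ge 0$, which you need both to force $x^*=0$ when $d^*=0$ and to have $s_i^*=1$ at coordinates with $x_i^*=d^*>0$, is exactly the ``first quadrant'' reduction asserted at the start of this appendix (your replacement argument, valid because $\pi\ge 0$ and $c_2\ge 0$, re-derives it), so it can simply be cited.
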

\begin{lemma}\label{lemma_l1l1in_solution}
For $i=1,2,...,n$, the optimal $x_i^*$ satisfies
\begin{equation}\label{eq_l1l1inf_lambda1}
  x_{i}^*=\min(\max(\pi_{i}-\lambda_1^*,0),d^*).
\end{equation}  
\end{lemma}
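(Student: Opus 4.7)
The plan is to derive the closed form for $x_i^*$ by reading off the stationarity condition of the Lagrangian in \eqref{lagrangian_l1_l1inf} and then performing a short case analysis based on which of the constraints $x_i \geq 0$ and $x_i \leq d$ are active. Since we are in the regime $\pi \geq 0$ (so that the optimal $x^* \geq 0$ as well), the $\ell_1$ penalty contributes $\lambda_1^*$ to $\partial_{x_i}\mathcal{L}$, and if we make the nonnegativity constraint explicit with multiplier $\mu_i^* \geq 0$, the first-order condition \eqref{subgradient_l1l1inf_x} becomes
\begin{equation*}
x_i^* - \pi_i + \omega_i^* + \lambda_1^* - \mu_i^* = 0, \qquad i = 1,\dots,n,
\end{equation*}
together with the complementary slackness conditions $\mu_i^* x_i^* = 0$ and $\omega_i^*(x_i^* - d^*) = 0$.

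Next I would split into three cases depending on which of $\{x_i^* = 0,\; 0 < x_i^* < d^*,\; x_i^* = d^*\}$ holds. If $x_i^* = 0$, then $x_i^* < d^*$ (ignoring the degenerate case $d^* = 0$, which forces $x_i^* = 0$ trivially and matches the formula), so $\omega_i^* = 0$; the stationarity condition then gives $\mu_i^* = \lambda_1^* - \pi_i \geq 0$, i.e. $\pi_i - \lambda_1^* \leq 0$, hence $\min(\max(\pi_i - \lambda_1^*,0),d^*) = 0 = x_i^*$. If $0 < x_i^* < d^*$, both $\mu_i^*$ and $\omega_i^*$ vanish and the stationarity equation reduces to $x_i^* = \pi_i - \lambda_1^*$, which automatically lies in $(0,d^*)$, so the clipped formula returns the same value. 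Finally, if $x_i^* = d^* > 0$, then $\mu_i^* = 0$ and $\omega_i^* = \pi_i - \lambda_1^* - d^* \geq 0$, so $\pi_i - \lambda_1^* \geq d^* > 0$, and the formula again yields $d^* = x_i^*$.

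Combining the three cases, $x_i^* = \min(\max(\pi_i - \lambda_1^*, 0), d^*)$ in every situation, which is exactly the claim. The only subtle point is handling the corner case $d^* = 0$, which is compatible with Lemma~\ref{lemma_d}; there $x_i^* = 0$ for all $i$ and the formula also gives $0$, so nothing breaks. I do not expect any significant obstacle in this argument: the work is essentially bookkeeping of the KKT conditions, and no quantitative estimate or external lemma is needed beyond convexity of the primal problem (which guarantees that KKT is sufficient as well as necessary for optimality).
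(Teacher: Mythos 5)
Your argument is correct: writing the stationarity condition with an explicit multiplier $\mu_i^*\ge 0$ for $x_i\ge 0$ (legitimate here, since $\pi\ge 0$ forces the projection into the first quadrant, so adding that constraint does not change the optimum and makes $\|x\|_1=\sum_i x_i$ linear on the feasible set), and then splitting into the cases $x_i^*=0$, $0<x_i^*<d^*$, $x_i^*=d^*$ together with the degenerate case $d^*=0$, does yield $x_i^*=\min(\max(\pi_i-\lambda_1^*,0),d^*)$, and each case is handled correctly via complementary slackness. The route differs from the paper's only in that the paper does not carry out this bookkeeping at all: its ``proof'' is a one-line citation, observing that the lemma (together with Lemma~\ref{lemma_d}) is a special case of Lemmas~6 and~7 of \cite{YuSuLi2012}, where essentially the same KKT analysis is performed for a more general family of $\ell_1/\ell_{\infty}$-type projections. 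So your proof is a self-contained, more elementary verification of what the paper imports from external work; what the citation buys the paper is brevity and a uniform treatment covering Lemma~\ref{lemma_d} as well, while your derivation makes the argument checkable on the spot and clarifies exactly where the sign assumption $\pi\ge 0$ enters (it lets you replace the subgradient of $\|x\|_1$ at $x_i=0$ by the constant $\lambda_1^*$ plus the multiplier $\mu_i^*$, which is consistent with the paper's subdifferential condition \eqref{subgradient_l1l1inf_x}).
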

\begin{proof}
Lemma \ref{lemma_d} and Lemma \ref{lemma_l1l1in_solution} can be both viewed as the special cases of Lemma 6 and 7 in \cite{YuSuLi2012}.
\end{proof}

Now the KKT conditions can be reduced to finding $\lambda_1^*,\lambda_2^*, d^*$ that satisfy the following equations: 
\begin{eqnarray}
\label{l1l1inf_l1_constraint}
\sum_{i}\min(\max(\pi_{i}-\lambda_1^*,0),d^*)-c_1&= & 0,\\
\label{l1l1inf_l1inf_constraint}
d^*-c_2 &= & 0,\\
\label{l1l1inf_dibiggerthan0_consttaint}
\sum_{i}\max(\max(\pi_{i}-\lambda_1^*,0)-d^*, 0) &= &\lambda_2^* ~~\text{iff}~~ d^*>0,\\
\label{l1l1inf_diequi0_constraint}
\sum_{i}\max(\pi_{i}-\lambda_1^*,0) & \leq& \lambda_2^* ~~\text{iff}~~ d^*=0,\\
\label{l1l1inf_dimunu_constraint}
d^*\geq0, \lambda_2^*\geq0, \lambda_1^*\geq0.
\end{eqnarray}


Further simplification induces the following result:
\begin{lemma}
  Suppose $d^*$ and $\lambda_1^*$ are the primal and dual solution respectively, then
\begin{eqnarray}
  \label{eq_l1l1inf_di}
    d^*&=& c_2,\\
    \label{eq_l1linf_lambda1}
      x_{i}^*&=&\min(\max(\pi_{i}-\lambda_1^*,0),c_2).
\end{eqnarray}
\end{lemma}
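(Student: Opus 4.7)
The plan is to first show $d^* = c_2$, and then the formula for $x_i^*$ drops out by plugging into the expression in Lemma \ref{lemma_l1l1in_solution}. The key structural assumption, stated at the start of the projection section, is that $\pi$ lies outside both norm balls and that neither singleton projection lands inside the intersection; this is what forces both constraints to bind at the optimum.

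To establish $d^* = c_2$, I would argue by contradiction: suppose $d^* < c_2$. Complementary slackness $\lambda_2^*(d^*-c_2)=0$ immediately gives $\lambda_2^*=0$. Now split on the two alternatives from Lemma \ref{lemma_d}. If $d^*>0$, then $\sum_i(\max(\pi_i-\lambda_1^*,0)-x_i^*)=\lambda_2^*=0$; by Lemma \ref{lemma_l1l1in_solution} each summand $\max(\pi_i-\lambda_1^*,0)-x_i^*$ is non-negative (because $x_i^*=\min(\max(\pi_i-\lambda_1^*,0),d^*)\le \max(\pi_i-\lambda_1^*,0)$), so every summand vanishes and $x_i^*=\max(\pi_i-\lambda_1^*,0)$. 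But this is precisely the pure $\ell_1$-ball projection of $\pi$ (a soft-thresholding step with threshold $\lambda_1^*$ chosen so that $\|x^*\|_1=c_1$), contradicting the standing assumption that this projection alone does not land in the $\ell_\infty$-ball. In the remaining case $d^*=0$, Lemma \ref{lemma_l1l1in_solution} forces $x^*=0$, which violates optimality whenever $\pi\neq 0$ (since $\pi$ is outside both balls, $\pi\neq 0$, and shrinking slightly toward $\pi$ along a feasible direction strictly decreases $\tfrac12\|\pi-x\|_2^2$). Either way we reach a contradiction, so $d^*=c_2$.

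With $d^*=c_2$ established, substitution into the formula $x_i^*=\min(\max(\pi_i-\lambda_1^*,0),d^*)$ from Lemma \ref{lemma_l1l1in_solution} yields $x_i^*=\min(\max(\pi_i-\lambda_1^*,0),c_2)$, which is the second claimed identity. The value $\lambda_1^*$ itself is then pinned down implicitly by the $\ell_1$-constraint equation \eqref{l1l1inf_l1_constraint}, i.e.\ by solving $\sum_i\min(\max(\pi_i-\lambda_1^*,0),c_2)=c_1$ in $\lambda_1^*$; this is exactly the scalar root-finding subproblem that Algorithm \ref{alg:l1l1inf_projection} solves by bisection.

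I expect the only real obstacle to be the degenerate boundary case $d^*=0$: the Lagrangian system does not rule it out mechanically, so the argument has to invoke the ``$\pi$ outside both balls'' assumption explicitly (rather than merely the outside-of-one-ball assumption) to eliminate it. Once this corner is handled, everything else is a direct reading of complementary slackness off the reduced KKT system \eqref{l1l1inf_l1_constraint}--\eqref{l1l1inf_dimunu_constraint}, with no further computation required.
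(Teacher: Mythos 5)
Your proof is correct, but it takes a more laborious route than the paper does. The paper's own proof is a one-liner: it treats the binding of the $\ell_\infty$ constraint, $d^*=c_2$, as already part of the ``reduced'' KKT system (it appears there verbatim as Eq.~\eqref{l1l1inf_l1inf_constraint}), and then the lemma is just the substitution of $d^*=c_2$ into Lemma \ref{lemma_l1l1in_solution}, exactly as in your last step. What you do differently is to actually \emph{derive} $d^*=c_2$ rather than read it off: you rule out $d^*<c_2$ by complementary slackness ($\lambda_2^*=0$), split on the two alternatives of Lemma \ref{lemma_d}, observe in the case $d^*>0$ that the vanishing of a sum of nonnegative terms forces $x_i^*=\max(\pi_i-\lambda_1^*,0)$ (the pure soft-thresholding/$\ell_1$ projection, automatically satisfying the cap), contradicting the standing assumption that the singleton $\ell_1$ projection does not land in the intersection, and dismiss $d^*=0$ by a direct optimality perturbation. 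This fills in precisely the step the paper glosses over when it writes $d^*-c_2=0$ into the reduced system without justification, and it makes explicit where the ``$\pi$ outside both balls, neither single projection suffices'' hypothesis is actually used. In short: the paper's proof is shorter because it presupposes the reduction; yours is longer but self-contained and justifies that reduction, at the cost of invoking Lemma \ref{lemma_d} and the standing assumption explicitly. Both are valid.
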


\begin{proof}
Direct result of \eqref{eq_l1l1inf_lambda1} and \eqref{l1l1inf_l1inf_constraint}.
\end{proof}
\eqref{eq_l1linf_lambda1} shows  that the solution $x^*$ is determined once the optimal $\lambda_1^*$ is found. Given a $\lambda_1\in[0,\max_{i}(\pi_{i})]$, 
we only need to check whether \eqref{l1l1inf_l1_constraint} holds by looking into the function
\begin{equation}\label{h_lambda2}
  h(\lambda_1)=\sum_{i}\min(\max(\pi_{i}-\lambda_1,0),c_2)-c_1,
\end{equation}
and $\lambda_1^*$ is simply the zero point of $h(\cdot)$. The following theorem shows that $h(\cdot)$ is a strictly monotonically decreasing function, so a binary search is sufficient to find $\lambda_1^*$, and $x_i^*$ can be determined accordingly.

\begin{theorem}\label{thm_monotone}
1) $h(\lambda_1)$ is a continuous piecewise linear function in $[0, \max_i\{\pi_{i}\}]$;
2) $h(\lambda_1)$ is strictly monotonically decreasing and it has a unique root in $[0,\max_i\{\pi_{i}\}]$.
\end{theorem}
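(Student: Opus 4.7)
The plan is to reduce both claims to coordinate-wise properties of the constituent functions
$g_i(\lambda_1) := \min(\max(\pi_i - \lambda_1, 0), c_2)$,
since $h(\lambda_1) = \sum_{i=1}^n g_i(\lambda_1) - c_1$. First I would establish that each $g_i$ is continuous and piecewise linear on $[0,\max_i \pi_i]$ with at most two breakpoints, at $\lambda_1 = \pi_i$ (to the right of which $g_i = 0$) and at $\lambda_1 = \max(\pi_i - c_2, 0)$ (to the left of which $g_i = c_2$, provided this breakpoint lies in the interval). On each linear piece the slope of $g_i$ lies in $\{-1,0\}$: it equals $-1$ precisely on the open interval $(\max(\pi_i-c_2,0),\pi_i)$ and $0$ elsewhere. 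Since continuity and piecewise linearity are preserved under finite sums and by subtracting a constant, part (1) follows immediately, and the breakpoints of $h$ are a subset of $\{\pi_i\}_i \cup \{\pi_i - c_2 : \pi_i > c_2\}$.

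For part (2), I would analyze the slope of $h$ on each linear piece: it equals $-|\{i : \max(\pi_i - c_2, 0) < \lambda_1 < \pi_i\}|$, which is a non-positive integer. Hence $h$ is non-increasing on $[0, \max_i \pi_i]$. To upgrade this to strict monotonicity where it matters, I would argue that the set above is nonempty on the entire subinterval where $h$ is not ``saturated''. Concretely, pick $i^* \in \arg\max_i \pi_i$: for $\lambda_1 \in (\max(0,\pi_{i^*}-c_2),\pi_{i^*})$ the index $i^*$ already forces the slope to be strictly negative. Thus on this subinterval $h$ decreases from $h(\max(0,\pi_{i^*}-c_2))$ strictly down to $h(\pi_{i^*}^-) = h(\max_i \pi_i)$, with slope at most $-1$ everywhere.

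Combining the evaluations $h(0) = \sum_i \min(\pi_i,c_2) - c_1$ and $h(\max_i \pi_i) = -c_1 \le 0$ with the working hypothesis preceding the theorem (that $\pi$ lies strictly outside both norm balls and the pure $\ell_1$ or $\ell_\infty$ projection does not land in the intersection, which implies $h(0) > 0$ and a genuine $\lambda_1^* > 0$ is needed), the intermediate value theorem then produces a root of $h$ inside the strictly decreasing subinterval, where uniqueness is guaranteed by the strict monotonicity. Outside this subinterval $h$ is either strictly positive (saturated regime where every $g_i$ equals $c_2$) or equal to $-c_1 < 0$, so no spurious roots appear.

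The main obstacle I anticipate is the leftmost saturation regime $\lambda_1 \in [0,\min_i\max(\pi_i-c_2,0)]$ where every $g_i$ sits at its cap $c_2$ and $h$ is genuinely flat at the value $nc_2 - c_1$. Here strict monotonicity on the nose of the interval $[0,\max_i \pi_i]$ can fail; the fix is to observe that this flat plateau occurs only at the value $nc_2 - c_1 \ne 0$ unless a degenerate equality holds, and more importantly, under the standing assumption that the projection is not realized by the $\ell_\infty$ projection alone, the root $\lambda_1^*$ cannot lie in this plateau. So the statement should be read as strict monotonicity on the portion of $[0,\max_i\pi_i]$ relevant for root finding, which is exactly what the binary search in Algorithm~\ref{alg:l1l1inf_projection} exploits.
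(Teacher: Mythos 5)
Your overall strategy (writing $h(\lambda_1)=\sum_i g_i(\lambda_1)-c_1$ with $g_i(\lambda_1)=\min(\max(\pi_i-\lambda_1,0),c_2)$, reading off piecewise linearity, computing the slope of each piece as $-\#\{i:\max(\pi_i-c_2,0)<\lambda_1<\pi_i\}$, and using the endpoint signs $h(0)>0$, $h(\max_i\pi_i)=-c_1<0$) is exactly the paper's (very terse) argument, and it settles part (1) and the \emph{existence} of a root without trouble. You were also right to be suspicious of the flat pieces, which the paper simply wishes away by asserting $h'<0$ on every piece.

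The genuine gap is in your uniqueness step: localizing the root to the ramp of a maximal coordinate $i^*\in\argmax_i\pi_i$ and claiming that outside $(\max(0,\pi_{i^*}-c_2),\pi_{i^*})$ the function $h$ is either strictly positive (all $g_i$ saturated) or equal to $-c_1$ is false. Coordinates with small $\pi_i$ have their ramps near the left end of $[0,\max_i\pi_i]$, so the zero crossing can occur long before $\pi_{i^*}-c_2$, and on the ``gap'' pieces between ramps $h$ is flat at intermediate values $mc_2-c_1$ (where $m$ counts the still-saturated coordinates), not at $nc_2-c_1$ or $-c_1$. Concretely, take $n=2$, $\pi=(10,\,0.5)$, $c_2=1$, $c_1=1.2$: the standing assumptions hold (the clipped point $(1,0.5)$ has $\ell_1$ norm $1.5>c_1$, the pure $\ell_1$ projection $(1.2,0)$ violates the $\ell_\infty$ bound), yet the unique root of $h$ is $\lambda_1^*=0.3$, far outside your interval $(9,10)$, $h\equiv-0.2$ on $[0.5,9]$, and $h(9)<0$ so the intermediate value theorem gives you nothing inside $(9,10)$. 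The repair is to argue globally rather than around $i^*$: $h$ is non-increasing everywhere and strictly decreasing on every piece on which at least one ramp is active, so once $h$ goes negative it stays negative, and a root interval can fail to be a single point only if some flat piece sits exactly at level zero, i.e.\ only if $c_1=mc_2$ for the integer $m$ attached to that gap (e.g.\ $\pi=(11,10,2)$, $c_2=1$, $c_1=2$ gives $h\equiv 0$ on $[2,9]$ while still satisfying the standing assumptions). This degenerate case is overlooked by the paper's proof as well --- its claim that $h'<0$ on each piece is simply not true there --- and in it any root still yields a valid projection, but as written your argument, like the paper's, does not establish strict monotonicity on all of $[0,\max_i\pi_i]$, and your specific localization claim fails even on non-degenerate instances.
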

\begin{proof}
1) is obviously true. 
  It is easy to check that $h'(\lambda_1)<0$ in each piece, $h(\max_i\{\pi_i\}) < 0$ and $h(0)>0$, so 2) also holds.
\end{proof}

\textbf{Complexity Analysis:} Algorithm \ref{alg:l1l1inf_projection} is proposed based on the observation above to solve the projection problem, which is essentially a bisection to search $\lambda_1$. Given a precision $\delta$, the iteration complexity of bisection is $O(\log(\max_{i}(\pi_{i}) / \delta))$. Besides, 
the time complexity of evaluating $h(\cdot)$ is $O(n)$, so the total complexity of Algorithm \ref{alg:l1l1inf_projection} is $O(n \log(\max_{i}(\pi_{i}) / \delta))$.

\section{Convergence analysis of PGD}\label{suppsec:convergence}

We provide a convergence analysis of the projected gradient descent algorithm used in optimizing Eq.~(\ref{eq_c_opt}).
The analysis shows that the PGD algorithm approximately computes the global optimum of Eq.~(\ref{eq_c_opt}) in polynomial time.
In simulation studies, much fewer iterations are required for convergence than predicted by the theoretical results.

Because we're using exact projected gradient descent algorithms, the objective function shall decay monotonically
and hence convergence of such algorithms can be established by showing Lipschitz continuity of $\nabla f$ on a specific \emph{level set};
that is, for some Lipschitz constant $L>0$ the following holds for all $\pi,\pi'$ such that $f(\pi),f(\pi')\leq f(\pi^{(0)})$:
\begin{equation}
\|\nabla f(\vct\pi) - \nabla f(\vct\pi')\|_2 \leq L\|\vct\pi-\vct\pi'\|_2,
\label{eq_smooth}
\end{equation}
where $\vct\pi^{(0)} = (1/n,1/n,\cdots,1/n)$ is the initialization point.
Once Eq.~(\ref{eq_smooth}) holds, linear convergence (i.e., $ f(\vct\pi^{(t)})- f(\vct\pi^*) = O(1/t)$) can be established
via standard projected gradient analysis.

The main idea of establishing Eq.~(\ref{eq_smooth}) is to upper bound the spectral norm of the Hessian matrix $\mat H=\nabla^2 f(\vct\pi)$
\emph{uniformly} over all points $\vct\pi$ that satisfies $ f(\vct\pi)\leq f(\vct\pi^{(0)})$.
As a first step, we derive analytic forms of $\mat H$ in the following proposition:
\begin{proposition}
Let $\widetilde{\mat\Sigma} = \mat X^\top\diag(\vct\pi)\mat X$.
We then have that
$$
\mat H = 2(\mat X^\top\widetilde{\mat\Sigma}^{-2}\mat X)\circ (\mat X^\top\widetilde{\mat\Sigma}^{-1}\mat X),
$$
where $\circ$ denotes the element-wise Hadamard product between two matrices of same dimensions.
\end{proposition}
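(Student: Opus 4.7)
The plan is to compute the Hessian entry-by-entry and then recognize that the resulting formula matches the Hadamard product. Write $\widetilde\Sigma(\pi) = X^\top\diag(\pi)X = \sum_{\ell=1}^n \pi_\ell x_\ell x_\ell^\top$, so $f(\pi;X) = \tr(\widetilde\Sigma(\pi)^{-1})$. My starting point will be the standard matrix-calculus identity $\partial_{\pi_i} \widetilde\Sigma^{-1} = -\widetilde\Sigma^{-1}(\partial_{\pi_i}\widetilde\Sigma)\widetilde\Sigma^{-1} = -\widetilde\Sigma^{-1} x_i x_i^\top \widetilde\Sigma^{-1}$, which together with the cyclic property of the trace immediately recovers the gradient formula $\partial_{\pi_i} f = -x_i^\top \widetilde\Sigma^{-2} x_i$ already used in the paper.

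Next, I would differentiate once more with respect to $\pi_j$. Since $\partial_{\pi_j} \widetilde\Sigma^{-2} = (\partial_{\pi_j}\widetilde\Sigma^{-1})\widetilde\Sigma^{-1} + \widetilde\Sigma^{-1}(\partial_{\pi_j}\widetilde\Sigma^{-1}) = -\widetilde\Sigma^{-1} x_j x_j^\top \widetilde\Sigma^{-2} - \widetilde\Sigma^{-2} x_j x_j^\top \widetilde\Sigma^{-1}$, I obtain
\begin{equation*}
H_{ij} \;=\; \frac{\partial^2 f}{\partial \pi_i\,\partial \pi_j} \;=\; x_i^\top \widetilde\Sigma^{-1} x_j\, x_j^\top \widetilde\Sigma^{-2} x_i \;+\; x_i^\top \widetilde\Sigma^{-2} x_j\, x_j^\top \widetilde\Sigma^{-1} x_i.
\end{equation*}
Using the symmetry of $\widetilde\Sigma^{-1}$ and $\widetilde\Sigma^{-2}$, the two scalar products in each summand coincide pairwise, giving $H_{ij} = 2\,(x_i^\top \widetilde\Sigma^{-1} x_j)(x_i^\top \widetilde\Sigma^{-2} x_j)$.

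Finally, I would recognize $x_i^\top \widetilde\Sigma^{-1} x_j$ as the $(i,j)$ entry of $X^\top\widetilde\Sigma^{-1} X$, and similarly for $\widetilde\Sigma^{-2}$. Hence $H_{ij} = 2[X^\top\widetilde\Sigma^{-1} X]_{ij}[X^\top\widetilde\Sigma^{-2} X]_{ij}$, which is precisely the Hadamard-product expression claimed. There is no real obstacle here beyond careful bookkeeping of the two terms produced by the product rule on $\widetilde\Sigma^{-2}$; the only mild subtlety is observing that the two asymmetric-looking terms collapse into a single symmetric product because the quadratic forms $x_i^\top M x_j$ and $x_j^\top M x_i$ agree whenever $M$ is symmetric, which applies to both $M = \widetilde\Sigma^{-1}$ and $M = \widetilde\Sigma^{-2}$.
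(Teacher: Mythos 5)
Your proposal is correct and follows essentially the same route as the paper: compute the gradient entry $-x_i^\top\widetilde\Sigma^{-2}x_i$, differentiate once more, and identify $H_{ij}=2(x_i^\top\widetilde\Sigma^{-2}x_j)(x_i^\top\widetilde\Sigma^{-1}x_j)$ as the entries of the Hadamard product. The only (immaterial) difference is that you differentiate $\widetilde\Sigma^{-2}$ via the product rule on $\widetilde\Sigma^{-1}\widetilde\Sigma^{-1}$, whereas the paper uses the identity $\partial\widetilde\Sigma^{-2}=-\widetilde\Sigma^{-2}(\partial\widetilde\Sigma^{2})\widetilde\Sigma^{-2}$; both yield the same two terms, which collapse by symmetry exactly as you note.
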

\begin{proof}
We first derive the gradient of $ f$.
Fix arbitrary $i\in[n]$.
The partial derivative can be computed as
$$
\frac{\partial f}{\partial\pi_i} = \frac{\partial \tr(\widetilde{\mat\Sigma}^{-1})}{\partial\pi_i}
= \left\langle \frac{\partial\tr(\widetilde{\mat\Sigma}^{-1})}{\partial\widetilde{\mat\Sigma}}, \frac{\partial\widetilde{\mat\Sigma}}{\partial\pi_i}\right\rangle
= \left\langle -\widetilde{\mat\Sigma}^{-2}, \vct x_i\vct x_i^\top\right\rangle
= -\vct x_i^\top\widetilde{\mat\Sigma}^{-2}\vct x_i.
$$
Here $\langle\mat A,\mat B\rangle = \tr(\mat B^\top\mat A)$ is the element-wise multiplication inner product between two matrices.
The second-order partial derivatives can then be computed as
\begin{multline*}
\frac{\partial^2 f}{\partial\pi_i\partial\pi_j}
= -\frac{\partial(\vct x_i^\top\widetilde{\mat\Sigma}^{-2}\vct x_i)}{\partial \pi_j}
= -\vct x_i^\top\frac{\partial\widetilde{\mat\Sigma}^{-2}}{\partial\pi_j}\vct x_i
= \vct x_i^\top\widetilde{\mat\Sigma}^{-2}\frac{\partial\widetilde{\mat\Sigma}^2}{\partial\pi_j}\widetilde{\mat\Sigma}^{-2}\vct x_i\\
= \vct x_i^\top\widetilde{\mat\Sigma}^{-2}(\vct x_j\vct x_j^\top\widetilde{\mat\Sigma} + \widetilde{\mat\Sigma}\vct x_j\vct x_j^\top)\widetilde{\mat\Sigma}^{-2}\vct x_i
= 2(\vct x_i^\top\widetilde{\mat\Sigma}^{-2}\vct x_j)\cdot(\vct x_i^\top\widetilde{\mat\Sigma}^{-1}\vct x_j).
\end{multline*}
Subsequently,
$$
\mat H = \nabla^2 f = \left[\frac{\partial^2 f}{\partial\pi_i\partial\pi_j}\right]_{i,j=1}^n = 2(\mat X^\top\widetilde{\mat\Sigma}^{-2}\mat X)\circ(\mat X^\top\widetilde{\mat\Sigma}^{-1}\mat X).
$$
\end{proof}

\begin{corollary}
Suppose $\vct\pi$ satisfies $ f(\vct\pi) \leq  f(\vct\pi^{(0)}) = \tr(\mat\Sigma_0^{-1})$, where $\mat\Sigma_0=\frac{1}{n}\mat X^\top\mat X$.
We then have 
$$
\|\mat H\|_2 = \|\nabla^2 f(\vct\pi)\|_2 \leq 2\|\mat X\|_2^4\cdot [\tr(\mat\Sigma_0^{-1})]^3.
$$
\label{cor_hessian}
\end{corollary}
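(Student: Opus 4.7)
The plan is to combine the level-set hypothesis with a Schur-product bound on the operator norm. The hypothesis step is immediate: since $\widetilde{\mat\Sigma}\succ 0$, the trace of a PSD matrix dominates its largest eigenvalue, so
$$\|\widetilde{\mat\Sigma}^{-1}\|_2 \;\leq\; \tr(\widetilde{\mat\Sigma}^{-1}) \;=\; f(\vct\pi;\mat X) \;\leq\; \tr(\mat\Sigma_0^{-1}),$$
uniformly over the sublevel set. This is the only way the hypothesis enters the argument.

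With this in hand, I would decompose $\mat H = 2\,\mat A\circ\mat B$ where $\mat A=\mat X^\top\widetilde{\mat\Sigma}^{-2}\mat X$ and $\mat B=\mat X^\top\widetilde{\mat\Sigma}^{-1}\mat X$, both of which are PSD as congruence transforms of PSD matrices. I would then apply the Schur-product inequality $\|\mat A\circ\mat B\|_2 \leq \|\mat A\|_2\cdot\max_i B_{ii}$ (valid for PSD $\mat A,\mat B$), bounding $\|\mat A\|_2$ by submultiplicativity as $\|\mat X\|_2^2\|\widetilde{\mat\Sigma}^{-1}\|_2^2$, and bounding $\max_i B_{ii} = \max_i \vct x_i^\top\widetilde{\mat\Sigma}^{-1}\vct x_i$ by $\|\mat X\|_2^2\|\widetilde{\mat\Sigma}^{-1}\|_2$ (using $\|\vct x_i\|_2 = \|\mat X^\top e_i\|_2 \leq \|\mat X\|_2$). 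Substituting the uniform bound $\|\widetilde{\mat\Sigma}^{-1}\|_2\leq\tr(\mat\Sigma_0^{-1})$ into these three factors produces exactly the stated constant $2\|\mat X\|_2^4[\tr(\mat\Sigma_0^{-1})]^3$.

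The only real obstacle is the Schur-product inequality itself; if one prefers not to cite Horn and Johnson as a black box, it follows in two lines from the identity $\vct x^\top(\mat A\circ\mat B)\vct x = \tr(\mat A\,\diag(\vct x)\,\mat B\,\diag(\vct x))$ together with the PSD trace inequality $\tr(\mat A\mat C)\leq \|\mat A\|_2\tr(\mat C)$ applied to the PSD matrix $\mat C=\diag(\vct x)\mat B\diag(\vct x)$, noting that $\tr(\mat C)=\sum_i x_i^2 B_{ii}\leq \max_i B_{ii}$ on the unit sphere. Beyond that short lemma, the calculation is mechanical and no further property of the sublevel set is needed; the remainder of Proposition~\ref{prop:convergence} then follows by the standard $O(1/t)$ convergence analysis of projected gradient descent on functions with Lipschitz gradient, with Lipschitz constant $L$ equal to the Hessian bound just derived.
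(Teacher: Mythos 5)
Your proof is correct and follows essentially the same route as the paper's: both bound $\|\mat H\|_2 = 2\bigl\|(\mat X^\top\widetilde{\mat\Sigma}^{-2}\mat X)\circ(\mat X^\top\widetilde{\mat\Sigma}^{-1}\mat X)\bigr\|_2$ via a Hadamard/Schur-product operator-norm inequality, submultiplicativity of $\|\cdot\|_2$, and the level-set chain $\|\widetilde{\mat\Sigma}^{-1}\|_2\leq\tr(\widetilde{\mat\Sigma}^{-1})=f(\vct\pi)\leq\tr(\mat\Sigma_0^{-1})$. The only difference is cosmetic: the paper cites the Horn--Johnson bound $\rho(\mat A\circ\mat B)\leq\rho(\mat A)\rho(\mat B)$ as a black box, whereas you use the (slightly sharper) PSD variant $\|\mat A\circ\mat B\|_2\leq\|\mat A\|_2\max_i B_{ii}$ together with a short self-contained proof, arriving at the identical constant $2\|\mat X\|_2^4[\tr(\mat\Sigma_0^{-1})]^3$.
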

\begin{proof}
Let $\rho(\mat A)=\max_i |\sigma_i(\mat A)|$ denote the \emph{spectral range} of matrix $\mat A$.
Clearly, $\|\mat A\|_2\leq\rho(\mat A)$
and $\|\mat A\|_2=\rho(\mat A)$ for positive semi-definite matrices.
In \citep{hom1991topics} it is established that $\rho(\mat A\circ\mat B)\leq\rho(\mat A)\rho(\mat B)$.
Subsequently,
$$
\|\mat H\|_2 \leq 2\|\mat X\|_2^4\|\widetilde{\mat\Sigma}^{-1}\|_2^3
\leq 2\|\mat X\|_2^4\cdot [\tr(\widetilde{\mat\Sigma}^{-1})]^3
\leq 2\|\mat X\|_2^4\cdot [\tr(\mat\Sigma_0^{-1})]^3,
$$
where the last inequality is due to the condition that $\tr(\widetilde{\mat\Sigma}^{-1})= f(\vct\pi)\leq f(\vct\pi^{(0)}) = \tr(\mat\Sigma_0^{-1})$.
\end{proof}

From Corollary \ref{cor_hessian}, we can prove the convergence of the optimization procedures outlined in Appendix.~\ref{appsec:optimization}
following standard analysis of projected gradient descent on objective functions with Lipschitz continuous gradient:
\begin{theorem}
Suppose $\vct\pi^{(t)}$ is the solution at the $t$th iteration of the projected gradient descent algorithm
and $\vct\pi^*$ is the optimal solution. We then have
$$
 f(\vct\pi^{(t)}) -  f(\vct\pi^*) \leq \frac{\|\mat X\|_2^4\cdot [\tr(\mat\Sigma_0^{-1})]^3\cdot \|\vct\pi^{(0)}-\vct\pi^*\|_2^2}{\beta t},
$$
where $\beta\in(0,1)$ is the backtracking parameter in backtracking line search.
\label{thm:convergence}
\end{theorem}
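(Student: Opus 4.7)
The plan is to invoke the standard convergence theory for projected gradient descent on functions with Lipschitz-continuous gradient, using Corollary~\ref{cor_hessian} to supply the Lipschitz constant. The non-trivial ingredient is that the Hessian bound only holds on the sublevel set $\mathcal{S}_0 = \{\pi \in \mathbb{R}^n : f(\pi;X) \leq f(\pi^{(0)};X)\}$, so I must first argue that the entire PGD trajectory stays inside $\mathcal{S}_0$.

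First, I would show that the iterates are monotone in objective value: $f(\pi^{(t+1)};X) \leq f(\pi^{(t)};X)$ for every $t$. This follows because the backtracking line search in Figure~\ref{alg:pgd} guarantees the Armijo-type descent condition $f(\pi^{(t+1)};X) - f(\pi^{(t)};X) \leq \alpha g_t^\top(\pi^{(t+1)} - \pi^{(t)})$, and the projected-gradient step satisfies $g_t^\top(\pi^{(t+1)} - \pi^{(t)}) \leq 0$ (the projection is onto a convex set containing $\pi^{(t)}$, and the direction $-g_t$ makes an obtuse angle with $\pi^{(t+1)} - \pi^{(t)}$). Consequently $\pi^{(t)} \in \mathcal{S}_0$ for all $t$, and Corollary~\ref{cor_hessian} yields $\|\nabla^2 f(\pi;X)\|_2 \leq L$ throughout, where $L := 2\|X\|_2^4 [\tr(\Sigma_0^{-1})]^3$.

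Next, from $L$-smoothness on $\mathcal{S}_0$ I obtain the descent inequality $f(\pi';X) \leq f(\pi;X) + \nabla f(\pi;X)^\top(\pi' - \pi) + \tfrac{L}{2}\|\pi' - \pi\|_2^2$ for $\pi,\pi' \in \mathcal{S}_0$. I would use this to show that the backtracking line search terminates with an effective step size $\eta_t$ satisfying $\eta_t \geq \beta/L$: whenever the trial step $\beta^s$ drops below $1/L$, the smoothness inequality together with $\alpha \leq 1/2$ forces the Armijo condition to hold. Combined with the standard projection lemma for the gradient-mapping operator, this yields the per-iteration bound $f(\pi^{(t+1)};X) - f(\pi^*;X) \leq \tfrac{1}{2\eta_t}(\|\pi^{(t)} - \pi^*\|_2^2 - \|\pi^{(t+1)} - \pi^*\|_2^2)$, by convexity of $f$ and the variational characterization of $\mathcal{P}_S$.

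Summing this telescoping inequality from $0$ to $t-1$, using $\eta_s \geq \beta/L$, and invoking monotonicity of $f(\pi^{(t)};X)$ so the running minimum equals $f(\pi^{(t)};X)$ itself, I obtain
\[
f(\pi^{(t)};X) - f(\pi^*;X) \;\leq\; \frac{L\,\|\pi^{(0)} - \pi^*\|_2^2}{2\beta t} \;=\; \frac{\|X\|_2^4 \,[\tr(\Sigma_0^{-1})]^3\,\|\pi^{(0)} - \pi^*\|_2^2}{\beta t},
\]
which matches the stated bound. The main technical obstacle is bookkeeping the backtracking constant carefully enough to match the $\beta$ factor in the denominator, and ensuring that the line search always produces a feasible trial point in $\mathcal{S}_0$ so that the $L$-smoothness inequality remains applicable; fortunately the feasibility is automatic since each trial projection maps into the feasible polytope, and monotonicity then keeps the iterate in $\mathcal{S}_0$.
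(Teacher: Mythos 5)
Your proposal takes essentially the same route as the paper: the paper's proof of Theorem~\ref{thm:convergence} consists precisely of the Hessian bound on the sublevel set (Corollary~\ref{cor_hessian}), giving the Lipschitz constant $L=2\|\mat X\|_2^4[\tr(\mat\Sigma_0^{-1})]^3$, followed by the standard projected-gradient/backtracking analysis (monotone descent, step size at least $\beta/L$, telescoping), which is exactly what you carry out. The only small caveat, which the paper glosses over just as you do, is that trial points generated during backtracking are feasible but need not lie in the sublevel set $\mathcal{S}_0$ where the Hessian bound holds, so your closing remark conflates feasibility with membership in $\mathcal{S}_0$; this does not change the argument's status relative to the paper's own level of rigor.
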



\acks{We thank Reeja Jayan and Chiqun Zhang for sharing with us data on the material synthesis experiments,
and Siheng Chen for a pre-processed version of the Minnesota wind speed data set.
This work is supported by NSF CCF-1563918, NSF CAREER IIS-1252412 and AFRL FA87501720212.}

\bibliography{sampleols}


\end{document}